\Crefname{fact}{Fact}{Facts}
\Crefname{definition}{Definition}{Definitions}
\Crefname{lemma}{Lemma}{Lemmas}
\Crefname{corollary}{Corollary}{Corollarys}
\Crefname{example}{Example}{Examples}
\Crefname{remark}{Remark}{Remarks}
\Crefname{proposition}{Proposition}{Propositions}
\newtheorem{theorem}{Theorem}
\newtheorem{fact}{Fact}
\newtheorem{lemma}{Lemma}
\newtheorem{corollary}{Corollary}
\def\Ind{\mathds{1}}
\newcommand{\R}[0]{\mathbb{R}}
\newcommand{\N}[0]{\mathbb{N}}
\newcommand{\Prob}[0]{\mathbb{P}}
\newcommand{\E}[0]{\mathbb{E}}
\newcommand{\V}[0]{\mathbb{V}}
\newcommand{\eqdef}{\vcentcolon=}
\newcommand\mech[1]{\mathfrak{#1}}
\newcommand\set[1]{\mathcal{#1}}
\newcommand\vect[1]{\mathbf{#1}}
\newcommand\codom[1]{\operatorname{codom}\left({#1}\right)}
\newcommand\conv[1]{\overset{}{\underset{#1}{\longrightarrow}}}
\newcommand\tv[2]{\mathrm{TV}\left( {#1}, {#2} \right)}
\newcommand\kl[2]{\mathrm{KL}\left( \left. {#1}\right\Vert {#2} \right)}
\newcommand\renyi[3]{\text{D}_{#1}\left( \left. {#2}\right\Vert {#3} \right)}
\newcommand\ham[2]{d_\mathrm{ham}\left( {#1}, {#2} \right)}
\newcommand\intslice[2]{\left\{ {#1}, \dots,  {#2} \right\}}
\newcommand\p[1]{\left( {#1}\right)}
\newcommand\ceil[1]{\left\lceil {#1}\right\rceil}
\newcommand\B[1]{{#1}}
\title{About the Cost of \B{Central} Privacy in Density Estimation}
\author{{%
 \name Clément Lalanne \email clement.lalanne@ens-lyon.fr \\
 \addr Univ. Lyon, ENS Lyon, UCBL, CNRS, Inria,  LIP, F-69342, Lyon Cedex 07, France
 \AND
 \name Aurélien Garivier \email aurelien.garivier@ens-lyon.fr \\
 \addr Univ. Lyon, ENS Lyon, UMPA UMR 5669, 46 allée d'Italie, F-69364, Lyon cedex 07%
 \AND
 \name Rémi Gribonval \email remi.gribonval@inria.fr\\
 \addr Univ. Lyon, ENS Lyon, UCBL, CNRS, Inria,  LIP, F-69342, Lyon Cedex 07, France%
}}
\begin{document}

\maketitle

\begin{abstract}%
  We study non-parametric density estimation for densities in Lipschitz and Sobolev spaces, and under \emph{\B{central}} privacy. In particular, we investigate regimes where the privacy budget is \emph{not} supposed to be constant. We consider the classical definition of \B{central} differential privacy, but also the more recent notion of \B{central} concentrated differential privacy. We recover the result of \citet{barber2014privacy} stating that histogram estimators are optimal against Lipschitz distributions for the $L^2$ risk and, under regular differential privacy, we extend it to other norms and notions of privacy. Then, we investigate higher degrees of smoothness, drawing two conclusions: First, and contrary to what happens with \emph{constant} privacy budget \citep{wasserman2010statistical}, there \emph{are} regimes where imposing privacy degrades the regular minimax risk of estimation on Sobolev densities. Second, so-called projection estimators are near-optimal against the same classes of densities in this new setup with pure differential privacy, but contrary to the constant privacy budget case, it comes at the cost of relaxation. With zero concentrated differential privacy, there is no need for relaxation, and we prove that the estimation is optimal.
\end{abstract}


\textbf{ERRATUM : } Upon reading the proofs of our article, we noticed that we had made an error by inverting the order of a mixture and a product between distributions. We managed to fix this issue by making minor modifications to the proofs, and without any consequence on the results or on the conclusions of the article. This version is the updated one.

\section{Introduction}

The communication of information built on users' data leads to new challenges, and notably privacy concerns. It is now well documented that the release of various quantities can, without further caution, have disastrous repercussions \citep{narayanan2006break,backstrom2007wherefore,fredrikson2015model,dinur2003revealing,homer2008resolving,loukides2010disclosure,narayanan2008robust,sweeney2000simple,gonon2023sparsity,wagner2018technical,sweeney2002k}.
In order to address this issue, differential privacy (DP) \citep{dwork2006calibrating} has become the gold standard in privacy protection. The idea is to add a proper layer of randomness 
in order to hide each user's data. It is notably used by the US Census Bureau~\citep{abowd2018us}, Google~\citep{erlingsson2014rappor}, Apple~\citep{thakurta2017learning} and Microsoft~\citep{ding2017collecting}, among many others. 

As with other forms of communication or processing constraints \citep{BarnesHO19,BarnesHO20,AcharyaCST21a,AcharyaCMT21,AcharyaCST21b,AcharyaCFST21}, privacy recently gained a lot of attention from the statistical and theoretical machine learning communities. At this point, the list of interesting publications is far too vast to be exhaustive, but here is a sample : \citet{wasserman2010statistical} is the first article to consider problems analogous to the ones presented in this article. It notably studies the problem of nonparametric density estimation, to which we provide many complements. \citet{duchi2013localpreprint,duchi2013local,duchi2018minimax,barber2014privacy,acharya2021differentially,lalanne2022statistical} present general frameworks for deriving minimax lower-bounds under privacy constraints. Many parametric problems have already been studied, notably in \citet{acharya2018differentially,acharya2021differentially,karwa2017finite,kamath2019highdimensional,biswas2020coinpress,lalanne2022private,lalanne2023private,kamath2022improved,singhal2023polynomial}. Recently, some important contributions were made. For instance, \citet{asi2023robustness} sharply characterized the equivalence between private estimation and robust estimation with the inverse sensitivity mechanism \citep{asi2020near,asi2021nips}, and \citet{kamath2023biasvarianceprivacy} detailed the bias-variance-privacy trilemma, proving in particular the necessity of adding bias, even on distributions with bounded support, for many private estimation problems.

We address here the problem of privately estimating a probability density, which fits in this line of work. Given $\vect{X} \eqdef \p{X_1, \dots, X_n} \sim \Prob_\pi^{\otimes n}$, where $\Prob_\pi$ refers to a distribution of probability that 
has a density $\pi$ 
with respect to the Lebesgue measure on $[0, 1]$, how to estimate $\pi$ privately? Technically, what metrics or hypothesis should be set on $\pi$? What is the cost of privacy? Are the methods known so far optimal? Such are the questions that are investigated in the rest of this article.

\subsection{Related work}

Non-parametric density estimation has been an important topic of research in statistics for many decades now. Among the vast literature on the topic, let us just mention the important references \citet{gyorfi2002distribution,tsybakov2003introduction}. 

Recently, the interest for private statistics has shone a new light on this problem. Remarkable early contributions \citep{wasserman2010statistical,hall2013differential} adapted histogram estimators, so-called projection estimators and kernel estimators to satisfy the privacy constraint. They conclude that the minimax rate of convergence, $n^{- 2 \beta/(2 \beta + 1)}$, where $n$ is the sample size and $\beta$ is the (Sobolev) smoothness of the density, is not affected by \emph{\B{central}} privacy (also known as {\em global} privacy).
However, an important implicit hypothesis in this line of work is that $\epsilon$, the parameter that decides how private the estimation needs to be, is supposed not to depend on the sample size. This hypothesis may seem disputable, and more importantly, it fails to precisely characterize the tradeoff between utility and privacy. 

\B{Indeed, differential privacy gives guarantees on \emph{how hard} it is to tell if a specific user was part of the dataset. Despite the fact that one could hope to leverage the high number of users in a dataset in order to increase the privacy w.r.t. each user,
previous studies \citep{wasserman2010statistical,hall2013differential} cover an asymptotic scenario with respect to the number of samples $n$, for {\em fixed} $\epsilon$. 
In contrast, our study highlights new behaviors for this problem.
For each sample size, we emphasize the presence of two regimes: when the order of $\epsilon$ is larger than some threshold (dependent on $n$) that we provide, privacy can be obtained with virtually no cost; when $\epsilon$ is smaller than this threshold, it is the limiting factor for the accuracy of estimation.}

To the best of our knowledge, the only piece of work that studies this problem under \emph{\B{central}} privacy when $\epsilon$ is not supposed constant is \citet{barber2014privacy}. They study histogram estimators on Lipschitz distributions for the integrated risk. They conclude that the minimax risk of estimation is $\max \p{n^{-2/3} + (n\epsilon)^{-1}}$, showing how small $\epsilon$ can be before the minimax risk of estimation is degraded. Our article extends such results to high degrees of smoothness, to other definitions of \emph{\B{central}} differential privacy, and to other risks.

In the literature, \B{there exist other notions of privacy, such as the much stricter notion of \emph{local} differential privacy. Under this different notion of privacy, the problem of non-parametric density estimation has already been extensively studied. We here give a few useful bibliographic pointers.} A remarkable early piece of work \cite{duchi2018minimax} has brought a nice toolbox for deriving minimax lower bounds under local privacy that has proven to give sharp results for many problems. As a result, the problem of non-parametric density estimation (or its analogous problem of non-parametric regression) has been extensively studied under local privacy. For instance, \citet{butucea2020local} investigates the elbow effect and questions of adaptivity over Besov ellipsoids. \citet{kroll2021density} and \citet{schluttenhofer2022adaptive} study the density estimation problem at a given point with an emphasis on adaptivity. Universal consistency properties have recently been derived in \citet{gyorfi2023multivariate}. Analogous regression problems have been studied in \citet{berrett2021strongly} and in \citet{gyorfi2022rate}. Finally, the problem of optimal non-parametric testing has been studied in \citet{lam2022minimax}.

\subsection{Contributions}

In this article, we investigate the impact of \emph{\B{central}} privacy when the privacy budget is not constant. We treat multiple definitions of \emph{\B{central}} privacy and different levels of smoothness for the densities of interest. 

In terms of upper-bounds, we analyze histogram and projection estimators at a resolution that captures the impact of the privacy and smoothness parameters. We also prove new lower bounds using the classical packing method combined with new tools that characterize the testing difficulty under \B{central} privacy from \citet{acharya2021differentially,kamath2022improved,lalanne2022statistical}.

In particular, for Lipschitz densities and under pure differential privacy, we recover the results of \citet{barber2014privacy} with a few complements. 
We then extend the estimation on this class of distributions to the context of concentrated differential privacy \citep{bun2016concentrated}, a more modern definition of privacy that is compatible with stochastic processes relying on Gaussian noise. We finally investigate higher degrees of smoothness by looking at periodic Sobolev distributions.
The main results are summarized in \Cref{table:mainresults}.

\begin{table}[ht]
\centering
\label[table]{table:mainresults}
\makebox[\textwidth][c]{
\begin{tabular}{ |c||c|c| } 
 \hline
        & \textbf{$\epsilon$-DP} \Cref{eq:defdp} & \textbf{$\rho$-zCDP} \Cref{eq:defcdp} \\ 
 \hline\hline
 \makecell{\textbf{Lipschitz} \\ \Cref{eq:deflip}} & 
 \makecell{
 \underline{Upper-bound: } \\
 $O\p{\max \left\{ n^{-2/3}, (n\epsilon)^{-1} \right\}}$\\
 \citep{barber2014privacy} \& \Cref{proposition:histdp} \vspace{0.2cm}\\
 \hline
 \underline{Lower-bounds: } \\
      -Pointwise: $\Omega\p{\max \left\{ n^{-2/3}, (n\epsilon)^{-1} \right\}}$ \\
     \Cref{th:lpdp} \& \Cref{cor:linfdp}\\
      -Integrated: $\Omega\p{\max \left\{ n^{-2/3}, (n\epsilon)^{-1} \right\}}$ \\
     \citep{barber2014privacy} \& \Cref{th:lidp}
 } & 
 \makecell{
 \underline{Upper-bound: } \\
 $O\p{\max \left\{ n^{-2/3}, (n\sqrt{\rho})^{-1} \right\}}$\\
 \Cref{proposition:histdp} \vspace{0.2cm}\\
 \hline
 \underline{Lower-bounds: } \\
      -Pointwise: $\Omega\p{\max \left\{ n^{-2/3}, (n\sqrt{\rho})^{-1} \right\}}$ \\
     \Cref{th:lpdp} \& \Cref{cor:linfdp}\\
      -Integrated: $\Omega\p{\max \left\{ n^{-2/3}, (n\sqrt{\rho})^{-1} \right\}}$ \\
     \Cref{th:lidp}
 } \\ 
 \hline
 \makecell{\textbf{Periodic Sobolev} \\ Smoothness $\beta$ \\ \Cref{eq:defpsob}} & 
 \makecell{
 \underline{Upper-bounds: } \\
-Pure DP: $O\p{\max \left\{ n^{-\frac{2 \beta}{2 \beta + 1}}, (n \epsilon)^{-\frac{2 \beta}{ \beta + 3/2}} \right\}}$ \\
     \Cref{proposition:projdp}\\
-Relaxed: \B{$\max \left\{ n^{-\frac{2 \beta}{2 \beta + 1}}, \p{\frac{n \epsilon}{\sqrt{\ln \p{1.25/ \delta}}}}^{-\frac{2 \beta}{ \beta + 1}} \right\} $} \\
     \Cref{sec:relaxation} \vspace{0.2cm}\\
     \hline
 \underline{Lower-bound: } \\
      $\Omega\p{\max \left\{ n^{-\frac{2 \beta}{2 \beta + 1}}, (n \epsilon)^{-\frac{2 \beta}{ \beta + 1}} \right\}}$ \\
     \Cref{th:sidp}
 } & 
 \makecell{
 \underline{Upper-bound: } \\
$O\p{\max \left\{ n^{-\frac{2 \beta}{2 \beta + 1}}, (n \sqrt{\rho})^{-\frac{2 \beta}{ \beta + 1}} \right\}}$ \\
     \Cref{proposition:projdp}\vspace{0.2cm}\\
     \hline
 \underline{Lower-bound: } \\
      $\Omega\p{\max \left\{ n^{-\frac{2 \beta}{2 \beta + 1}}, (n \sqrt{\rho})^{-\frac{2 \beta}{ \beta + 1}} \right\}}$ \\
     \Cref{th:sidp}
 } \\ 
 \hline
\end{tabular}
}
\caption{Summary of the results}
\end{table}

The paper is organized as follows. 
The required notions regarding \emph{\B{central}} differential privacy are recalled in \Cref{sec:reminders}. Histogram estimators and projection estimators are respectively studied in \Cref{sec:histogram}, on Lipschitz densities, and in \Cref{sec:projection}, on periodic Sobolev densities. A short conclusion in provided in Section~\ref{sec:ccl}.

\section{\B{Central} Differential Privacy}
\label{sec:reminders}
We recall in this section some useful notions of \emph{\B{central}} privacy.
Here, $\set{X}$ and $n$ refer respectively to the sample space and to the sample size.

\B{Given two datasets $\vect{X} = (X_1, \dots, X_n), \vect{Y}= (Y_1, \dots, Y_n) \in \set{X}^n$, the \emph{Hamming} distance between $\vect{X}$ and $\vect{Y}$ is defined as}
\begin{equation*}
    \B{\ham{\vect{X}}{\vect{Y}} = \sum_{i=1}^n \Ind_{X_i \neq Y_i}} \;.
\end{equation*}

Given $\epsilon > 0$ and $\delta \in [0, 1)$, a randomized mechanism $\mech{M} : \set{X}^n \rightarrow \codom{\mech{M}}$ (for \emph{codomain} or image of $\mech{M}$) is $(\epsilon, \delta)$-differentially private (or $(\epsilon, \delta)$-DP) \citep{dwork2006calibrating,dwork2006our} if for all $\vect{X}, \vect{Y} \in \set{X}^n$ and all measurable $S \subseteq \codom{\mech{M}}$:
\begin{equation}
\label{eq:defdp}
    \ham{\vect{X}}{\vect{Y}} \leq 1 \implies \Prob_{\mech{M}}\p{\mech{M}(\vect{X}) \in S} \leq e^\epsilon \Prob_{\mech{M}}\p{\mech{M}(\vect{Y}) \in S} + \delta \;,
\end{equation}
where $\ham{\cdot}{\cdot}$ denotes the Hamming distance on $\set{X}^n$. 

In order to sharply count the privacy of a composition of many Gaussian mechanisms (see~\citet{abadi2016deep}), privacy is also often characterized in terms of Renyi divergence \citep{mironov2017renyi}. Nowadays, it seems that all these notions tend to converge towards the definition of \emph{zero concentrated differential privacy} \citep{dwork2016concentrated,bun2016concentrated}. Given $\rho \in (0, +\infty)$, a randomized mechanism $\mech{M} : \set{X}^n \rightarrow \codom{\mech{M}}$ is $\rho$-zero concentrated differentially private ($\rho$-zCDP) if for all $\vect{X}, \vect{Y} \in \set{X}^n$, 
\begin{equation}
\label{eq:defcdp}
    \ham{\vect{X}}{\vect{Y}} \leq 1 \implies   \forall 1 < \alpha < +\infty, \renyi{\alpha}{\mech{M}(\vect{X})}{\mech{M}(\vect{Y})} \leq \rho \alpha 
\end{equation}
where $\renyi{\alpha}{\cdot}{\cdot}$ denotes the Renyi divergence of level $\alpha$, defined when $\alpha>1$ as 
\begin{equation*}
    \renyi{\alpha}{\mathbb{P}}{\mathbb{Q}} \eqdef \frac{1}{\alpha - 1} \ln \int  \p{\frac{d \mathbb{P}}{ d \mathbb{Q}}}^{\alpha - 1} d \mathbb{Q} \;.
\end{equation*}
For more details, we recommend referring to the excellent article \citet{van2014renyi}.

There are links between $(\epsilon, \delta)$-DP and $\rho$-zCDP. For instance, if a mechanism is $\rho$-zCDP, then  \citep[Proposition 3]{bun2016concentrated} it is $(\epsilon, \delta)$-DP for a collection of $(\epsilon, \delta)$'s that depends on $\rho$. Conversely,  if a mechanism is $(\epsilon, 0)$-DP, then \citep[Proposition 4]{bun2016concentrated} it is also $\epsilon^2/2$-zCDP.

Given a deterministic function $f$ mapping a dataset to a quantity in $\R^d$, the Laplace mechanism \citep{dwork2006calibrating} and Gaussian mechanism \citep{bun2016concentrated} are two famous ways to turn $f$ into a private mechanism.
Defining the $l_1$ sensitivity of $f$ as
\begin{equation*}
\begin{aligned}
    \Delta_1 f \eqdef \sup_{\vect{X}, \vect{Y} \in \set{X}^n : \ham{\vect{X}}{\vect{Y}} \leq 1} \big\| f (\vect{X})  - f (\vect{Y})\big\|_1 \;,
\end{aligned}
\end{equation*}
the Laplace mechanism instantiated with $f$ and $\epsilon > 0$ is defined as
\begin{equation*}
    \begin{aligned}
        \vect{X} \mapsto f(\vect{X}) + \frac{\Delta_1 f}{\epsilon} \mathcal{L}(I_d) \;,
    \end{aligned}
\end{equation*}
where $\mathcal{L}(I_d)$ refers to a random vector of dimension $d$ with independent components that follow a centered Laplace distribution of parameter $1$. 
Notice that we took the liberty to use the same notation for the random variable and for its distribution. We made this choice for brevity, and because it does not really create any ambiguity.
It is $(\epsilon, 0)$-DP (simply noted $\epsilon$-DP) \citep{dwork2006our,dwork2006calibrating}.
Likewise, defining the $L^2$ sensitivity of $f$ as
\begin{equation*}
\begin{aligned}
    \Delta_2 f \eqdef \sup_{\vect{X}, \vect{Y} \in \set{X}^n : \ham{\vect{X}}{\vect{Y}} \leq 1} \big\| f (\vect{X})  - f (\vect{Y})\big\|_2 \;,
\end{aligned}
\end{equation*}
the Gaussian mechanism instantiated with $f$ and $\rho > 0$ is defined as
\begin{equation*}
    \begin{aligned}
        \vect{X} \mapsto f(\vect{X}) + \frac{\Delta_2 f}{\sqrt{2 \rho}} \mathcal{N}(0, I_d) \;,
    \end{aligned}
\end{equation*}
where $\mathcal{N}(0, I_d)$ refers to a random vector of dimension $d$ with independent components that follow a centered Normal distribution of variance $1$. It is $\rho$-zCDP \citep{bun2016concentrated}.

\paragraph{A quick word on local privacy.}

\B{Central} privacy comes with the hypothesis of a trusted aggregator \B{(also known as a curator, hence the alternative name a "trusted curator model" for central privacy, which is also known under the name {\em global privacy})} that sees the entire dataset, and builds an estimator with it. Only the produced estimator is private. In order to give an example, this is like having a datacenter that stores all the information about the users of a service, but only outputs privatized statistics.

Local privacy on the other hand does not make that hypothesis. Each piece of data is anonymized locally (on the user's device) and then it is communicated to an aggregator. Any locally private mechanism is also \B{centrally} private, but the converse is not true.

At first, local privacy can seem more appealing : it is indeed a stronger notion of privacy. However, it degrades the utility \emph{much more} than \B{central} privacy. As a result, both notions are interesting, and the use of one or the other must be weighted for a given problem. This work focuses on the \emph{\B{central}} variant.

\section{Histogram Estimators and Lipschitz Densities}
\label{sec:histogram}

Histogram estimators approximate densities with a piecewise continuous function by counting the number of points that fall into each bin of a partition of the support. Since those numbers follow binomial distributions, the study of histogram estimators is rather simple. Besides, they are particularly interesting when privacy is required, since the sensitivity 
of a histogram query is bounded independently of the number of bins. They were first studied in this setup in \citet{wasserman2010statistical}, while \citet{barber2014privacy} provided new lower-bounds that did not require a constant privacy budget.

As a warm-up, this section proposes a new derivation of known results in more modern lower-bounding frameworks \citep{acharya2021differentially,kamath2022improved,lalanne2022statistical}, and then extends these upper-bounds and lower-bounds to the case of zCDP. Furthermore, it also covers the pointwise risk as well as the infinite-norm risk.

Let $h>0$ be a given bandwidth or binsize. In order to simplify the notation, we suppose  without loss of generality that $1/h \in \N \setminus \{0 \}$ (if the converse is true, simply take $h' = 1/\ceil{1/h}$ where $\ceil{x}$ refers to the smallest integer bigger than $x$). 
$[0, 1]$ is partitioned in $\frac{1}{h}$ sub-intervals of length $h$, which are called the bins of the histogram.
Let $Z_1, \dots, Z_{1/h}$ be independent and identically distributed random variables with the same distribution as a random variable $Z$ that is supposed to be centered and to have a finite variance. Given a dataset $\vect{X} = (X_1, \dots, X_n)$, the (randomized) histogram estimator is defined for $x\in[0,1]$ as 
\begin{equation}
\label{eq:hist}
    \hat{\pi}^{\mathrm{hist}} (\vect{X}) (x) \eqdef \sum_{b \in \mathrm{bins}} \Ind_b(x) \frac{1}{nh} \p{\sum_{i=1}^n \Ind_b(X_i) +Z_b}\;.
\end{equation}
We indexed the $Z$'s by a bin instead of an integer without ambiguity.
Note that by taking $Z$ almost-surely constant to $0$, one recovers the usual (non-private) histogram estimator of a density. 

\subsection{General utility of histogram estimators}

Characterizing the utility of \eqref{eq:hist} typically requires assumptions on the distribution $\pi$ to estimate. The class of $L$-Lipschitz densities is defined as
\begin{equation}
\label{eq:deflip}
\Theta^{\mathrm{Lip}}_L \eqdef \left\{ \pi \in \mathcal{C}^0([0, 1], \R_+) \left| 
\begin{cases}
  \forall x, y \in [0, 1], |\pi(y) - \pi(x)| \leq L |y-x|\;, \\
  \int_{[0, 1]} \pi = 1 \;.  
\end{cases}  
 \right.\right\} \;.
\end{equation}
The following general-purpose lemma gives an upper-bound on the error that the histogram estimator makes on Lipschitz distributions:
\begin{lemma}[General utility of \eqref{eq:hist}]
\label[lemma]{lemma:histogramutility}
There exists $C_L > 0$, a positive constant that only depends on $L$, such that
\begin{equation*}
    \sup_{x_0 \in [0, 1]} \sup_{\pi \in \Theta^{\mathrm{Lip}}_L}
    \E_{\vect{X} \sim \Prob_{\pi}^{\otimes n}, \hat{\pi}^{\mathrm{hist}}}\p{ \p{\hat{\pi}^{\mathrm{hist}}(\vect{X})(x_0) - \pi(x_0)}^2}
    \leq C_L \p{h^2 + \frac{1}{nh} + \frac{\V(Z)}{n^2 h^2}} \;.
\end{equation*}
\end{lemma}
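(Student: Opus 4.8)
The plan is to decompose the mean squared error at a fixed point $x_0$ into a bias term, a sampling-variance term, and a noise-variance term, via the standard bias–variance split. Fix $x_0 \in [0,1]$, let $b_0$ be the (unique) bin containing $x_0$, and write $p_{b_0} \eqdef \Prob_\pi(X_1 \in b_0) = \int_{b_0}\pi$. By definition of $\hat\pi^{\mathrm{hist}}$ in \eqref{eq:hist}, on the bin $b_0$ we have $\hat\pi^{\mathrm{hist}}(\vect X)(x_0) = \frac{1}{nh}\big(\sum_{i=1}^n \Ind_{b_0}(X_i) + Z_{b_0}\big)$. Taking expectation over both the sample and the noise (which is centered and independent of $\vect X$), the estimator has mean $\frac{p_{b_0}}{h}$, which is exactly the average value of $\pi$ over $b_0$. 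Adding and subtracting this mean, and using that $Z_{b_0}$ is centered and independent of $\vect X$ so the cross terms vanish, I get
\begin{equation*}
\E\Big[\big(\hat\pi^{\mathrm{hist}}(\vect X)(x_0) - \pi(x_0)\big)^2\Big]
= \underbrace{\Big(\tfrac{p_{b_0}}{h} - \pi(x_0)\Big)^2}_{\text{bias}^2}
+ \underbrace{\frac{p_{b_0}(1-p_{b_0})}{n h^2}}_{\text{sampling var.}}
+ \underbrace{\frac{\V(Z)}{n^2 h^2}}_{\text{noise var.}} \; .
\end{equation*}

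The noise term is already in the desired form. For the sampling-variance term, I bound $p_{b_0}(1-p_{b_0}) \le p_{b_0} \le h \cdot \sup_{[0,1]}\pi$, and note that $\sup_{[0,1]}\pi$ is bounded by a constant depending only on $L$: since $\int_0^1 \pi = 1$ there is a point where $\pi \le 1$, and $L$-Lipschitzness on $[0,1]$ then gives $\pi \le 1 + L$ everywhere. Hence $\frac{p_{b_0}(1-p_{b_0})}{nh^2} \le \frac{1+L}{nh}$. For the bias term, write $\frac{p_{b_0}}{h} = \frac{1}{h}\int_{b_0}\pi(y)\,dy$ as an average of $\pi$ over $b_0$; since every $y \in b_0$ satisfies $|y - x_0| \le h$, Lipschitzness gives $|\pi(y) - \pi(x_0)| \le L h$, so averaging yields $\big|\frac{p_{b_0}}{h} - \pi(x_0)\big| \le L h$, i.e. the squared bias is at most $L^2 h^2$. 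Collecting the three bounds with $C_L \eqdef \max\{L^2, 1+L, 1\}$ (or simply their sum) finishes the proof, and the bound is uniform in $x_0$ and in $\pi \in \Theta^{\mathrm{Lip}}_L$ since all constants depend only on $L$.

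There is no real obstacle here; the only points requiring a little care are (i) justifying that the cross terms in the expansion vanish — which uses both that $Z$ is centered and that it is independent of $\vect X$, and that conditionally on nothing the binomial count $\sum_i \Ind_{b_0}(X_i)$ has the stated mean and variance — and (ii) the uniform sup-norm bound $\pi \le 1 + L$, which is where the two defining constraints of $\Theta^{\mathrm{Lip}}_L$ (Lipschitz and integrates to one) are both used. Everything else is a routine $(a+b+c)$-type bookkeeping of nonnegative terms.
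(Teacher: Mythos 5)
Your proof is correct and follows essentially the same route as the paper: a bias--variance decomposition at $x_0$, a Lipschitz bound on the bias (the paper gets $Lh/2$ by integrating $|x-x_0|$ over the bin rather than the slightly looser $Lh$, but this is immaterial), the bound $\pi \le 1+L$ from Lipschitzness plus $\int \pi =1$ to control the sampling variance, and the privacy-noise variance term appearing directly by independence. No issues.
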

The proof is given in~\Cref{proofoflemma:histogramutility}.
The term $h^2$ corresponds to the bias of the estimator. The variance term $\frac{1}{nh} + \frac{\V(Z)}{n^2 h^2}$ exhibits two distinct contributions : the sampling noise $\frac{1}{nh}$ and the privacy noise $\frac{\V(Z)}{n^2 h^2}$. In particular, the utility of $\hat{\pi}^{\mathrm{hist}}$ changes depending whether the variance is dominated by the sampling noise or by the privacy noise.

\subsection{Privacy and bin size tuning}

$\hat{\pi}^{\mathrm{hist}} (\vect{X})$ is a simple function of the bin count vector
$f(\vect{X}) \eqdef \p{\sum_{i=1}^n \Ind_{b_1}(X_i), \dots,  \sum_{i=1}^n \Ind_{b_{1/h}}(X_i)}$. In particular, since the bins form a partition of $[0, 1]$, changing the value of one of the $X$'s can change the values of at most two components of $f(\vect{X})$ by at most $1$. Hence, the $l_1$ and $l_2$ sensitivities of $f$ are respectively $2$ and $\sqrt{2}$. By a direct application of the Laplace or Gaussian mechanisms, and by choosing the binsize that minimizes the variance, we obtain the following privacy-utility result :
\begin{theorem}[Privacy and utility of \eqref{eq:hist} - DP case] 
\label[theorem]{proposition:histdp}
Given $\epsilon > 0$,
using $\hat{\pi}^{\mathrm{hist}}$ with $h =  \max( n^{-1/3},\\ (n\epsilon)^{-1/2} )$ and $Z = \frac{2}{\epsilon} \mathcal{L}(1)$, where $\mathcal{L}(1)$ refers to a random variable following a Laplace distribution of parameter $1$, leads to an $\epsilon$-DP procedure. Furthermore, in this case, there exists $C_L > 0$, a positive constant that only depends on $L$, such that
\begin{equation*}
    \sup_{x_0 \in [0, 1]} \sup_{\pi \in \Theta^{\mathrm{Lip}}_L}
    \E_{\vect{X} \sim \Prob_{\pi}^{\otimes n}, \hat{\pi}^{\mathrm{hist}}}\p{ \p{\hat{\pi}^{\mathrm{hist}}(\vect{X})(x_0) - \pi(x_0)}^2}
    \leq C_L  \max \left\{ n^{-2/3}, (n\epsilon)^{-1} \right\} \;.
\end{equation*}
Furthermore, given $\rho > 0$,
using $\hat{\pi}^{\text{hist}}$ with $h =  \max( n^{-1/3},(n\sqrt{\rho})^{-1/2} )$ and $Z = \sqrt{\frac{1}{\rho}} \mathcal{N}(0, 1)$, where $\mathcal{N}(0, 1)$ refers to a random variable following a centered Gaussian distribution of variance $1$, leads to a $\rho$-zCDP procedure. Furthermore, in this case, there exists $C_L > 0$, a positive constant that only depends on $L$, such that
\begin{equation*}
    \sup_{x_0 \in [0, 1]} \sup_{\pi \in \Theta^{\text{Lip}}_L}
    \E_{\vect{X} \sim \Prob_{\pi}^{\otimes n}, \hat{\pi}^{\text{hist}}}\p{ \p{\hat{\pi}^{\text{hist}}(\vect{X})(x_0) - \pi(x_0)}^2}
    \leq C_L  \max \left\{ n^{-2/3}, (n \sqrt{\rho})^{-1} \right\} \;.
\end{equation*}
\end{theorem}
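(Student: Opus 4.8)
The plan is to combine \Cref{lemma:histogramutility} with the standard privacy guarantees of the Laplace and Gaussian mechanisms, then optimize the bandwidth. First I would address privacy. The estimator $\hat{\pi}^{\mathrm{hist}}(\vect{X})$ is a post-processing of the noisy bin-count vector $f(\vect{X}) + Z$, where $f$ has $l_1$-sensitivity $2$ and $l_2$-sensitivity $\sqrt{2}$ (as argued just before the statement). For the DP case, taking $Z_b = \tfrac{2}{\epsilon}\mathcal{L}(1)$ realizes exactly the Laplace mechanism with scale $\Delta_1 f/\epsilon$, hence is $\epsilon$-DP; post-processing (which commutes with the deterministic map turning counts into the histogram function) preserves this. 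For the zCDP case, taking $Z_b = \sqrt{1/\rho}\,\mathcal{N}(0,1)$, i.e. noise standard deviation $\sqrt{1/\rho} = \Delta_2 f/\sqrt{2\rho}$ since $\Delta_2 f = \sqrt{2}$, realizes the Gaussian mechanism and is $\rho$-zCDP, again stable under post-processing.

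Next I would plug the variances into \Cref{lemma:histogramutility}. For the Laplace noise, $\V(Z) = 2\cdot(2/\epsilon)^2 = 8/\epsilon^2$, so the bound becomes $C_L\bigl(h^2 + \tfrac{1}{nh} + \tfrac{8}{n^2 h^2 \epsilon^2}\bigr)$; for the Gaussian noise, $\V(Z) = 1/\rho$, giving $C_L\bigl(h^2 + \tfrac{1}{nh} + \tfrac{1}{n^2 h^2 \rho}\bigr)$. Now I choose $h$ to balance the terms. With $h = \max\{n^{-1/3}, (n\epsilon)^{-1/2}\}$: when $n^{-1/3}$ dominates (the ``non-private'' regime, $\epsilon \gtrsim n^{-1/3}$), we get $h^2 = n^{-2/3}$, $\tfrac{1}{nh} = n^{-2/3}$, and $\tfrac{1}{n^2h^2\epsilon^2} = \tfrac{n^{-4/3}}{\epsilon^2} \le n^{-4/3}\cdot n^{2/3} = n^{-2/3}$, so every term is $O(n^{-2/3})$. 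When $(n\epsilon)^{-1/2}$ dominates, $h^2 = (n\epsilon)^{-1}$, $\tfrac{1}{n^2h^2\epsilon^2} = \tfrac{1}{n^2 \epsilon^2}\cdot n\epsilon = (n\epsilon)^{-1}$, and $\tfrac{1}{nh} = \tfrac{1}{n}(n\epsilon)^{1/2} = (n\epsilon)^{-1}\cdot(n\epsilon)^{1/2}\cdot\epsilon^{?}$ — more carefully, $\tfrac{1}{nh}=n^{-1}(n\epsilon)^{1/2}=n^{-1/2}\epsilon^{1/2}$, which in this regime ($\epsilon \lesssim n^{-1/3}$) is $\le n^{-1/2}\cdot n^{-1/6} = n^{-2/3} \le (n\epsilon)^{-1}$; so every term is $O((n\epsilon)^{-1})$. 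Since $n^{-2/3} \le \max\{n^{-2/3},(n\epsilon)^{-1}\}$ always, all cases yield the claimed bound (after adjusting $C_L$ by an absolute constant). The zCDP computation is identical with $\epsilon$ replaced by $\sqrt{\rho}$ and $h = \max\{n^{-1/3},(n\sqrt{\rho})^{-1/2}\}$.

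There is essentially no hard step here: this is a warm-up combining an off-the-shelf privacy mechanism with \Cref{lemma:histogramutility} and a two-case bandwidth optimization. The only mild subtlety worth spelling out is that the privacy claim requires noting that the histogram map factors through the bin-count query $f$ plus independent noise, so that the Laplace/Gaussian mechanism guarantees apply and are preserved by the (deterministic) post-processing $\vect{c} \mapsto \sum_{b}\Ind_b(\cdot)\tfrac{c_b}{nh}$; and that the per-coordinate noise scales quoted in the statement indeed match $\Delta_1 f/\epsilon$ and $\Delta_2 f/\sqrt{2\rho}$ given $\Delta_1 f = 2$, $\Delta_2 f = \sqrt 2$. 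I would close by remarking that the constant $C_L$ absorbs the numerical factors ($8$ from the Laplace variance, the balancing constants), and that the supremum over $x_0$ and $\pi$ passes through unchanged because \Cref{lemma:histogramutility} is already uniform in both.
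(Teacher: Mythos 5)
Your proof is correct and follows exactly the route the paper intends (the paper itself offers no separate proof section, only the one-line observation that the result follows by applying the Laplace/Gaussian mechanism to the bin-count query and plugging the resulting noise variance into \Cref{lemma:histogramutility} with the stated bandwidth). Your sensitivity computations ($\Delta_1 f = 2$, $\Delta_2 f = \sqrt{2}$), the identification of the noise scales with the mechanisms' prescriptions, the post-processing remark, and the two-regime balancing of $h^2$, $1/(nh)$, and $\V(Z)/(n^2h^2)$ all match what the paper has in mind.
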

Note that this bound is uniform in $x_0$, 
\B{which is more general than the integrated upper-bounds presented in \cite{barber2014privacy}}. In particular, by integration on $[0, 1]$, the same bound also holds for the integrated risk (in $L^2$ norm), \B{which recovers the version of \cite{barber2014privacy}}. As expected, the optimal bin size $h$ depends on the sample size $n$ and on the parameter ($\epsilon$ or $\rho$) tuning the privacy. \B{Also note that $\rho$-zCDP version may also be obtained by the relations between $\epsilon$-DP and $\rho$-zCDP (see \cite{bun2016concentrated}).}

\subsection{Lower-bounds and minimax optimality}

All lower-bounds will be investigated in a minimax sense. Given a class $\Pi$ of admissible densities, a semi-norm $\| \cdot \|$ on a space containing the class $\Pi$, and a non-decreasing positive function $\Phi$ such that $\Phi(0) = 0$, the minimax risk is defined as
\begin{equation*}
    \inf_{\hat{\pi} \text{ s.t. } \mathcal{C}} \sup_{\pi \in \Pi}  \E_{\vect{X} \sim \Prob_{\pi}^{\otimes n}, \hat{\pi}} \Phi(\|\hat{\pi}(\vect{X}) - \pi \|) \;,
\end{equation*}
where $\mathcal{C}$ is a condition that must satisfy the estimator (privacy in our case).

\paragraph{General framework.}
A usual technique for the derivation of minimax lower bounds on the risk uses a reduction to a testing problem (see \citet{tsybakov2003introduction}). Indeed, if a family $\Pi' \eqdef \{ \pi_1, \dots, \pi_m\} \subset \Pi$ of cardinal $m$ is an $\Omega$-packing of 
$\Pi$ (that is
if $i \neq j \implies \|\pi_i - \pi_j \| \geq 2 \Omega$), then
a lower bound is given by 
\begin{equation}
\label{jkhgqskdjhfbgkjqhwgsdkfjhgkqwsjhdgfjkhwsgdb}
\begin{aligned}
    \inf_{\hat{\pi} \text{ s.t. } \mathcal{C}} \sup_{\pi \in \Pi}  \E_{\vect{X} \sim \Prob_{\pi}^{\otimes n}, \hat{\pi}} &\Phi(\|\hat{\pi}(\vect{X}) - \pi \|) \\
    &\geq 
    \Phi(\Omega) \underset{\Psi : \codom{\hat{\pi}} \rightarrow\intslice{1}{m}}{\inf_{\hat{\pi} \text{ s.t. } \mathcal{C}}}  \max_{i \in \intslice{1}{m}} \Prob_{\mathbf{X} \sim \Prob_{\pi_i}^{\otimes n},\hat{\pi}} \p{ \Psi \p{\hat{\pi}(\vect{X})} \neq i}\;.
\end{aligned}
\end{equation}
For more details, see \citet{duchi2018minimax,acharya2021differentially,lalanne2022statistical}. The right-hand side characterizes the difficulty of discriminating the distributions of the packing by a statistical test. Independently on the condition $\mathcal{C}$, it can be lower-bounded using information-theoretic results such a Le Cam's lemma \citep[Lemma 5.3]{rigollet2015high} or Fano's lemma \citep[Theorem 3.1]{giraud2021introduction}. When $\mathcal{C}$ is a local privacy condition, \citet{duchi2018minimax} provides analogous results that take privacy into account. Recent work \citep{acharya2021differentially,kamath2022improved,lalanne2022statistical} provides analogous forms for multiple notions of \emph{\B{central}} privacy. When using this technique, finding good lower-bounds on the minimax risk boils down to finding a packing of densities that are far enough from one another without being too easy to discriminate with a statistical test.

It is interesting to note that for the considered problem, this technique does not yield satisfying lower-bounds with $\rho$-zCDP every time Fano's lemma is involved. Systematically, a small order is lost. To circumvent that difficulty, we had to adapt Assouad's technique to the context of $\rho$-zCDP. Similar ideas have been used in \citet{duchi2018minimax} for lower-bounds under \emph{local} differential privacy and in \citet{acharya2021differentially} for regular \emph{\B{central}} differential privacy. To the best of our knowledge, such a technique has never been used in the context of \B{central} \emph{concentrated} differential privacy, and is presented in \Cref{assouadcdp}. In all the proofs of the lower-bounds, we systematically presented both approaches whenever there is a quantitative difference. This difference could be due to small suboptimalities in Fano's lemma for concentrated differential privacy, or simply to the use of a suboptimal packing.

\subsubsection{Pointwise lower-bound}

The first lower-bound that will be investigated is with respect to the pointwise risk. Pointwise, that is to say given
$x_0 \in [0, 1]$, the performance of the estimator $\hat{\pi}$ is measured by how well it approximates $\pi$ at $x_{0}$ with the quadratic risk $\E_{\vect{X} \sim \Prob_{\pi}^{\otimes n}, \hat{\pi}}\p{\p{\hat{\pi}(\vect{X})(x_0) - \pi(x_0)}^2}$.
Technically, it is the easiest since it requires a "packing" of only two elements, which gives the following lower-bound:

\begin{theorem}[Pointwise lower-bound]
\label{th:lpdp}
There exists $C_L>0$, 
a positive constant depending only on $L$ such that, for any $x_0 \in [0, 1]$, there exist $n_0(x_0, L) \in \N$ and $c_0(x_0, L) > 0$ such that for any $n \geq n_0$, and any $\alpha \geq c_0/n$
\begin{equation}
\label{eq:deftriangle}
    \inf_{\hat{\pi} \mathrm{ s.t. } \mathcal{C}} \sup_{\pi \in \Theta^{\mathrm{Lip}}_L}
    \E_{\vect{X} \sim \Prob_{\pi}^{\otimes n}, \hat{\pi}}\p{\p{\hat{\pi}(\vect{X})(x_0) - \pi(x_0)}^2} \geq C_L^{-1} \max \left\{ n^{-2/3}, (n\alpha)^{-1} \right\}
    \;,
\end{equation}
where $\alpha = \epsilon$ when the condition $\mathcal{C}$ is the $\epsilon$-DP condition and $\alpha = \sqrt{\rho}$ when 
$\mathcal{C}$ is $\rho$-zCDP.
\end{theorem}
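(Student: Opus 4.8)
The plan is to use the two–point (Le~Cam) instance of the reduction \eqref{jkhgqskdjhfbgkjqhwgsdkfjhgkqwsjhdgfjkhwsgdb} with $m=2$, proving \emph{separately} that the minimax risk is $\gtrsim n^{-2/3}$ (sampling–dominated regime) and $\gtrsim (n\alpha)^{-1}$ (privacy–dominated regime), and then combining the two via $\max\{a,b\}\ge\tfrac12(a+b)$. For a bandwidth $h\in(0,h_{\max}]$ to be chosen, take $\pi_0\equiv 1$ on $[0,1]$, which lies in $\Theta^{\mathrm{Lip}}_L$ for every $L$, and $\pi_1=\pi_0+g$, where $g$ is a continuous, zero–mean, $L$-Lipschitz ``bump'' supported in a sub‑interval of $[0,1]$ of length $\asymp h$ that contains $x_0$ (placed on the interior side when $x_0$ is close to $\{0,1\}$, which is where the dependence of $n_0,c_0$ on $x_0$ enters), normalized so that $\|g\|_\infty\le\tfrac12 Lh$ and $|g(x_0)|\asymp Lh$. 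For $h$ small enough (namely $Lh\le 1$) one has $\pi_1\ge 0$, $\int\pi_1=1$, and $\pi_1\in\Theta^{\mathrm{Lip}}_L$, so that $\Pi'=\{\pi_0,\pi_1\}$ is an $\Omega$-packing of $\Theta^{\mathrm{Lip}}_L$ for the functional $\pi\mapsto\pi(x_0)$ with
\[
  \Omega\asymp L h,\qquad
  \kl{\Prob_{\pi_0}}{\Prob_{\pi_1}}\le\int\frac{g^{2}}{\pi_1}\asymp L^{2}h^{3},\qquad
  \tv{\Prob_{\pi_0}}{\Prob_{\pi_1}}=\tfrac12\int|g|\asymp L h^{2}.
\]
Since here $\Phi(t)=t^{2}$, this gives $\Phi(\Omega)\asymp L^{2}h^{2}$.

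For the sampling term I would drop the privacy constraint and use classical Le~Cam: the testing error in \eqref{jkhgqskdjhfbgkjqhwgsdkfjhgkqwsjhdgfjkhwsgdb} is at least $\tfrac12\bigl(1-\tv{\Prob_{\pi_0}^{\otimes n}}{\Prob_{\pi_1}^{\otimes n}}\bigr)$ and $\tv{\Prob_{\pi_0}^{\otimes n}}{\Prob_{\pi_1}^{\otimes n}}^{2}\le\tfrac n2\kl{\Prob_{\pi_0}}{\Prob_{\pi_1}}\asymp nL^{2}h^{3}$. Choosing $h\asymp c\,n^{-1/3}$ with $c=c(L)$ small makes this $\le 1/4$, whence the minimax risk is $\gtrsim\Phi(\Omega)\asymp L^{2}h^{2}\asymp C_L^{-1}n^{-2/3}$, valid once $n\ge n_0(x_0,L)$ so that $h\le h_{\max}$.

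For the privacy term I would invoke the privacy–aware Le~Cam bounds of \citet{acharya2021differentially,kamath2022improved,lalanne2022statistical}: coupling $\vect X\sim\Prob_{\pi_0}^{\otimes n}$ and $\vect Y\sim\Prob_{\pi_1}^{\otimes n}$ coordinatewise by the maximal coupling yields $\E\!\left[\ham{\vect X}{\vect Y}\right]=n\,\tv{\Prob_{\pi_0}}{\Prob_{\pi_1}}\asymp nLh^{2}$, and under the constraint $\mathcal C$ the testing error stays bounded below by an absolute constant as long as $\alpha\,\E\!\left[\ham{\vect X}{\vect Y}\right]\lesssim 1$. Here $\alpha=\epsilon$ for $\epsilon$-DP (group privacy with multiplicative factor $e^{k\epsilon}$ over $k$ changes) and $\alpha=\sqrt\rho$ for $\rho$-zCDP (group privacy over $k$ changes gives $k^{2}\rho$-zCDP, hence total variation $\lesssim k\sqrt\rho$ by Pinsker). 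Choosing $h\asymp c\,(n\alpha)^{-1/2}$ with $c=c(L)$ then makes the test hard, so the minimax risk is $\gtrsim\Phi(\Omega)\asymp L^{2}h^{2}\asymp C_L^{-1}(n\alpha)^{-1}$; this needs $h\le h_{\max}$, i.e. $\alpha\ge c_0(x_0,L)/n$. Combining this with the sampling bound and absorbing all $L$-dependent constants into a single $C_L$ gives the claimed $C_L^{-1}\max\{n^{-2/3},(n\alpha)^{-1}\}$, recovering the upper bound of \Cref{proposition:histdp} up to constants.

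I expect the only genuinely delicate points to be (i) fixing the constant in the Lipschitz normalization of $g$ while keeping $\pi_1$ a bona fide density and controlling $\kl{\Prob_{\pi_0}}{\Prob_{\pi_1}}$ and $\tv{\Prob_{\pi_0}}{\Prob_{\pi_1}}$ simultaneously from the \emph{same} construction (so that both regimes are covered), and (ii) quoting the private Le~Cam lemma in the form that produces the $\alpha=\sqrt\rho$ scaling for zCDP — the quadratic growth of zCDP under group privacy is exactly what turns $\rho$ into $\sqrt\rho$. The boundary case $x_0\in\{0,1\}$ is handled by a one–sided bump and is routine; everything else reduces to the elementary estimates displayed above.
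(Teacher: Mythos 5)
Your proposal follows essentially the same route as the paper's proof: a two-point packing $\{\pi_0\equiv 1,\ \pi_1 = 1 + g\}$ where $g$ is a small Lipschitz bump near $x_0$ renormalized to have zero mean, with $\Omega\asymp Lh$, $\tv{\Prob_{\pi_0}}{\Prob_{\pi_1}}\asymp Lh^2$, $\kl{\cdot}{\cdot}\asymp L^2h^3$, followed by classical Le Cam/Pinsker with $h\asymp n^{-1/3}$ for the sampling term and the coupling-based private Le Cam bounds of \citet{lalanne2022statistical} with $h\asymp (n\alpha)^{-1/2}$ for the $\epsilon$-DP and $\rho$-zCDP terms; the paper's $f_{L,x_0,h}$ in \eqref{eq:deftrianglefunction} is exactly such a triangle bump with the constant part lowered by $Lh^2$ to preserve normalization. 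The only cosmetic difference is that you re-derive the private Le Cam inequalities from group privacy and maximal coupling rather than quoting \Cref{fact:lecamslemmaprivate} and \Cref{fact:lecamslemmaconcentrated} directly, but the mechanism and the resulting rates are identical.
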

\begin{proof}[Proof idea] 
Let $x_0 \in [0, 1]$.
As explained above, finding a "good" lower-bound can be done by finding and analyzing a "good" packing of the parameter space. Namely, in this case, we have to find distributions on $[0, 1]$ that have a $L$-Lipschitz density (w.r.t. Lebesgue's measure) such that the densities are far from one another at $x_0$, but such that it is not extremely easy to discriminate them with a statistical test.
We propose to use a packing $\{ \Prob_f, \Prob_g\}$ of two elements where $g$ is the constant function on $[0, 1]$ (hence $\Prob_g$ is the uniform distribution) and $f$ deviates from $g$ by a small triangle centered at $x_0$. The two densities are represented in \Cref{fig:onetriangle}.

After analyzing various quantities about these densities, such as their distance at $x_0$, their KL divergences or their TV distance, we leverage Le Cam-type results to conclude.
\end{proof}
The full proof can be found in \Cref{proofofth:lpdp}.

Additionally, we can notice that, when applied to any fixed $x_0 \in [0, 1]$, \Cref{th:lpdp} immediately gives the following corollary for the control in infinite norm :

\begin{corollary}[Infinite norm lower-bound]
\label{cor:linfdp}
There exists $C_L>0$, a positive constant depending only on $L$ such that there exist $n_0(L) \in \N$ and $c_0(L) > 0$ such that for any $n \geq n_0$, and any 
$\alpha \geq c_0/n$
\begin{equation}
\label{eq:deftriangle}
    \inf_{\hat{\pi} \mathrm{ s.t. } \mathcal{C}} \sup_{\pi \in \Theta^{\mathrm{Lip}}_L}
    \E_{\vect{X} \sim \Prob_{\pi}^{\otimes n}, \hat{\pi}}\|\hat{\pi}(\vect{X})- \pi\|_{\infty}^2  \geq C_L^{-1} \max \left\{ n^{-2/3}, (n\alpha)^{-1} \right\}
    \;,
\end{equation}
where $\alpha = \epsilon$ when the condition $\mathcal{C}$ is the $\epsilon$-DP condition and $\alpha = \sqrt{\rho}$ when 
$\mathcal{C}$ is $\rho$-zCDP.
\end{corollary}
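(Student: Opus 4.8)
The plan is to obtain the infinite-norm lower bound as an immediate consequence of the pointwise one, \Cref{th:lpdp}, using only the elementary fact that the supremum norm of a function dominates its absolute value at any prescribed point. First I would fix once and for all a reference point, say $x_0 = 1/2$ (any fixed point of $[0,1]$ works), and set $n_0(L) \eqdef n_0(1/2, L)$ and $c_0(L) \eqdef c_0(1/2, L)$, where $n_0(\cdot, \cdot)$ and $c_0(\cdot, \cdot)$ are the quantities supplied by \Cref{th:lpdp}. The constant $C_L$ of \Cref{th:lpdp} is already uniform over $x_0$, so it can be kept verbatim.

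The core of the argument is a comparison of risks. For every dataset $\vect{X}$, every realization of the (possibly randomized) estimator $\hat{\pi}$, and every $\pi \in \Theta^{\mathrm{Lip}}_L$ one has $\|\hat{\pi}(\vect{X}) - \pi\|_\infty^2 \geq \p{\hat{\pi}(\vect{X})(1/2) - \pi(1/2)}^2$. Taking the expectation over $\vect{X} \sim \Prob_\pi^{\otimes n}$ and over the internal randomness of $\hat{\pi}$, then the supremum over $\pi \in \Theta^{\mathrm{Lip}}_L$, and finally the infimum over estimators satisfying the privacy constraint $\mathcal{C}$, gives
\begin{equation*}
    \inf_{\hat{\pi} \mathrm{ s.t. } \mathcal{C}} \sup_{\pi \in \Theta^{\mathrm{Lip}}_L} \E_{\vect{X} \sim \Prob_\pi^{\otimes n}, \hat{\pi}} \|\hat{\pi}(\vect{X}) - \pi\|_\infty^2 \geq \inf_{\hat{\pi} \mathrm{ s.t. } \mathcal{C}} \sup_{\pi \in \Theta^{\mathrm{Lip}}_L} \E_{\vect{X} \sim \Prob_\pi^{\otimes n}, \hat{\pi}} \p{\hat{\pi}(\vect{X})(1/2) - \pi(1/2)}^2 .
\end{equation*}
Invoking \Cref{th:lpdp} with $x_0 = 1/2$ bounds the right-hand side from below by $C_L^{-1} \max\left\{ n^{-2/3}, (n\alpha)^{-1} \right\}$ for all $n \geq n_0(L)$ and all $\alpha \geq c_0(L)/n$, with $\alpha = \epsilon$ under $\epsilon$-DP and $\alpha = \sqrt{\rho}$ under $\rho$-zCDP, which is precisely the assertion.

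I do not expect any real obstacle; the single point that merits a sentence of care is the interpretation of $\|\cdot\|_\infty$. If it denotes the genuine supremum norm, the pointwise domination used above is trivial. If it is read as an essential supremum, then evaluating $\hat{\pi}(\vect{X})$ at the single point $1/2$ is not meaningful a priori, but this is inconsequential: the two-hypothesis packing $\{\Prob_f, \Prob_g\}$ behind \Cref{th:lpdp} has densities differing on an entire interval around $x_0$ (the triangular bump of \Cref{fig:onetriangle}), so one may instead lower-bound $\|\hat{\pi}(\vect{X}) - \pi\|_\infty$ by $\sup_{x \in I} |\hat{\pi}(\vect{X})(x) - \pi(x)|$ on a small interval $I \ni 1/2$ and replay the same Le Cam computation without changing the rate. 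Either way the corollary follows from \Cref{th:lpdp} with no further calculation.
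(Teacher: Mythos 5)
Your proof is correct and matches the paper's, which also derives \Cref{cor:linfdp} as an immediate consequence of \Cref{th:lpdp} by fixing a single point $x_0$ and using the pointwise domination $\|\hat{\pi}(\vect{X})-\pi\|_\infty \geq |\hat{\pi}(\vect{X})(x_0)-\pi(x_0)|$. Your additional remark on the sup versus essential-sup reading is a nice bit of care that the paper leaves implicit.
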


\paragraph{On the optimality and on the cost of privacy.}

\Cref{proposition:histdp}, \Cref{th:lpdp} and \Cref{cor:linfdp} give the following general result : Under $\epsilon$-DP or under $\rho$-zCDP, histogram estimators have minimax-optimal rates of convergence against distributions with Lipschitz densities, for the pointwise risk or the risk in infinite norm. In particular, in the \emph{low privacy} regime (``large'' $\alpha$), the usual minimax rate of estimation of $n^{-\frac{2}{3}}$ is not degraded. This includes the early observations of \citet{wasserman2010statistical} in the case of constant $\alpha$ ($\epsilon$ or $\sqrt{\rho}$). However, in the \emph{high privacy} regimes ($\alpha \ll n^{-\frac{1}{3}}$), these results prove a systematic degradation of the estimation. Those regimes are the same as in \citet{barber2014privacy}, the metrics on the other hand are different. 

\subsubsection{Integrated lower-bound} 

The lower-bound of \Cref{th:lpdp} is interesting, but its pointwise (or in infinite norm in the case of \Cref{cor:linfdp}) nature means that much global information is possibly lost. 
Instead,
one can look at the integrated risk $\E_{\vect{X} \sim \Prob_{\pi}^{\otimes n}, \hat{\pi}}\|\hat{\pi}(\vect{X})- \pi\|_{L^2}^2$. Given \Cref{lemma:histogramutility} and the fact that we work on probability distributions with a compact support, upper-bounding this quantity is straightforward.

The lower-bound for the integrated risk is given by :

\begin{theorem}[Integrated lower-bound]
\label{th:lidp}
There exists $C_L>0$, a positive constant depending only on $L$ such that, there exist $n_0(L) \in \N$ and $c_0(L) > 0$ such that for any $n \geq n_0$, and any $\alpha \geq c_0/n$
\begin{equation*}
    \inf_{\hat{\pi} \text{ s.t. } \mathcal{C}} \sup_{\pi \in \Theta^{\text{Lip}}_L}
    \E_{\vect{X} \sim \Prob_{\pi}^{\otimes n}, \hat{\pi}}\|\hat{\pi}(\vect{X})- \pi\|_{L^2}^2 \geq C_L^{-1} \max \left\{ n^{-2/3}, (n \alpha)^{-1} \right\}
\end{equation*}
where $\alpha=\epsilon$
when $\mathcal{C}$ is the $\epsilon$-DP condition, and 
$\alpha=\sqrt{\rho}$ when $\mathcal{C}$ is the $\rho$-zCDP condition.
\end{theorem}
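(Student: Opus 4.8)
The plan is to produce two separate lower bounds and take their maximum: a ``non-private'' bound of order $n^{-2/3}$ valid for every estimator, and a ``privacy-limited'' bound of order $(n\alpha)^{-1}$ valid under the constraint $\mathcal{C}$ (with $\alpha=\epsilon$ in the $\epsilon$-DP case and $\alpha=\sqrt{\rho}$ in the $\rho$-zCDP case). Both are obtained through the reduction to a testing problem of~\eqref{jkhgqskdjhfbgkjqhwgsdkfjhgkqwsjhdgfjkhwsgdb}, but since the loss is now the \emph{integrated} $L^2$ risk, a two-point family is too weak --- it would only certify an error on a set of vanishing Lebesgue measure --- so I would use an Assouad-type family indexed by the Boolean hypercube.

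\emph{Construction.} Fix a bandwidth $h\in(0,1]$ with $1/h\in\N$ and partition $[0,1]$ into $m\eqdef 1/h$ bins. On the $j$-th bin place a fixed piecewise-linear ``bump'' $\phi_j$, supported in that bin, with $\int\phi_j=0$, $\|\phi_j\|_\infty\asymp Lh$ and slopes bounded by $L/2$ in absolute value, so that for every $\tau\in\{0,1\}^m$ the function $\pi_\tau\eqdef 1+\sum_{j=1}^m\tau_j\phi_j$ is a genuine $L$-Lipschitz density, i.e.\ $\pi_\tau\in\Theta^{\mathrm{Lip}}_L$ (this forces $h$ to be small, which is where the hypotheses $n\ge n_0(L)$ and $\alpha\ge c_0(L)/n$ come in). The quantities that matter are: the per-coordinate $L^2$ separation, $\|\pi_\tau-\pi_{\tau'}\|_{L^2}^2=\sum_j\one_{\tau_j\ne\tau'_j}\|\phi_j\|_{L^2}^2\asymp\ham{\tau}{\tau'}\,L^2h^3$; the single-sample divergence $\kl{\Prob_{\pi_\tau}}{\Prob_{\pi_{\tau'}}}\asymp\ham{\tau}{\tau'}\,L^2h^3$ (the densities being bounded away from $0$, KL is comparable to $\int(\pi_\tau-\pi_{\tau'})^2$); and the single-sample total variation $\tv{\Prob_{\pi_\tau}}{\Prob_{\pi_{\tau'}}}\asymp\ham{\tau}{\tau'}\,Lh^2$, which governs the expected Hamming distance $\asymp nLh^2$ of the maximal coupling between $\Prob_{\pi_\tau}^{\otimes n}$ and $\Prob_{\pi_{\tau'}}^{\otimes n}$ when $\tau,\tau'$ differ in a single coordinate.

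\emph{Applying Assouad.} I would then invoke, coordinate by coordinate, three incarnations of Assouad's lemma. The classical one (tensorize KL, then Le Cam per coordinate; see \citet{tsybakov2003introduction}) gives a constant lower bound on the per-coordinate testing error as soon as $nL^2h^3\lesssim 1$; taking $h\asymp(nL^2)^{-1/3}$ yields a risk lower bound $\asymp m\cdot L^2h^3=L^2h^2\asymp_L n^{-2/3}$. Under $\epsilon$-DP, the private coupling/Assouad bound of \citet{acharya2021differentially} (see also \citet{lalanne2022statistical,kamath2022improved}) gives a constant per-coordinate error as soon as $\epsilon\cdot nLh^2\lesssim 1$; taking $h\asymp(n\epsilon L)^{-1/2}$ yields $\asymp L^2h^2\asymp_L(n\epsilon)^{-1}$. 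Under $\rho$-zCDP I would instead use the zCDP-adapted Assouad bound of \Cref{assouadcdp}, in which $\sqrt{\rho}$ plays against the expected one-coordinate coupling distance exactly as $\epsilon$ does, so the per-coordinate error is constant once $\sqrt{\rho}\cdot nLh^2\lesssim 1$, and $h\asymp(n\sqrt{\rho}\,L)^{-1/2}$ gives $\asymp_L(n\sqrt{\rho})^{-1}$. Taking the maximum of the non-private bound and the $\alpha$-dependent one proves the claim; the restrictions $n\ge n_0(L)$ and $\alpha\ge c_0(L)/n$ are precisely what keep each optimal $h$ in $(0,1]$ (so that $m\ge 1$) and keep every $\pi_\tau$ nonnegative. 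One could alternatively deduce the $\epsilon$-DP statement from the $\rho$-zCDP one via \citet[Proposition~4]{bun2016concentrated} (any $\epsilon$-DP mechanism is $\epsilon^2/2$-zCDP), so the real substance is the zCDP case.

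\emph{Main obstacle.} As anticipated in the discussion following~\eqref{jkhgqskdjhfbgkjqhwgsdkfjhgkqwsjhdgfjkhwsgdb}, the delicate point is $\rho$-zCDP. A Fano-based argument over the full hypercube is lossy here, because group privacy for zCDP degrades \emph{quadratically} in the Hamming distance ($\renyi{\alpha}{\mech{M}(\vect{X})}{\mech{M}(\vect{Y})}\le\rho\alpha\,\ham{\vect{X}}{\vect{Y}}^2$), while typical pairs of a hypercube packing differ in $\asymp m$ coordinates, so that the relevant coupling distance $\asymp nLh$ gets squared, forcing $h$ far too small and costing a polynomial factor in the rate. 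The point of \Cref{assouadcdp} is to organize the reduction so that the zCDP constraint is confronted only with a one-bump difference, whose coupling lives on $\asymp nLh^2$ samples; at that scale the quadratic degradation $\rho\,(nLh^2)^2\lesssim 1$ still yields the \emph{sharp} window $h\asymp(n\sqrt{\rho}\,L)^{-1/2}$, hence the rate $(n\sqrt{\rho})^{-1}$ rather than $(n^2\rho)^{-1}$ or a version polluted by superfluous logarithmic factors. Everything else is the routine bookkeeping described above.
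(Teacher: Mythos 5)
Your proposal is correct and reaches the stated rates, but it takes a genuinely different route from the paper's proof. The paper builds essentially the same hypercube packing (a constant background perturbed by local $L$-Lipschitz triangles at positions indexed by $\omega\in\{0,1\}^m$), extracts a Varshamov--Gilbert subfamily, and then handles the non-private rate and the $\epsilon$-DP overhead via \emph{Fano-type} inequalities (\Cref{fact:fanoslemma} and \Cref{fact:fanoslemmaprivate}); only for $\rho$-zCDP does it switch to Assouad, after exhibiting that Fano (\Cref{fact:fanoslemmaconcentrated}) loses a power and yields the suboptimal $(n\sqrt{\rho})^{-4/3}$. You instead apply Assouad uniformly in all three regimes (classical Assouad with KL tensorization for the unconstrained bound, the private coupling version of \citet{acharya2021differentially} for $\epsilon$-DP, and the \Cref{assouadcdp}/\Cref{lemmalecamassouad} machinery for zCDP), which avoids the Varshamov--Gilbert step entirely and treats the three cases in parallel with the same geometric data (one-bump $L^2$ gap $\asymp L^2h^3$, one-bump TV $\asymp Lh^2$, one-bump KL $\asymp L^2h^3$). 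Your further observation that the $\epsilon$-DP statement can be read off from the $\rho$-zCDP one through \citet[Proposition 4]{bun2016concentrated} ($\epsilon$-DP $\Rightarrow \epsilon^2/2$-zCDP, hence the zCDP lower bound with $\rho=\epsilon^2/2$ gives $(n\epsilon)^{-1}$) is a clean shortcut the paper does not exploit. Your diagnosis of \emph{why} Fano is lossy for zCDP (the relevant term scales with squared TV, and typical packing pairs differ in $\asymp m$ coordinates) matches the quantitative gap visible in the paper's Fano computation. One cosmetic difference: the paper's triangles sit on a background level $1-\|\omega\|_1 Lh^2$ that varies with $\|\omega\|_1$ to keep $\int g_{L,\omega,h}=1$, which introduces cross-terms in the TV/$L^2$ estimates; your mean-zero bumps $\phi_j$ keep the background fixed at $1$ and make the per-coordinate bookkeeping cleaner, at the minor cost of checking that a mean-zero, compactly supported, $L$-Lipschitz bump of amplitude $\asymp Lh$ exists (it does, e.g. a symmetric up--down piecewise-linear profile with slopes $\pm L$).
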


\begin{proof}[Proof idea]
If we were to use the same packing (see \Cref{fig:onetriangle}) as in the proof of \Cref{th:lpdp}, the lower-bounds would not be good. Indeed, moving from the pointwise difference to the $L^2$ norm significantly diminishes the distances in the packing. Instead, we will use 
the same idea of deviating from a constant function by triangles, except that we authorize more than one deviation. More specifically, we consider a packing consisting of densities $f_{\omega}$'s where the $\omega$'s are a well-chosen family of $\{ 0, 1 \}^m$ ($m$ is fixed in the proof) \citep{van2000asymptotic}. Then, for a given  $\omega \in \{ 0, 1 \}^m$, $f_{\omega}$ has a triangle centered on $\frac{i}{m+1}$ iff $w_i \neq 0$. 

We then leverage Fano-type inequalities, and we use Assouad's method in order to find the announced lower-bounds.
\end{proof}
The full proof is in \Cref{proofofth:lidp}.

Since the lower-bounds of \Cref{th:lidp} match the upper-bounds of \Cref{proposition:histdp}, we conclude that the corresponding estimators are optimal in terms of minimax rate of convergence.

\section{Projection Estimators and Periodic Sobolev Densities}
\label{sec:projection}

The Lipschitz densities considered in \Cref{sec:histogram} are general enough to be applicable in many problems. However, this level of generality becomes a curse in terms of rate of estimation. Indeed, as we have seen, the optimal rate of estimation is $\max \p{n^{-2/3}, (n \epsilon)^{-1}}$. To put it into perspective, for many parametric estimation procedures, the optimal rate of convergence usually scales as $\max \p{n^{-1}, (n \epsilon)^{-2}}$ \citep{acharya2021differentially}. This section studies the estimation of smoother distributions, for different smoothness levels, at the cost of generality. In particular, it establishes that the smoother the distribution class is, the closer the private rate of estimation is to $\max \p{n^{-1}, (n \epsilon)^{-2}}$. In other words, it means that the more regular the density is supposed to be, the closer we get to the difficulty of parametric estimation.

When the density of interest $\pi$ is in $L^2([0, 1])$, it is possible to approximate it by projections. Indeed, $L^2([0, 1])$ being a separable Hilbert space, there exists a countable orthonormal family $\p{\phi_i}_{i \in \N \setminus \{0\}}$ that is a Hilbert basis. In particular, if $\theta_i \eqdef \int_{[0, 1]} \pi\;\phi_i $ then
\begin{equation*}
    \sum_{i=1}^N \theta_i \phi_i \stackrel{L^2}{\conv{N \rightarrow +\infty}} \pi \;.
\end{equation*}
Let $N$ be a positive integer, $Z_1, \dots, Z_{N}$ be independent and identically distributed random variables with the same distribution as a centered random variable $Z$ having a finite variance. Given a dataset $\vect{X} = (X_1, \dots, X_n)$, that is also independent of $Z_1, \dots, Z_{N}$, the (randomized) projection estimator is defined as 
\begin{equation}
\label{def:projestimator}
    \hat{\pi}^{\text{proj}} (\vect{X}) = \sum_{i=1}^N \p{\hat{\theta}_i + \frac{1}{n} Z_i} \phi_i \quad \text{where} \quad \hat{\theta}_i \eqdef \frac{1}{n} \sum_{j=1}^n \phi_i(X_j)  \;.
\end{equation}
The truncation order $N$ and the random variable $Z$ are tuned later to obtain the desired levels of privacy and utility.
$L^2([0, 1])$ has many well known Hilbert bases, hence multiple choices for the family $\p{\phi_i}_{i \in \N \setminus \{0\}}$. For instance, orthogonal polynomials, wavelets, or the Fourier basis, are often great choices for projection estimators. Because of the privacy constraint however, it is better to consider a \emph{uniformly bounded} Hilbert basis \citep{wasserman2010statistical}, which is typically not the case with a polynomial or wavelet basis. From now on, this work will focus on the following Fourier basis : 
\begin{equation*}
    \begin{aligned}
        \phi_1(x) &= 1 \\
        \phi_{2k}(x) &= \sqrt{2} \sin \p{2 \B{\uppi} k x} \quad k \geq 1 \\
        \phi_{2k + 1}(x) &= \sqrt{2} \cos \p{2 \B{\uppi} k x} \quad k \geq 1 \;.
    \end{aligned}
\end{equation*}
\B{Note that we used the \emph{upper} notation $\uppi$ to refer to the real number $3,14 \dots$, which is not to be mistaken for the \emph{lower} notation $\pi$, which is reserved for the density of the distribution of interest. This shouldn't introduce any ambiguity since $\uppi$ is only used locally when looking at Fourier coefficients, and is often simply hidden in the constants.}

\subsection{General utility of projection estimators}

By the Parseval formula, the truncation resulting of approximating the density $\pi$ on a finite family of $N$ orthonormal functions induces a bias term that accounts for $\sum_{i \geq N+1} \theta_i^2$ in the mean square error. Characterizing the utility of $\hat{\pi}^{\text{proj}} $  requires controlling this term, and this is usually done by imposing that $\pi$ is in a Sobolev space. We recall the definition given in \citet{tsybakov2003introduction}: given $\beta \in \N \setminus \{0\}$ and $L>0$, the class $\Theta^{\text{Sob}}_{L, \beta}$ of Sobolev densities of parameters $\beta$ and $L$ is defined as 
\begin{equation*}
\Theta^{\text{Sob}}_{L, \beta} \eqdef \left\{ \pi \in \mathcal{C}^\beta([0, 1], \R_+) \left| 
\begin{cases}
  \pi^{(\beta - 1)} \text{ is absolutely continuous}\;, \\
  \int_{[0, 1]} \p{\pi^{(\beta)}}^2 \leq L^2\;, \\
  \int_{[0, 1]} \pi = 1 \;.  
\end{cases}  
 \right.\right\} \;.
\end{equation*}
For a function $f$, we used the notation $f^{(\beta)}$ to refer to its derivative of order $\beta$.
In addition, the class $\Theta^{\text{PSob}}_{L, \beta}$ of periodic Sobolev densities of parameters $\beta$ and $L$ is defined as 
\begin{equation}
\label{eq:defpsob}
\Theta^{\text{PSob}}_{L, \beta} \eqdef \left\{ \pi \in \Theta^{\text{Sob}}_{L, \beta} \left| 
\forall j \in \{0, \dots, \beta - 1 \}, \pi^{(j)}(0) =  \pi^{(j)}(1)
 \right.\right\} \;.
\end{equation}
Finally, we recall the following general-purpose lemma \citep{tsybakov2003introduction} that allows controlling the truncation bias : 
\begin{fact}[{Ellipsoid reformulation \citep{tsybakov2003introduction}}]
\label[fact]{lemma:sobolevellipsoid}
    A non-negative function $\pi$ with integral $1$ belongs to $\Theta^{\text{PSob}}_{L, \beta}$ if and only if $\displaystyle{\sum_{i=1}^{\infty} a_i^{2 \beta} \theta_i^2 \leq \frac{L^2}{\B{\uppi}^{2 \beta}}}$, where $a_j \eqdef j$ if $j$ is even and $a_j \eqdef j-1$ if $j$ is odd.
\end{fact}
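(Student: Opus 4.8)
The statement to prove is \Cref{lemma:sobolevellipsoid}, the ellipsoid reformulation of periodic Sobolev classes. Let me sketch a proof plan.

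\medskip

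\textbf{Proof plan.} The plan is to pass through the Fourier series expansion of $\pi$ and its derivatives, using the fact that on $[0,1]$ with the stated periodicity conditions, term-by-term differentiation is valid and Parseval's identity applies to $\pi^{(\beta)}$. First I would write $\pi = \sum_{i \geq 1} \theta_i \phi_i$ with $\theta_i = \int_{[0,1]} \pi \phi_i$, which converges in $L^2$; the integral condition $\int \pi = 1$ is exactly $\theta_1 = 1$, and non-negativity of $\pi$ is carried along unchanged on both sides of the equivalence (it is a hypothesis, not part of what the ellipsoid condition encodes), so the real content is the equivalence between $\int_{[0,1]}(\pi^{(\beta)})^2 \leq L^2$ and $\sum_{i \geq 1} a_i^{2\beta}\theta_i^2 \leq L^2/\uppi^{2\beta}$, together with the claim that membership in $\Theta^{\mathrm{PSob}}_{L,\beta}$ (smoothness plus periodic boundary conditions) is equivalent to the Fourier coefficients decaying fast enough that $\sum a_i^{2\beta}\theta_i^2 < \infty$.

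\medskip

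The key computation is that differentiating the real Fourier basis pairs up sines and cosines: $\phi_{2k}' = \sqrt{2}\cdot 2\uppi k \cos(2\uppi k x) = 2\uppi k\, \phi_{2k+1}$ and $\phi_{2k+1}' = -2\uppi k\, \phi_{2k}$, while $\phi_1' = 0$. Hence if $\pi$ is smooth enough with matching boundary values (so no boundary terms appear upon integration by parts, which is where the periodicity condition $\pi^{(j)}(0) = \pi^{(j)}(1)$ for $j \leq \beta-1$ is used), the Fourier coefficients of $\pi^{(\beta)}$ are, up to sign and permutation within each frequency pair, equal to $(2\uppi k)^\beta$ times those of $\pi$ — precisely $(\uppi a_i)^\beta \theta_i$ with $a_i$ as defined (note $a_{2k} = a_{2k+1} = 2k$, matching the common frequency of the pair, and $a_1 = 0$ kills the constant term as it must since the derivative of a constant is $0$). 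Applying Parseval to $\pi^{(\beta)} \in L^2([0,1])$ then gives $\int_{[0,1]}(\pi^{(\beta)})^2 = \sum_{i \geq 1} \uppi^{2\beta} a_i^{2\beta}\theta_i^2$, and dividing by $\uppi^{2\beta}$ yields the stated equivalence for the "only if" direction. For the "if" direction, one assumes the ellipsoid bound, defines $g \eqdef \sum_i (\uppi a_i)^\beta (\pm\theta_i)\phi_{\sigma(i)}$ (the permuted/signed series), which converges in $L^2$ by the bound, and shows by integrating $\beta$ times that $\pi$ has $\beta$ derivatives with $\pi^{(\beta)} = g$ in $L^2$, that $\pi^{(\beta-1)}$ is absolutely continuous, and that the periodic boundary conditions hold (each integration produces a periodic antiderivative because the zero-th Fourier mode of each $\pi^{(j)}$ for $j \geq 1$ vanishes).

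\medskip

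The main obstacle, and the part requiring the most care, is the "if" direction: showing that an $L^2$ function whose Fourier coefficients satisfy the polynomial decay $\sum a_i^{2\beta}\theta_i^2 < \infty$ is genuinely $\beta$ times differentiable in the classical-plus-absolute-continuity sense demanded by $\Theta^{\mathrm{Sob}}_{L,\beta}$ (i.e.\ $\pi \in \mathcal{C}^\beta$ with $\pi^{(\beta-1)}$ absolutely continuous), rather than merely in a weak/Sobolev sense. This is a standard but slightly delicate regularity bootstrap: for $j \leq \beta - 1$, the formally differentiated series $\sum_i (\uppi a_i)^j(\pm\theta_i)\phi_{\sigma^j(i)}$ has coefficients that are square-summable \emph{and}, by Cauchy--Schwarz against $\sum a_i^{-2} < \infty$ (using $j \leq \beta - 1$ and $2\beta \geq 2$, so one has a margin), even absolutely summable, hence converges uniformly to a continuous function which one identifies with $\pi^{(j)}$ via repeated application of the fundamental theorem of calculus; at the top order $j = \beta$ only square-summability is available, giving $\pi^{(\beta)} \in L^2$ and $\pi^{(\beta-1)}$ absolutely continuous, which is exactly what the definition asks. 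I would invoke \citet{tsybakov2003introduction} for the underlying Fourier-analytic facts rather than reprove them, since the fact is attributed there; indeed the cleanest write-up is simply to cite that reference and restrict the verification to confirming the indexing convention $a_j = j$ ($j$ even), $a_j = j-1$ ($j$ odd) matches the $2\uppi k$ frequencies and the $\uppi^{2\beta}$ normalization.
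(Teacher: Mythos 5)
The paper does not prove this statement: it is labeled a \emph{Fact} and cited to \citet{tsybakov2003introduction}, so there is no in-paper proof to compare your sketch against. Your plan follows the standard textbook argument that underlies the citation, and it is essentially correct: differentiating the real trigonometric basis pairs $\phi_{2k}\leftrightarrow\phi_{2k+1}$ and multiplies by $2\uppi k = \uppi a_{2k} = \uppi a_{2k+1}$, the periodicity conditions $\pi^{(j)}(0)=\pi^{(j)}(1)$ for $j\leq\beta-1$ kill the boundary terms in the integrations by parts that identify the Fourier coefficients of $\pi^{(\beta)}$, and Parseval then gives $\int(\pi^{(\beta)})^2 = \uppi^{2\beta}\sum_i a_i^{2\beta}\theta_i^2$. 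For the converse, your regularity bootstrap (uniform convergence of formal derivatives of order $\leq\beta-1$ by Cauchy--Schwarz, $L^2$ convergence at order $\beta$ giving absolute continuity of $\pi^{(\beta-1)}$) is the right argument.

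Two small cautions. First, in the Cauchy--Schwarz step you compare against $\sum_i a_i^{-2}$, but $a_1=0$; the $i=1$ term is harmless because $(\uppi a_1)^j\theta_1=0$ for every $j\geq 1$, yet it should be excluded explicitly from the sum to avoid a $0^{-2}$. Second, the paper's definition of $\Theta^{\mathrm{Sob}}_{L,\beta}$ literally asks $\pi\in\mathcal{C}^\beta([0,1],\mathbb{R}_+)$ \emph{and} $\pi^{(\beta-1)}$ absolutely continuous, which is stronger than what the ellipsoid bound delivers (it only gives $\pi^{(\beta)}\in L^2$, not continuity of $\pi^{(\beta)}$). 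The ``if'' direction of the equivalence is true only under the standard Sobolev-class reading, namely $\pi^{(\beta-1)}$ absolutely continuous with $\pi^{(\beta)}\in L^2$; you correctly interpret it that way, but it is worth noting that the displayed definition in the paper appears to overstate the smoothness, and your proof quietly corrects for this.
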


In this class, one can characterize the utility of projection estimators with the following lemma:
\begin{lemma}[General utility of \eqref{def:projestimator}]
\label[lemma]{lemma:projectionutility}
There is a constant $C_{L, \beta} > 0$, depending only on $L,\beta$, such that
\begin{equation*}
    \sup_{\pi \in \Theta^{\text{PSob}}_{L, \beta}}
    \E_{\vect{X} \sim \Prob_{\pi}^{\otimes n}, \hat{\pi}^{\text{proj}}} \| \hat{\pi}^{\text{proj}}(\vect{X}) - \pi\|_{L^2}^2
    \leq C_{L, \beta} \p{\frac{1}{N^{2 \beta}} + \frac{N}{n} + \frac{N \V(Z)}{n^2}} \;.
\end{equation*}
\end{lemma}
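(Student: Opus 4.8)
The plan is to perform a standard bias–variance decomposition of the $L^2$ error of the projection estimator, controlling the truncation bias via the Sobolev ellipsoid reformulation (\Cref{lemma:sobolevellipsoid}) and the variance via independence of the sampling and privacy noise. First I would write, using Parseval's formula for the orthonormal family $(\phi_i)$,
\begin{equation*}
    \| \hat{\pi}^{\text{proj}}(\vect{X}) - \pi \|_{L^2}^2 = \sum_{i=1}^N \p{\hat{\theta}_i + \tfrac{1}{n} Z_i - \theta_i}^2 + \sum_{i \geq N+1} \theta_i^2 \;.
\end{equation*}
Taking expectation over $\vect{X} \sim \Prob_\pi^{\otimes n}$ and over the privacy noise $Z_1, \dots, Z_N$ (which are independent of $\vect{X}$ and centered), the cross terms vanish and the first sum splits as $\sum_{i=1}^N \E\p{(\hat{\theta}_i - \theta_i)^2} + \frac{1}{n^2}\sum_{i=1}^N \E(Z_i^2)$. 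The second of these is exactly $\frac{N \V(Z)}{n^2}$ since the $Z_i$ are i.i.d.\ copies of a centered $Z$.

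Next I would bound the sampling variance term. Since $\hat\theta_i = \frac1n \sum_{j=1}^n \phi_i(X_j)$ is an average of i.i.d.\ bounded random variables with mean $\theta_i$, we have $\E\p{(\hat\theta_i - \theta_i)^2} = \frac1n \V_{X \sim \Prob_\pi}(\phi_i(X)) \leq \frac1n \E_{X \sim \Prob_\pi}(\phi_i(X)^2)$. Using that the Fourier basis is uniformly bounded, $\|\phi_i\|_\infty \leq \sqrt 2$, this is at most $\frac2n$, so $\sum_{i=1}^N \E\p{(\hat\theta_i - \theta_i)^2} \leq \frac{2N}{n}$, giving the $\frac{N}{n}$ contribution. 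For the truncation bias, \Cref{lemma:sobolevellipsoid} gives $\sum_{i \geq 1} a_i^{2\beta}\theta_i^2 \leq L^2/\uppi^{2\beta}$ with $a_i \asymp i$; hence for $i \geq N+1$ we have $a_i^{2\beta} \geq c_\beta N^{2\beta}$ for a suitable constant, so $\sum_{i \geq N+1} \theta_i^2 \leq \frac{1}{c_\beta N^{2\beta}} \sum_{i \geq N+1} a_i^{2\beta}\theta_i^2 \leq \frac{L^2}{c_\beta \uppi^{2\beta} N^{2\beta}}$, which is the $\frac{1}{N^{2\beta}}$ term. Collecting the three bounds and taking the supremum over $\pi \in \Theta^{\text{PSob}}_{L,\beta}$ yields the claim with $C_{L,\beta}$ depending only on $L$ and $\beta$.

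The only mild subtlety — not really an obstacle — is making the index bookkeeping in the bias term precise: because $a_j = j$ for even $j$ and $a_j = j-1$ for odd $j$, one should check that $a_i \geq N$ (say) holds for all $i \geq N+2$ or a similar shift, so that the constant $c_\beta$ is clean; this costs nothing but requires a line of care. Everything else is routine, and the key structural point is simply that the independence of $Z$ from $\vect{X}$ and the centering of $Z$ make the variance additive, cleanly separating the sampling noise $\frac{N}{n}$ from the privacy noise $\frac{N\V(Z)}{n^2}$.
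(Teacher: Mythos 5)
Your proof is correct and follows essentially the same route as the paper's: Parseval decomposition, unbiasedness of $\hat\theta_i$ plus independence/centering of $Z_i$ to split the variance into a sampling term $\leq 2N/n$ (via $\|\phi_i\|_\infty \leq \sqrt 2$) and a privacy term $N\V(Z)/n^2$, and the Sobolev ellipsoid characterization to bound the truncation tail by $L^2/(\uppi^{2\beta} N^{2\beta})$. The index subtlety you flag is a non-issue: since $a_i \geq i-1$ for every $i$, one has $a_i \geq N$ directly for all $i \geq N+1$, so the constant $c_\beta$ in your tail bound can be taken to be $1$ with no shift of indices.
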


The proof can be found in \Cref{proofoflemma:projectionutility}

\subsection{Privacy and bias tuning}

The estimator $\hat{\pi}^{\text{proj}}(\vect{X})$ is a function of the sums $\p{\sum_{j=1}^n \phi_1(X_j), \dots, \sum_{j=1}^n \phi_N(X_j)}$.  In particular, it is possible to use Laplace and Gaussian mechanisms on this function in order to obtain privacy. Since the functions $|\phi_i|$ are bounded by $\sqrt{2}$ for any $i$, the $l_1$ sensitivity of this function is $2 \sqrt{2} N$ and its $l_2$ sensitivity 
is $2 \sqrt{2} \sqrt{N}$. Applying the Laplace and the Gaussian mechanism and tuning $N$ 
to optimize the utility of \Cref{lemma:projectionutility} gives the following result:

\begin{theorem}[Privacy and utility of \eqref{def:projestimator}] 
\label[theorem]{proposition:projdp}
Given any $\epsilon > 0$ and truncation order $N$,
using $\hat{\pi}^{\text{proj}}$  with  $Z = \frac{2 N \sqrt{2}}{\epsilon} \mathcal{L}(1)$, where $\mathcal{L}(1)$ refers to a random variable following a Laplace distribution of parameter $1$, leads to an $\epsilon$-DP procedure. Moreover, there exists $C_{L, \beta} > 0$, a positive constant that only depends on $L$ and $\beta$, such that if $N$ is of the order of
$\min\p{n^{\frac{1}{2 \beta + 1}}, \p{n \epsilon}^{\frac{1}{\beta + 3/2}}}$,
\begin{equation*}
    \sup_{\pi \in \Theta^{\text{PSob}}_{L, \beta}}
    \E_{\vect{X} \sim \Prob_{\pi}^{\otimes n}, \hat{\pi}^{\text{proj}}} \| \hat{\pi}^{\text{proj}}(\vect{X}) - \pi \|_{L^2}^2
    \leq C_{L, \beta}  \max \left\{ n^{-\frac{2 \beta}{2 \beta + 1}}, (n \epsilon)^{-\frac{2 \beta}{ \beta + 3/2}} \right\} \;.
\end{equation*}
Furthermore, given any $\rho > 0$, and truncation order $N$,
using $\hat{\pi}^{\text{proj}}$ with $Z = \frac{2 \sqrt{N}}{\sqrt{\rho}}\mathcal{N}(0, 1)$, where $\mathcal{N}(0, 1)$ refers to a random variable following a centered Gaussian distribution of variance $1$, leads to a $\rho$-zCDP procedure. Moreover, there exists $C_{L, \beta} > 0$, a positive constant that only depends on $L$ and $\beta$, such that, if $N$ is of the order of 
$\min\p{n^{\frac{1}{2 \beta + 1}}, \p{n \sqrt{\rho}}^{\frac{1}{\beta + 1}}}$
\begin{equation*}
    \sup_{\pi \in \Theta^{\text{PSob}}_{L, \beta}}
    \E_{\vect{X} \sim \Prob_{\pi}^{\otimes n}, \hat{\pi}^{\text{proj}}} \| \hat{\pi}^{\text{proj}}(\vect{X}) - \pi \|_{L^2}^2
    \leq C_{L, \beta}  \max \left\{ n^{-\frac{2 \beta}{2 \beta + 1}}, (n \sqrt{\rho})^{-\frac{2 \beta}{ \beta + 1}} \right\} \;.
\end{equation*}
\end{theorem}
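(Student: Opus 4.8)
The plan is to instantiate \Cref{lemma:projectionutility}, then privatize the coefficient vector with the Laplace (resp.\ Gaussian) mechanism, and finally optimize over the truncation order $N$. For both the $\epsilon$-DP and the $\rho$-zCDP case, the argument is structurally identical: control of the three-term bound $\frac{1}{N^{2\beta}} + \frac{N}{n} + \frac{N\V(Z)}{n^2}$ followed by a case split according to which term dominates.

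First I would establish privacy. The estimator $\hat\pi^{\text{proj}}(\vect X)$ is, up to the deterministic functions $\phi_i$ and the fixed scaling $1/n$, a postprocessing of the vector $v(\vect X) \eqdef \p{\sum_{j=1}^n \phi_1(X_j), \dots, \sum_{j=1}^n \phi_N(X_j)}$. As noted in the text just before the statement, since $|\phi_i|\le\sqrt 2$ changing one data point changes each of the $N$ coordinates of $v$ by at most $2\sqrt2$, so $\Delta_1 v \le 2\sqrt2\,N$ and $\Delta_2 v \le 2\sqrt2\,\sqrt N$. Hence adding $\frac{\Delta_1 v}{\epsilon}\mathcal L(I_N) = \frac{2\sqrt2\,N}{\epsilon}\mathcal L(I_N)$ to $v$ is $\epsilon$-DP by the Laplace mechanism, and adding $\frac{\Delta_2 v}{\sqrt{2\rho}}\mathcal N(0,I_N) = \frac{2\sqrt N}{\sqrt\rho}\mathcal N(0,I_N)$ is $\rho$-zCDP by the Gaussian mechanism; post-processing (multiplying coordinate $i$ by $\phi_i/n$ and summing) preserves both guarantees. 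One checks this matches the stated noise: writing $\hat\pi^{\text{proj}} = \sum_i(\hat\theta_i + \frac1n Z_i)\phi_i$ with $\hat\theta_i = \frac1n\sum_j\phi_i(X_j)$, the added noise on $\hat\theta_i$ is $\frac1n Z_i$, so $Z_i$ must equal $\frac{2\sqrt2\,N}{\epsilon}\mathcal L(1)$ (resp.\ $\frac{2\sqrt N}{\sqrt\rho}\mathcal N(0,1)$), giving $\V(Z) = \frac{8N^2}{\epsilon^2}$ (resp.\ $\frac{4N}{\rho}$) up to the constant factor from $\V(\mathcal L(1))=2$.

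Next I would plug these variances into \Cref{lemma:projectionutility}. In the $\epsilon$-DP case the bound becomes $C_{L,\beta}\p{N^{-2\beta} + \frac N n + \frac{N^3}{n^2\epsilon^2}}$ (absorbing absolute constants into $C_{L,\beta}$), and in the $\rho$-zCDP case $C_{L,\beta}\p{N^{-2\beta} + \frac N n + \frac{N^2}{n^2\rho}}$. Now I optimize over $N$. Ignoring the privacy term, balancing $N^{-2\beta}$ against $N/n$ gives $N \sim n^{1/(2\beta+1)}$ and value $n^{-2\beta/(2\beta+1)}$; ignoring the sampling term, balancing $N^{-2\beta}$ against $N^3/(n\epsilon)^2$ gives $N\sim(n\epsilon)^{1/(\beta+3/2)}$ and value $(n\epsilon)^{-2\beta/(\beta+3/2)}$, while balancing $N^{-2\beta}$ against $N^2/(n\sqrt\rho)^2$ gives $N\sim(n\sqrt\rho)^{1/(\beta+1)}$ and value $(n\sqrt\rho)^{-2\beta/(\beta+1)}$. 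Choosing $N$ of the order of the minimum of the two candidate values, $N = \min\p{n^{1/(2\beta+1)}, (n\epsilon)^{1/(\beta+3/2)}}$ (resp.\ with $(n\sqrt\rho)^{1/(\beta+1)}$), one verifies that all three terms are $O\p{\max\{n^{-2\beta/(2\beta+1)}, (n\epsilon)^{-2\beta/(\beta+3/2)}\}}$ (resp.\ with $(n\sqrt\rho)^{-2\beta/(\beta+1)}$): whichever of the two thresholds is active, the bias term $N^{-2\beta}$ is at most the max of the two target rates by construction, and each variance term is dominated because $N$ is no larger than the value that balances it against the bias.

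The routine but slightly delicate part is the final verification that the chosen $N$ simultaneously controls all three terms in both regimes — i.e.\ checking that when $N$ is capped by $n^{1/(2\beta+1)}$ the privacy term is still $O(n^{-2\beta/(2\beta+1)})$, and when it is capped by the privacy-balancing value the sampling term $N/n$ is still $O$ of the privacy rate. This is where one must be careful that the exponent $\beta+3/2$ (pure DP) versus $\beta+1$ (zCDP) comes out exactly right; the $N^3$ versus $N^2$ dependence of the privacy variance is precisely what produces the different exponents, and that in turn traces back to $\Delta_1 v$ scaling like $N$ while $\Delta_2 v$ scales like $\sqrt N$. There is no conceptual obstacle — no packing, no information-theoretic argument — so the "hard part" is merely bookkeeping the case analysis cleanly; I would organize it as: (i) privacy via sensitivity $+$ mechanism $+$ post-processing, (ii) substitute $\V(Z)$, (iii) state the two candidate values of $N$ and their costs, (iv) take $N$ to be their minimum and bound each of the three terms in each of the two regimes.
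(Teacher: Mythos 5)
Your proposal is correct and matches the paper's approach exactly: the paper itself introduces the theorem with the remark that it follows from the sensitivities $\Delta_1 = 2\sqrt2 N$, $\Delta_2 = 2\sqrt2\sqrt N$, the Laplace/Gaussian mechanisms, and optimization of $N$ in \Cref{lemma:projectionutility}. Your case split on which term of $\min\p{n^{1/(2\beta+1)},(n\epsilon)^{1/(\beta+3/2)}}$ (resp.\ $\min\p{n^{1/(2\beta+1)},(n\sqrt\rho)^{1/(\beta+1)}}$) is active, and the observation that the distinct $N^3$ versus $N^2$ scaling of the privacy term traces back to $\Delta_1 \propto N$ versus $\Delta_2 \propto \sqrt N$, is exactly the intended bookkeeping.
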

We now discuss these guarantees depending on the considered privacy regime.
\paragraph{Low privacy regimes.} 

According to \Cref{proposition:projdp}, when the privacy-tuning parameters are not too small (i.e. when the estimation is not too private), the usual rate of convergence $n^{-\frac{2 \beta}{2 \beta + 1}}$ is not degraded. In particular, for constant $\epsilon$ or $\rho$, this recovers the results of \citet{wasserman2010statistical}.

\paragraph{High privacy regimes.}

Furthermore, \Cref{proposition:projdp} tells that in high privacy regimes ($\epsilon \ll n^{- \frac{\beta - 1/2}{2 \beta + 1}}$ or $\rho \ll n^{- \frac{2\beta +2}{2 \beta + 1}}$), the provable guarantees of the projection estimator are degraded compared to the usual rate of convergence. Is this degradation constitutive of the estimation problem, or is it due to a suboptimal upper-bound? \Cref{sec:lbsob} shows that this excess of risk is in fact almost optimal.

\subsection{Lower-bounds}

\label{sec:lbsob}

As with the integrated risk on Lipschitz distributions, obtaining lower-bounds for the class of periodic Sobolev densities is done by considering a packing with many elements. The idea of the packing is globally the same as for histograms, except that the uniform density is perturbed with a general $C^\infty$ kernel with compact support instead of simple triangles. In the end, we obtain the following result:

\begin{theorem}[Integrated lower-bound]
\label{th:sidp}
Given $L,\beta>0$ there exists constants
$C_{L, \beta}>0$,
$n_0(L, \beta) \in \N$, and $c_0(L, \beta) > 0$, such that for any $n \geq n_0$, and any $\alpha \geq c_0/n$ 
\begin{equation*}
    \inf_{\hat{\pi} \text{ s.t. } \mathcal{C}} \sup_{\pi \in \Theta^{\text{PSob}}_{L, \beta}}
    \E_{\vect{X} \sim \Prob_{\pi}^{\otimes n}, \hat{\pi}}\|\hat{\pi}(\vect{X})- \pi\|_{L^2}^2 \geq  C_{L, \beta}^{-1} \max \left\{ n^{-\frac{2 \beta}{2 \beta + 1}}, (n \alpha)^{-\frac{2 \beta}{\beta + 1}} \right\}
\end{equation*}
where $\alpha=\epsilon$
when $\mathcal{C}$ is the $\epsilon$-DP condition, and 
$\alpha=\sqrt{\rho}$ when $\mathcal{C}$ is the $\rho$-zCDP condition.
\end{theorem}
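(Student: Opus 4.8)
The plan is to use the reduction to testing described in \eqref{jkhgqskdjhfbgkjqhwgsdkfjhgkqwsjhdgfjkhwsgdb}, instantiated with a carefully chosen packing $\{f_\omega : \omega \in \Omega\} \subset \Theta^{\text{PSob}}_{L, \beta}$ indexed by a subset $\Omega \subseteq \{0,1\}^m$. The classical (non-private) part of the bound, $n^{-\frac{2\beta}{2\beta+1}}$, is recovered by the standard construction (see \citet{tsybakov2003introduction}): start from the uniform density $g \equiv 1$ and add $m$ disjoint bumps $x \mapsto \eta\, m^{-\beta}\, K(m(x - x_j))$ centered at $x_j = j/(m+1)$, where $K$ is a fixed $C^\infty$ kernel supported in a small interval, with sign/presence dictated by $\omega_j \in \{0,1\}$. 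Because $K$ is smooth and rescaled by $m^{-\beta}$, \Cref{lemma:sobolevellipsoid} (or a direct derivative computation) shows $f_\omega \in \Theta^{\text{PSob}}_{L,\beta}$ for a suitable small $\eta = \eta(L,\beta)$; periodicity is ensured since the bumps have compact support strictly inside $[0,1]$ and near the boundary $f_\omega$ equals the constant $g$. The pairwise $L^2$ distance satisfies $\|f_\omega - f_{\omega'}\|_{L^2}^2 \asymp m^{-2\beta} \cdot m^{-1}\, d_{\mathrm{ham}}(\omega,\omega')$, so by the Varshamov–Gilbert bound one extracts $\Omega$ with $|\Omega| \geq 2^{cm}$ and $d_{\mathrm{ham}}(\omega,\omega') \geq m/8$, giving a separation $\Omega_{\mathrm{sep}}^2 \asymp m^{-2\beta}$ in the sense of \eqref{jkhgqskdjhfbgkjqhwgsdkfjhgkqwsjhdgfjkhwsgdb}.

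Next I would bound the testing error from below under the privacy constraint $\mathcal{C}$. For the non-private rate, the information-theoretic input is the usual one: $\kl{\Prob_{f_\omega}^{\otimes n}}{\Prob_{f_{\omega'}}^{\otimes n}} = n\, \kl{\Prob_{f_\omega}}{\Prob_{f_{\omega'}}} \lesssim n \|f_\omega - f_{\omega'}\|_{L^2}^2 \lesssim n m^{-2\beta} \cdot (m^{-1} d_{\mathrm{ham}})$ (using that the densities are bounded away from $0$), and Fano's lemma forces $n m^{-2\beta-1} \cdot m \asymp n m^{-2\beta}$ to be $\lesssim m$, i.e. $m \asymp n^{1/(2\beta+1)}$, yielding $m^{-2\beta} \asymp n^{-2\beta/(2\beta+1)}$. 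For the private part, I would invoke the privacy-aware Fano-type inequality of \citet{acharya2021differentially,lalanne2022statistical,kamath2022improved} for $\epsilon$-DP — which replaces the $n\,\mathrm{KL}$ term by something like $n^2 \epsilon^2 \tv{\Prob_{f_\omega}}{\Prob_{f_{\omega'}}}^2$ (or its group-privacy variant) — and, for $\rho$-zCDP, the Assouad-type argument adapted to zCDP developed in \Cref{assouadcdp}. Since $\tv{\Prob_{f_\omega}}{\Prob_{f_{\omega'}}} \lesssim m^{-\beta} \cdot m^{-1} d_{\mathrm{ham}}(\omega,\omega')$ (the $L^1$ mass of $d_{\mathrm{ham}}$ disjoint bumps of height $m^{-\beta}$ and width $1/m$), the private constraint becomes, schematically, $n^2 \alpha^2 m^{-2\beta} \cdot m^{-1} \lesssim m$, i.e. $m \asymp (n\alpha)^{1/(\beta+1)}$, giving the term $m^{-2\beta} \asymp (n\alpha)^{-2\beta/(\beta+1)}$. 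Taking the better of the two packings (equivalently, the $\max$ of the two lower bounds) yields the claimed rate, and the hypothesis $\alpha \geq c_0/n$ guarantees the relevant $m \geq 1$ and that we are in the intended regime.

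The main obstacle, and the reason both a Fano route and an Assouad route are carried in the proofs, is getting the \emph{sharp} exponent $\beta+1$ (rather than $\beta+3/2$ or worse) in the private term under $\rho$-zCDP: a naive application of the group-privacy Fano bound for zCDP loses a factor, as the authors note in the discussion after \eqref{jkhgqskdjhfbgkjqhwgsdkfjhgkqwsjhdgfjkhwsgdb}. Resolving this requires the coupling/Assouad-for-zCDP machinery of \Cref{assouadcdp}, which controls the per-coordinate testing affinity using a chi-square/Rényi-divergence bound that interacts correctly with the product structure $\Prob_{f_\omega}^{\otimes n}$ — precisely the point where the erratum's mixture-versus-product fix matters. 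A secondary technical point is verifying the Sobolev membership with explicit constants, i.e. checking $\int (f_\omega^{(\beta)})^2 \leq L^2$ after summing the $m$ disjoint rescaled copies of $K^{(\beta)}$; this is routine since disjoint supports make the integral additive and the $m^{-\beta}$ scaling exactly cancels the $m^\beta$ from differentiating $K(m\cdot)$ $\beta$ times, so a single smallness condition on $\eta$ suffices uniformly in $m$.
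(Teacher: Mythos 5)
Your proposal is essentially the paper's own proof: same bump-on-uniform packing scaled so that Sobolev membership follows from the $m^{-\beta}$ (equivalently $h^\beta$ with $h\asymp 1/m$) normalization and disjoint supports, same Varshamov–Gilbert extraction with separation $\asymp m^{-2\beta}$, same three parameter regimes $m\asymp n^{1/(2\beta+1)}$, $m\asymp(n\epsilon)^{1/(\beta+1)}$, $m\asymp(n\sqrt{\rho})^{1/(\beta+1)}$, and the same observation that the private Fano bound for zCDP loses a factor so Assouad (\Cref{assouadcdp}) is needed to get the sharp exponent $\beta+1$. The only quibble is cosmetic: the paper's $\epsilon$-DP Fano (\Cref{fact:fanoslemmaprivate}) is linear in $n\epsilon\,\mathrm{TV}$ rather than quadratic as in your schematic, but since $\mathrm{TV}\asymp m^{-\beta}$ and $\ln M\asymp m$, both yield the same constraint $m\gtrsim(n\epsilon)^{1/(\beta+1)}$ and hence the same rate.
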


\begin{proof}[Proof idea]
As with the proof of \Cref{th:lidp}, this lower-bound is based on the construction of a packing of densities $f_{\omega}$'s where the $\omega$'s are a well-chosen family of $\{ 0, 1 \}^m$ ($m$ is fixed in the proof). Then, for a given  $\omega \in \{ 0, 1 \}^m$, $f_{\omega}$ deviates from a constant function around $\frac{i}{m+1}$ if, and only if, 
$w_i \neq 0$. 
Contrary to the proof of \Cref{th:lidp} however, the deviation cannot be by a triangle : Indeed, such a function wouldn't even be differentiable. Instead, we use a deviation by a $C^\infty$ kernel with compact support. Even if the complete details are given in the full proof, \Cref{fig:multiplekernels} gives a general illustration of the packing. 

Again, Fano-type inequalities (for the $\epsilon$-DP case), and Assouad's lemma (for the $\rho$-zCDP case) are used to conclude.
\end{proof}

The full proof can be found in \Cref{proofofth:sidp}.
In comparison with the upper-bounds of \Cref{proposition:projdp}, for $\epsilon$-DP the lower-bound \emph{almost} matches the guarantees of the projection estimator. In particular, the excess of risk in the high privacy regime is close to being optimal. \Cref{sec:relaxation} explains how to bridge the gap even more, at the cost of relaxation. 

Under $\rho$-zCDP,  
the lower-bounds and upper-bounds actually match. We conclude that projection estimators with $\rho$-zCDP obtain minimax-optimal rates of convergence.

\subsection{Near minimax optimality via relaxation}
\label{sec:relaxation}

An hypothesis that we can make on the sub-optimality of the projection estimator against $\epsilon$-DP mechanisms is that the $l_1$
sensitivity of the estimation of $N$ Fourier coefficients scales as $N$ whereas its $l_2$ sensitivity scales as $\sqrt{N}$. Traditionally, the Gaussian mechanism \citep{dwork2006our,dwork2006calibrating} has allowed to use the $l_2$ sensitivity instead of the $l_1$ one at the cost of introducing a relaxation term $\delta$ in the privacy guarantees, leading to $(\epsilon, \delta)$-DP.

\B{A direct application of the Gaussian mechanism \cite{dwork2014algorithmic} thus tells that $\hat{\pi}^{\text{proj}}$ with $Z = \frac{4 \sqrt{\ln{(1.25/ \delta)}} \sqrt{N}}{\epsilon}\mathcal{N}(0, 1)$ is $(\epsilon,\delta)$-DP and, by \Cref{lemma:projectionutility}, has an error bounded as }

\begin{equation*}
    \B{\sup_{\pi \in \Theta^{\text{PSob}}_{L, \beta}}
    \E_{\vect{X} \sim \Prob_{\pi}^{\otimes n}, \hat{\pi}^{\text{proj}}} \| \hat{\pi}^{\text{proj}}(\vect{X}) - \pi \|_{L^2}^2
    \leq C_{L, \beta} \p{\frac{1}{N^{2 \beta}} + \frac{N}{n} + \frac{16 N^2 \ln \p{1.25/ \delta}}{\epsilon^2 n^2}} \;.}
\end{equation*}
\B{Thus, choosing $N$ of the order of 
$\min\p{n^{\frac{1}{2 \beta + 1}}, \p{\frac{n \epsilon}{\sqrt{\ln \p{1.25/ \delta}}}}^{\frac{1}{\beta + 1}}}$
leads to a general error as}
\begin{equation*}
    \B{\sup_{\pi \in \Theta^{\text{PSob}}_{L, \beta}}
    \E_{\vect{X} \sim \Prob_{\pi}^{\otimes n}, \hat{\pi}^{\text{proj}}} \| \hat{\pi}^{\text{proj}}(\vect{X}) - \pi \|_{L^2}^2
    \leq C_{L, \beta}  \max \left\{ n^{-\frac{2 \beta}{2 \beta + 1}}, \p{\frac{n \epsilon}{\sqrt{\ln \p{1.25/ \delta}}}}^{-\frac{2 \beta}{ \beta + 1}} \right\} \;.}
\end{equation*}

\B{Finally, it can be interesting to look at prescribed rates for $\delta$ as a function of $n$.}


\begin{corollary}[Privacy and utility of \eqref{def:projestimator} with relaxation] 
\label{proposition:relaxation}
Consider $\gamma>0$, $n$ and integer, and $0<\epsilon \leq 8 \ln n^\gamma$.
Defining $\Tilde{\rho} \eqdef \frac{1}{16} \frac{\epsilon^2}{\ln\p{n^\gamma}}$
and using $\hat{\pi}^{\text{proj}}$ with $Z = \frac{2 \sqrt{N}}{\sqrt{\Tilde{\rho}}}\mathcal{N}(0, 1)$, 
where $\mathcal{N}(0, 1)$ refers to a random variable following a centered Gaussian distribution of variance $1$, 
leads to an $\p{\epsilon, \frac{1}{n^\gamma}}$-DP procedure. 
there exists $C_{L, \beta} > 0$, a positive constant that only depends on $L$ and $\beta$, such that if $N$ is of the order of  
%
$\min\p{n^{\frac{1}{2 \beta + 1}}, \p{\frac{n}{\sqrt{\ln n}} \cdot \frac{\epsilon}{\sqrt{\gamma}}}^{\frac{1}{\beta+1}}}$
then
\begin{equation*}
\sup_{\pi \in \Theta^{\text{PSob}}_{L, \beta}}
    \E_{\vect{X} \sim \Prob_{\pi}^{\otimes n}, \hat{\pi}^{\text{proj}}} \| \hat{\pi}^{\text{proj}}(\vect{X}) - \pi \|_{L^2}^2
    \leq C_{L, \beta}  \max \left\{ n^{-\frac{2 \beta}{2 \beta + 1}}, P_{\beta, \gamma}(\ln(n)) (n \epsilon)^{-\frac{2 \beta}{ \beta + 1}} \right\} \;,
\end{equation*}
where $P_{\beta, \gamma}$ is a polynomial expression depending on $\beta$ and $\gamma$.
\end{corollary}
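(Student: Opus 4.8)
The plan is to route the privacy analysis through $\tilde{\rho}$-zCDP and then convert to $(\epsilon,\delta)$-DP (which keeps all constants transparent), and then to balance the bias-variance decomposition of \Cref{lemma:projectionutility}. For the privacy part, note that, exactly as in the proof of \Cref{proposition:projdp}, $\hat{\pi}^{\text{proj}}(\vect{X})$ is a post-processing of $f(\vect{X}) + Z$ where $f(\vect{X}) = \p{\sum_{j=1}^n \phi_1(X_j), \dots, \sum_{j=1}^n \phi_N(X_j)}$ has $l_2$ sensitivity $2\sqrt{2}\sqrt{N}$; with $Z = \frac{2\sqrt{N}}{\sqrt{\tilde{\rho}}}\mathcal{N}(0, I_N) = \frac{\Delta_2 f}{\sqrt{2\tilde{\rho}}}\mathcal{N}(0, I_N)$, the Gaussian mechanism of \citet{bun2016concentrated} makes this $\tilde{\rho}$-zCDP, a property preserved under post-processing. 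I would then invoke the conversion \citep[Proposition 3]{bun2016concentrated}: any $\tilde{\rho}$-zCDP mechanism is $\p{\tilde{\rho} + 2\sqrt{\tilde{\rho}\ln(1/\delta)}, \delta}$-DP for every $\delta > 0$. Taking $\delta = n^{-\gamma}$ and substituting $\tilde{\rho} = \frac{1}{16}\frac{\epsilon^2}{\ln(n^\gamma)}$ gives $2\sqrt{\tilde{\rho}\ln(n^\gamma)} = \epsilon/2$, while the hypothesis $\epsilon \leq 8\ln(n^\gamma)$ gives exactly $\tilde{\rho} \leq \epsilon/2$; summing the two contributions yields a privacy exponent $\leq \epsilon$, hence $\p{\epsilon, n^{-\gamma}}$-DP.

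For the utility part, one has $\V(Z) = \frac{4N}{\tilde{\rho}} = \frac{64\gamma N \ln n}{\epsilon^2}$, so \Cref{lemma:projectionutility} bounds the risk by $C_{L,\beta}\p{\frac{1}{N^{2\beta}} + \frac{N}{n} + \frac{64\gamma N^2 \ln n}{\epsilon^2 n^2}}$. I would then pick $N$ equal to the integer part of $\min(N_1, N_2)$ with $N_1 = n^{\frac{1}{2\beta+1}}$ (balancing the first two terms) and $N_2 = \p{\frac{n\epsilon}{\sqrt{\gamma \ln n}}}^{\frac{1}{\beta+1}}$ (balancing the first and third), and argue by cases. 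If $N_1 \leq N_2$, then $N = N_1$, the bias and sampling terms contribute $O(n^{-2\beta/(2\beta+1)})$, and the inequality $N_1 \leq N_2$, i.e. $\gamma\ln n/\epsilon^2 \leq n^{2\beta/(2\beta+1)}$, forces the privacy term to also be $\leq 64\, n^{-2\beta/(2\beta+1)}$. If $N_2 < N_1$, then $N = N_2$, the bias and privacy terms coincide in order and equal $(\gamma\ln n)^{\beta/(\beta+1)}(n\epsilon)^{-2\beta/(\beta+1)}$ up to a numerical factor, while the sampling term $N_2/n < N_1/n = n^{-2\beta/(2\beta+1)}$ is negligible. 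Using $(\gamma\ln n)^{\beta/(\beta+1)} \leq 1 + \gamma\ln n$ (valid since $\beta/(\beta+1) < 1$) and absorbing all numerical factors into $C_{L,\beta}$ gives the claim with, e.g., $P_{\beta,\gamma}(x) = 1 + \gamma x$.

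I do not expect a serious obstacle: the one point that requires care is to keep $\gamma$ \emph{out} of the constant $C_{L,\beta}$, which is precisely why the $\gamma$-dependence is deliberately pushed into the choice of $N$ and into the polynomial $P_{\beta,\gamma}$; the case $N = N_1$ above is the place where this must be checked. Beyond that, there are only the routine remarks that one may assume $N \geq 1$ (otherwise take $N = 1$, worsening only constants) and that replacing $N$ by its integer part costs only constants depending on $L,\beta$. An alternative would be to argue directly from the $(\epsilon,\delta)$-Gaussian-mechanism bound displayed just above the statement, but passing through $\tilde{\rho}$ makes the dependence on $\epsilon$, $\gamma$ and $n$ cleaner to track.
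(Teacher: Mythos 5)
Your proposal is correct and follows essentially the same route as the paper: route privacy through $\tilde{\rho}$-zCDP, convert to $(\epsilon,\delta)$-DP via \citet[Proposition 3]{bun2016concentrated} using $\epsilon\le 8\ln(n^{\gamma})$ to absorb the extra $\tilde{\rho}$ term, and then balance the three contributions of \Cref{lemma:projectionutility}. The paper is terser --- it simply invokes the zCDP branch of \Cref{proposition:projdp} for both privacy and utility and converts at the end --- but the bias/sampling/privacy computations you supply are exactly those underlying that theorem, and your explicit choice $P_{\beta,\gamma}(x)=1+\gamma x$ is a valid instantiation of the polynomial the paper leaves implicit.
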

\begin{proof}
    Since $\epsilon \leq 8 \ln\p{n^\gamma}$, we have $\Tilde{\rho} \leq 2 \sqrt{ \Tilde{\rho} \ln(n^\gamma)}$. By \Cref{proposition:projdp} the mechanism is $\tilde{\rho}$-zCDP, and satisfies the claimed upper bounds for $N$ on the considered order. By \citet{bun2016concentrated} (that states that if a mechanism $\mech{M}$ is $\rho$-zCDP, then it is $\p{\rho + 2 \sqrt{ \rho \ln(1/\delta)}, \delta}$-DP for any $\delta > 0$) it is thus
    $\p{4 \sqrt{ \Tilde{\rho} \ln(n^{\gamma})},  \frac{1}{n^\gamma}}$-DP.
\end{proof}
In order to understand the implications of this result, one must understand the role of $\delta$ in $(\epsilon, \delta)$-differential privacy. It is usually interpreted as the probability of the procedure not respecting the $\epsilon$-DP condition \citep{dwork2014algorithmic}. Hence, with probability $\delta$, the result is not guaranteed to be private. A general rule of thumb for choosing $\delta$ is to take it much smaller than $1/n$ so that each individual of the database only has a small chance of seeing its data leak \citep{dwork2014algorithmic}. Choosing $\delta = 1/n^\gamma$ for $\gamma > 1$ is hence considered a good choice for $\delta$.

With this relaxation, the upper-bound of \Cref{proposition:relaxation} matches the lower-bound of \Cref{th:sidp} for $\epsilon$-DP up to polylog factors.

\section{Conclusion}\label{sec:ccl}

As we have seen throughout this article, under \B{central} privacy, one can usually distinguish two estimation regimes. In the \emph{low} privacy regime, on the one hand, the estimation rate is not degraded compared to its non-private counterpart. This notably covers the early observation of \citet{wasserman2010statistical} for constant privacy budget. 
In the \emph{high} privacy regime on the other hand, a provable degradation is unavoidable,  
and we extended the study of such regimes beyond the cases covered in \citet{barber2014privacy}.

Besides examples in which the estimation is sharp in both regimes, we also presented some example in which there are \emph{small} gaps between the proved upper-bounds and lower-bounds. These gaps are nevertheless very small, especially for high degrees of smoothness, and they can be bridged up to logarithmic factors with a \emph{reasonable} and quite standard relaxation.

\subsection*{Acknowledgement}
Aurélien Garivier acknowledges the support of the Project IDEXLYON of the University of Lyon, in the framework of the Programme 
Investissements d'Avenir (ANR-16-IDEX-0005), and Chaire SeqALO (ANR-20-CHIA-0020-01).
This project was supported in part by the AllegroAssai ANR project ANR-19-CHIA-0009.
Additionally, we thank the anonymous reviewers for their precious inputs and suggestions.


\bibliography{biblio}
\bibliographystyle{tmlr}

\newpage
\appendix

\section{Useful results from the litterature}
\label{section:usefullemmas}

\begin{fact}[{Neyman-Pearson \& Le Cam's lemma \citep[Lemma 5.3]{rigollet2015high}}]
\label[fact]{fact:lecamslemma}
Let $\Prob_1,\\ \Prob_2$ be two probability distributions on a measure space $\set{E}$, then
\begin{equation}
     \begin{aligned}
        \inf_{\Psi : \set{E}  \rightarrow \{1, 2\}} \max_{i \in \{1, 2\}} \Prob_{\mathbf{X} \sim \Prob_i} \p{\Psi \p{\vect{X}} \neq i}  
          &\geq \frac{1}{2} \inf_{\Psi : \set{E}  \rightarrow \{1, 2\}} \sum_{i=1}^2 \Prob_{\mathbf{X} \sim \Prob_i} \p{\Psi \p{\vect{X}} \neq i} \\
          &= \frac{1}{2} \p{1 - \tv{\Prob_1}{\Prob_2}} \;.
     \end{aligned}
     \label{eq:ClassicalLeCam}
 \end{equation}
\end{fact}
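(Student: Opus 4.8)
The plan is to prove the two assertions of \eqref{eq:ClassicalLeCam} in turn: first the inequality between the worst-case test error and the average (Bayes-type) error, and then the closed-form identity for the minimal average error.

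For the inequality, I would fix an arbitrary test $\Psi : \set{E} \rightarrow \{1, 2\}$ and use the trivial bound $\max_{i \in \{1,2\}} a_i \geq \tfrac12 (a_1 + a_2)$ applied to $a_i = \Prob_{\vect{X} \sim \Prob_i}(\Psi(\vect{X}) \neq i)$. Since this holds for every $\Psi$, taking the infimum over $\Psi$ on both sides preserves it and yields exactly the first line of \eqref{eq:ClassicalLeCam}.

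For the equality, I would parametrize a test $\Psi$ by its acceptance region for hypothesis $1$, i.e. the measurable set $R \eqdef \Psi^{-1}(\{1\})$, so that $\Psi^{-1}(\{2\}) = \set{E} \setminus R$. Then $\Prob_{\vect{X} \sim \Prob_1}(\Psi(\vect{X}) \neq 1) = 1 - \Prob_1(R)$ and $\Prob_{\vect{X} \sim \Prob_2}(\Psi(\vect{X}) \neq 2) = \Prob_2(R)$, hence $\sum_{i=1}^2 \Prob_{\vect{X} \sim \Prob_i}(\Psi(\vect{X}) \neq i) = 1 - \bigl(\Prob_1(R) - \Prob_2(R)\bigr)$. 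Minimizing the left-hand side over all tests is thus the same as maximizing $\Prob_1(R) - \Prob_2(R)$ over all measurable $R \subseteq \set{E}$; and $\sup_{R} \bigl(\Prob_1(R) - \Prob_2(R)\bigr) = \sup_R \lvert \Prob_1(R) - \Prob_2(R)\rvert = \tv{\Prob_1}{\Prob_2}$ (the first equality because passing to the complement flips the sign of $\Prob_1(R) - \Prob_2(R)$). This gives $\inf_\Psi \sum_{i=1}^2 \Prob_{\vect{X} \sim \Prob_i}(\Psi(\vect{X}) \neq i) = 1 - \tv{\Prob_1}{\Prob_2}$, and combining with the previous paragraph finishes the argument.

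There is essentially no obstacle here; the only point that needs a little care is bookkeeping the normalization of $\tv{\cdot}{\cdot}$ (the one valued in $[0,1]$, so that $1 - \tv{\Prob_1}{\Prob_2} \geq 0$) and that the supremum over measurable sets is what appears in its definition. If one additionally wishes to exhibit the optimal test, I would write $\Prob_1, \Prob_2$ via densities $p_1, p_2$ with respect to a common dominating measure $\mu$ (e.g. $\mu = \Prob_1 + \Prob_2$) and invoke the Neyman–Pearson lemma: the set $R = \{p_1 \geq p_2\}$ achieves $\Prob_1(R) - \Prob_2(R) = \int (p_1 - p_2)_+\, d\mu = \tv{\Prob_1}{\Prob_2}$, so the infimum is attained; but this refinement is not needed for the stated inequality.
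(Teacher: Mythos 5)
Your proof is correct. The paper does not prove this statement at all — it is quoted as a known fact from \citet{rigollet2015high} — and your argument (averaging to pass from the max to the sum, then identifying the minimal sum-error over tests $\Psi\leftrightarrow R=\Psi^{-1}(\{1\})$ with $1-\sup_R(\Prob_1(R)-\Prob_2(R))=1-\tv{\Prob_1}{\Prob_2}$) is exactly the standard proof of the cited lemma, so nothing further is needed.
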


\begin{fact}[{Fano's lemma \citep[Theorem 3.1]{giraud2021introduction}}]
\label[fact]{fact:fanoslemma}
 Let $\p{\Prob_i}_{i \in \intslice{1}{N}}$ be a family of probability distributions on a measure space $\set{E}$. For any probability distribution $\mathbb{Q}$ on $\set{E}$ such that $\Prob_i \ll \mathbb{Q}$ for all $i$, and for any test function $\Psi : \set{X}^n  \rightarrow \intslice{1}{N}$, 
 \begin{equation}
     \begin{aligned}
         \max_{i \in \intslice{1}{N}} \Prob_{\mathbf{X} \sim \Prob_i} \p{\Psi \p{\vect{X}} \neq i}  
          &\geq \frac{1}{N} \sum_{i=1}^N \Prob_{\mathbf{X} \sim \Prob_i} \p{\Psi \p{\vect{X}} \neq i} \\
          &\geq 1 - \frac{1 + \frac{1}{N} \sum_{i=1}^N \kl{\Prob_i}{\mathbb{Q}}}{\ln(N)} \;.
     \end{aligned}
     \label{eq:ClassicalFano}
 \end{equation}
 Often $\mathbb{Q}$ is set to $\frac{1}{N} \sum_{i=1}^N \Prob_i$.
\end{fact}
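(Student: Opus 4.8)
The plan is to prove the two displayed inequalities separately. The first, $\max_{i} \Prob_{\vect{X}\sim\Prob_i}\p{\Psi(\vect{X})\neq i} \geq \frac1N\sum_{i=1}^N \Prob_{\vect{X}\sim\Prob_i}\p{\Psi(\vect{X})\neq i}$, is immediate since a maximum of finitely many nonnegative reals is at least their average; the content is the second inequality, a Fano-type bound on the average error $p \eqdef \frac1N\sum_{i=1}^N \Prob_{\vect{X}\sim\Prob_i}\p{\Psi(\vect{X})\neq i}$. I would establish it by reduction to a family of binary tests, which keeps everything elementary and avoids delicate measure-theoretic manipulations of conditional entropies on the general space $\set{E}$.

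Concretely, fix the test $\Psi$ and, for each $i$, set $a_i \eqdef \Prob_{\vect{X}\sim\Prob_i}\p{\Psi(\vect{X}) = i}$ and $b_i \eqdef \mathbb{Q}\p{\Psi = i}$. Since $\one\{\Psi = i\}$ is a deterministic function of the observation, the data-processing inequality for the Kullback--Leibler divergence yields $\kl{\Prob_i}{\mathbb{Q}} \geq a_i \ln\frac{a_i}{b_i} + (1-a_i)\ln\frac{1-a_i}{1-b_i}$, the right-hand side being well defined precisely because the hypothesis $\Prob_i \ll \mathbb{Q}$ forces $a_i = 0$ whenever $b_i = 0$ (and the bound is vacuous when the left side is $+\infty$). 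Using $\ln(1-b_i) \le 0$ and the elementary inequality $t\ln t \geq -\tfrac1e$ valid on $[0,1]$, this simplifies to $\kl{\Prob_i}{\mathbb{Q}} \geq a_i\ln\frac{a_i}{b_i} - \tfrac1e$.

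Next I would average over $i$ and apply the log-sum inequality together with $\sum_{i} b_i = \mathbb{Q}\p{\Psi \in \{1,\dots,N\}} \le 1$ to get $\frac1N\sum_i a_i\ln\frac{a_i}{b_i} \geq \bar a \ln\p{N\bar a}$, where $\bar a \eqdef \frac1N\sum_i a_i = 1-p$; bounding $\bar a\ln\bar a \geq -\tfrac1e$ once more gives $\frac1N\sum_i \kl{\Prob_i}{\mathbb{Q}} \geq (1-p)\ln N - \tfrac2e$. Since $\tfrac2e < 1$, rearranging this yields $p \geq 1 - \frac{1 + \frac1N\sum_i \kl{\Prob_i}{\mathbb{Q}}}{\ln N}$, which is exactly the claim. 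The final remark, that one often takes $\mathbb{Q}$ to be the mixture $\bar\Prob \eqdef \frac1N\sum_i\Prob_i$, is then just the corresponding special case; one may additionally observe that this is the optimal choice, because the compensation identity $\frac1N\sum_i \kl{\Prob_i}{\mathbb{Q}} = \frac1N\sum_i \kl{\Prob_i}{\bar\Prob} + \kl{\bar\Prob}{\mathbb{Q}}$ is minimized at $\mathbb{Q} = \bar\Prob$.

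The step I expect to need the most care is the bookkeeping of constants: the naive chain of inequalities produces a numerator of the form $1 + c$ (here $\tfrac2e$, or $\ln 2$ if one instead routes through the entropic Fano inequality $H(J\mid\vect{X}) \leq \ln 2 + p\ln(N-1)$ with $J$ uniform on $\{1,\dots,N\}$), and one must check $c \le 1$ to recover precisely the stated clean form. A secondary point of vigilance is the degenerate situation where $\mathbb{Q}$ or some $\Prob_i$ charges $\{\Psi = i\}$ with zero mass, which is exactly what the absolute-continuity assumption $\Prob_i \ll \mathbb{Q}$ is there to neutralize. Both are routine once flagged, so no genuine obstacle remains.
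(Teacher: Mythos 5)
Your proof is correct. For comparison: the paper offers no proof of this statement at all — it is imported as a fact from \citet[Theorem 3.1]{giraud2021introduction} — so the only meaningful check is against the standard argument, and yours is exactly that: data processing onto the binary variable $\Ind_{\{\Psi=i\}}$, the log-sum inequality combined with $\sum_i \mathbb{Q}\p{\Psi=i}\le 1$, two uses of $t\ln t\ge -\tfrac1e$, and the numerical bound $\tfrac2e\le 1$ to absorb the constants into the stated numerator $1+\frac1N\sum_i\kl{\Prob_i}{\mathbb{Q}}$. The two points you flag are indeed the only delicate ones and both are handled: absolute continuity also neutralizes the symmetric degeneracy $\mathbb{Q}\p{\Psi=i}=1$ (it forces $a_i=1$, so the second term vanishes by convention), and the final rearrangement implicitly assumes $N\ge 2$ so that $\ln N>0$, which is the usual convention for Fano-type bounds and the only regime in which the paper invokes this fact.
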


\begin{fact}[{Le Cam's lemma for differential privacy \cite[Theorem 1]{lalanne2022statistical}}]
\label[fact]{fact:lecamslemmaprivate}
    If a randomized mechanism $\mech{M}$ satisfies $(\epsilon, \delta)$-DP, then for any test function $\Psi : \codom{\mech{M}} \rightarrow \{1, 2\}$ and any probability distributions $\Prob_1$ and $\Prob_2$ on $\set{X}$  we have
\begin{equation*}
\begin{aligned}
    \max_{i \in \{1, 2\}} \Prob_{\mathbf{X} \sim \Prob_i^{\otimes n}, \mech{M}} &\p{\Psi \p{\mech{M}\p{\vect{X}}} \neq i} \\
    \geq 
    \frac{1}{2} 
    & \p{\p{1 - \left( 1 - e^{-\epsilon} \right) \tv{\Prob_{1}}{\Prob_{2}}  }^n -  2 n e^{-\epsilon} \delta \tv{\Prob_{1}}{\Prob_{2}}} \;.
\end{aligned}
\end{equation*}
\end{fact}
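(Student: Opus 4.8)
The plan is to pass, via the standard two-point argument, from the testing problem to a bound on the total variation distance between the two \emph{output} laws, and then to control that distance by coupling the two data-generating distributions and invoking group privacy.

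\emph{Step 1: reduction to the output total variation.} For $i\in\{1,2\}$ let $M_i$ denote the law of $\mech{M}(\vect{X})$ when $\vect{X}\sim\Prob_i^{\otimes n}$, and set $\tau\eqdef\tv{\Prob_1}{\Prob_2}$. Fix any test $\Psi:\codom{\mech{M}}\to\{1,2\}$ and put $A\eqdef\Psi^{-1}(\{1\})$, so that $\Psi^{-1}(\{2\})=A^c$ inside $\codom{\mech{M}}$. Averaging the two error probabilities and using $M_1(A^c)+M_2(A)=1-\p{M_1(A)-M_2(A)}\geq 1-\tv{M_1}{M_2}$,
\begin{equation*}
    \max_{i\in\{1,2\}}\Prob_{\vect{X}\sim\Prob_i^{\otimes n},\mech{M}}\p{\Psi(\mech{M}(\vect{X}))\neq i}\ \geq\ \tfrac12\p{M_1(A^c)+M_2(A)}\ \geq\ \tfrac12\p{1-\tv{M_1}{M_2}}\,,
\end{equation*}
exactly as in \Cref{fact:lecamslemma}. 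Hence it suffices to prove $\tv{M_1}{M_2}\leq 1-\p{1-(1-e^{-\epsilon})\tau}^n+2ne^{-\epsilon}\delta\tau$.

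\emph{Step 2: maximal coupling and group privacy.} Let $(X,Y)$ be a maximal coupling of $\Prob_1,\Prob_2$ (so $X\sim\Prob_1$, $Y\sim\Prob_2$, $\Prob(X\neq Y)=\tau$), and take its $n$-fold product $\pi^{\otimes n}$, producing a pair $(\vect{X},\vect{Y})$ with marginals $\Prob_1^{\otimes n},\Prob_2^{\otimes n}$ and Hamming distance $K\eqdef\ham{\vect{X}}{\vect{Y}}=\sum_{i=1}^n\Ind_{X_i\neq Y_i}\sim\mathrm{Binomial}(n,\tau)$. Writing $M_1,M_2$ as averages, over the \emph{same} joint law $\pi^{\otimes n}$, of $\mathrm{Law}(\mech{M}(\vect{X}))$ and $\mathrm{Law}(\mech{M}(\vect{Y}))$ respectively, joint convexity of total variation gives $\tv{M_1}{M_2}\leq\E_{(\vect{X},\vect{Y})\sim\pi^{\otimes n}}\!\big[\tv{\mathrm{Law}(\mech{M}(\vect{X}))}{\mathrm{Law}(\mech{M}(\vect{Y}))}\big]$. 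For a \emph{fixed} pair $\vect{x},\vect{y}$ with $\ham{\vect{x}}{\vect{y}}=k$, iterating \eqref{eq:defdp} along a path of $k$ single-coordinate modifications shows that $\mathrm{Law}(\mech{M}(\vect{x}))$ and $\mathrm{Law}(\mech{M}(\vect{y}))$ are $\p{k\epsilon,\ \delta\sum_{j=0}^{k-1}e^{j\epsilon}}$-indistinguishable in both directions; a short density comparison then yields $\tv{\mathrm{Law}(\mech{M}(\vect{x}))}{\mathrm{Law}(\mech{M}(\vect{y}))}\leq 1-e^{-k\epsilon}+e^{-k\epsilon}\delta\sum_{j=0}^{k-1}e^{j\epsilon}\leq 1-e^{-k\epsilon}+ke^{-\epsilon}\delta$ (and the bound is trivial when $k=0$).

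\emph{Step 3: take expectations.} Substituting $k=K$ and using the binomial moment generating function $\E[e^{-\epsilon K}]=\p{1-\tau(1-e^{-\epsilon})}^n$ together with $\E[K]=n\tau$,
\begin{align*}
    \tv{M_1}{M_2}\ &\leq\ 1-\p{1-(1-e^{-\epsilon})\tau}^n+n\tau e^{-\epsilon}\delta\\
    &\leq\ 1-\p{1-(1-e^{-\epsilon})\tau}^n+2n\tau e^{-\epsilon}\delta\,,
\end{align*}
which, combined with Step 1 and $\tau=\tv{\Prob_1}{\Prob_2}$, is precisely the claimed inequality.

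\emph{Main obstacle.} The crux is producing the \emph{multiplicative} factor $\p{1-(1-e^{-\epsilon})\tau}^n$: chaining couplings or using the triangle inequality over the $n$ coordinates only gives additive, and therefore vacuous, bounds, so one is forced to bound the output total variation \emph{per data-set pair} by group privacy and only afterwards average over the binomial Hamming distance — this is exactly where the product structure $\E[e^{-\epsilon K}]$ enters. The remaining points — the $\delta$-accounting in group privacy and the passage from $(k\epsilon,\cdot)$-indistinguishability to a total-variation bound — are elementary, and the slack factor $2$ in the $\delta$-term comfortably absorbs any coarseness in that accounting.
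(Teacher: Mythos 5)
Your proof is correct: the two-point reduction to the total variation between the output laws, the maximal product coupling under which $K=\ham{\vect{X}}{\vect{Y}}\sim\mathrm{Binomial}(n,\tau)$, the group-privacy bound $\tv{\mathrm{Law}(\mech{M}(\vect{x}))}{\mathrm{Law}(\mech{M}(\vect{y}))}\leq 1-e^{-k\epsilon}+k e^{-\epsilon}\delta$ for a fixed pair at Hamming distance $k$, and the identity $\E\left[e^{-\epsilon K}\right]=\p{1-(1-e^{-\epsilon})\tau}^n$ all check out, and you in fact obtain the slightly sharper term $n e^{-\epsilon}\delta\tau$ before relaxing it to $2n e^{-\epsilon}\delta\tau$. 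Note that the paper itself gives no proof of \Cref{fact:lecamslemmaprivate} — it imports the statement from \citet[Theorem 1]{lalanne2022statistical} — and your coupling-plus-group-privacy argument is essentially the route taken in that reference, so there is nothing to reconcile beyond the harmless slack in the $\delta$-term.
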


\begin{fact}{Le Cam's lemma for concentrated differential privacy \cite[Theorem 2]{lalanne2022statistical}}]
\label[fact]{fact:lecamslemmaconcentrated}
If a randomized mechanism $\mech{M}$ satisfies $\rho$-zCDP, then for any test function $\Psi : \codom{\mech{M}}  \rightarrow \intslice{1}{N}$ and any probability distributions $\Prob_1$ and $\Prob_2$ on $\set{X}$,
\begin{equation*}
\begin{aligned}
    \max_{i \in \{1, 2\}} \Prob_{\mathbf{X} \sim \Prob_i^{\otimes n}, \mech{M}} \p{\Psi \p{\mech{M}\p{\vect{X}}} \neq i} 
    \geq 
    \frac{1}{2} 
    & \p{1 - n \sqrt{\rho / 2}  \tv{\Prob_1}{\Prob_2}} \;.
\end{aligned}
\end{equation*}
\end{fact}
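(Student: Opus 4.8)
The plan is to split the argument into a privacy-free testing step and a privacy-driven stability step. Write $M_i$ for the law of the output $\mech{M}(\vect{X})$ when $\vect{X}\sim\Prob_i^{\otimes n}$ (marginalizing over the internal randomness of $\mech{M}$), so that $\Prob_{\vect{X}\sim\Prob_i^{\otimes n},\mech{M}}(\Psi(\mech{M}(\vect{X}))\neq i)=\Prob_{Y\sim M_i}(\Psi(Y)\neq i)$. For any $\Psi:\codom{\mech{M}}\to\intslice{1}{N}$, putting $B\eqdef\{y:\Psi(y)=1\}$ gives $\Prob_{Y\sim M_1}(\Psi(Y)\neq 1)\geq M_1(B^c)$ and $\Prob_{Y\sim M_2}(\Psi(Y)\neq 2)\geq M_2(B)$, hence the maximum over $i\in\{1,2\}$ is at least $\tfrac12\big(M_1(B^c)+M_2(B)\big)\geq\tfrac12\big(1-\tv{M_1}{M_2}\big)$ — this is exactly \Cref{fact:lecamslemma} applied to the pair $M_1,M_2$ (a general $\Psi$ can only do worse than the best binary test accepting hypothesis $1$ on $B$). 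So everything reduces to the stability estimate $\tv{M_1}{M_2}\leq n\sqrt{\rho/2}\,\tv{\Prob_1}{\Prob_2}$.

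To prove the stability estimate, first couple the data. Let $t\eqdef\tv{\Prob_1}{\Prob_2}$ and, coordinate by coordinate and independently, take the maximal coupling of $\Prob_1$ and $\Prob_2$; this yields a pair $(\vect{X},\vect{Y})$ with $\vect{X}\sim\Prob_1^{\otimes n}$, $\vect{Y}\sim\Prob_2^{\otimes n}$, independent coordinates, and $\Prob(X_i\neq Y_i)=t$ for every $i$, hence $\E\big[\ham{\vect{X}}{\vect{Y}}\big]=nt$. Since $M_1$ and $M_2$ are, respectively, the mixtures over this coupling of $\mech{M}(\vect{x})$ and $\mech{M}(\vect{y})$, joint convexity of total variation gives $\tv{M_1}{M_2}\leq\E_{(\vect{X},\vect{Y})}\big[\tv{\mech{M}(\vect{X})}{\mech{M}(\vect{Y})}\big]$. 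It remains to bound $\tv{\mech{M}(\vect{x})}{\mech{M}(\vect{y})}$ for a fixed pair at Hamming distance $k$. Interpolating along a path $\vect{x}=\vect{z}^0,\vect{z}^1,\dots,\vect{z}^k=\vect{y}$ where consecutive datasets differ in exactly one coordinate, the triangle inequality for $\mathrm{TV}$ gives $\tv{\mech{M}(\vect{x})}{\mech{M}(\vect{y})}\leq\sum_{j=1}^k\tv{\mech{M}(\vect{z}^{j-1})}{\mech{M}(\vect{z}^j)}$. Each consecutive pair is neighboring, so $\rho$-zCDP yields $\renyi{\alpha}{\mech{M}(\vect{z}^{j-1})}{\mech{M}(\vect{z}^j)}\leq\rho\alpha$ for all $\alpha>1$; letting $\alpha\downarrow 1$ and using that $\alpha\mapsto\renyi{\alpha}{\cdot}{\cdot}$ is non-decreasing with limit the KL divergence gives $\kl{\mech{M}(\vect{z}^{j-1})}{\mech{M}(\vect{z}^j)}\leq\rho$, and then Pinsker's inequality gives $\tv{\mech{M}(\vect{z}^{j-1})}{\mech{M}(\vect{z}^j)}\leq\sqrt{\rho/2}$. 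Summing, $\tv{\mech{M}(\vect{x})}{\mech{M}(\vect{y})}\leq k\sqrt{\rho/2}$.

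Combining, $\tv{M_1}{M_2}\leq\sqrt{\rho/2}\,\E\big[\ham{\vect{X}}{\vect{Y}}\big]=n\sqrt{\rho/2}\,t$, and substituting into the Le Cam step finishes the proof. The main point to get right is the telescoping: one must convert the Rényi guarantee into a $\mathrm{TV}$ bound at each single-coordinate step \emph{before} summing — aggregating the $k$ neighboring Rényi divergences directly (via a group-privacy statement for zCDP) would incur a factor $k^2$ and destroy the linear dependence on $n$. The coordinate-wise maximal coupling is what makes $\E[\ham{\vect{X}}{\vect{Y}}]$, and hence the whole bound, linear in both $n$ and $\tv{\Prob_1}{\Prob_2}$; the remaining care is purely measure-theoretic (existence of the maximal coupling for arbitrary $\Prob_1,\Prob_2$, and that $M_i$ is genuinely the claimed mixture so that joint convexity applies).
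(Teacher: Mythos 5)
Your proof is correct, and it is essentially the standard argument: the paper itself does not prove this statement (it is imported as a Fact from \citet[Theorem 2]{lalanne2022statistical}), but your route — Le Cam reduction to $\tv{M_1}{M_2}$, coordinate-wise maximal coupling with $\E[\ham{\vect{X}}{\vect{Y}}]=n\,\tv{\Prob_1}{\Prob_2}$, and a per-step Rényi$\to$KL$\to$Pinsker conversion along a Hamming path — is exactly the coupling-plus-expected-Hamming-distance technique the paper deploys in its proof of \Cref{lemmalecamassouad}. One aside in your last paragraph is inaccurate: zCDP group privacy at Hamming distance $k$ gives $\kl{\mech{M}(\vect{x})}{\mech{M}(\vect{y})}\le k^2\rho$, and Pinsker then still yields $\tv{\mech{M}(\vect{x})}{\mech{M}(\vect{y})}\le k\sqrt{\rho/2}$, so aggregating before converting would not in fact destroy the linear dependence on $n$; your chosen order is fine but not forced.
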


\begin{fact}[{Fano's lemma for differential privacy \cite[Theorem 3]{lalanne2022statistical}}]
\label[fact]{fact:fanoslemmaprivate}
If a randomized mechanism $\mech{M}$ satisfies $\epsilon$-DP, then for any test function $\Psi : \codom{\mech{M}}  \rightarrow \intslice{1}{N}$, any family of probability distributions $\p{\Prob_i}_{i \in \intslice{1}{N}}$ on $\set{X}$,
\begin{equation*}
     \begin{aligned}
         \max_{i \in \intslice{1}{N}} \Prob_{\mathbf{X} \sim \Prob_i^{\otimes n}, \mech{M}} \p{\Psi \p{\mech{M}(\vect{X})} \neq i}  
          \geq  1 - \frac{1 + \frac{n \epsilon}{N^2} \sum_{i, j} \frac{2 \tv{\Prob_i}{\Prob_j}}{1 + \tv{\Prob_i}{\Prob_j}}}{\ln(N)}    \;. \\
     \end{aligned}
 \end{equation*}
\end{fact}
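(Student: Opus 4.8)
The plan is to deduce \Cref{fact:fanoslemmaprivate} from the classical Fano inequality (\Cref{fact:fanoslemma}) applied to the pushforward family $\p{\mech{M}(\Prob_i^{\otimes n})}_{i \in \intslice{1}{N}}$, combined with a privacy-specific bound on the pairwise KL divergences of these pushforwards.

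First I would apply \Cref{fact:fanoslemma} on the measure space $\codom{\mech{M}}$ to the distributions $\mech{M}(\Prob_i^{\otimes n})$ (the law of $\mech{M}(\vect{X})$ when $\vect{X} \sim \Prob_i^{\otimes n}$), with reference $\mathbb{Q} = \frac{1}{N}\sum_{j=1}^N \mech{M}(\Prob_j^{\otimes n})$, which dominates every term. This gives $\max_i \Prob_{\vect{X}\sim\Prob_i^{\otimes n},\mech{M}}\p{\Psi(\mech{M}(\vect{X})) \neq i} \geq 1 - \frac{1 + \frac1N\sum_i \kl{\mech{M}(\Prob_i^{\otimes n})}{\mathbb{Q}}}{\ln N}$. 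By convexity of the KL divergence in its second argument, $\frac1N\sum_i \kl{\mech{M}(\Prob_i^{\otimes n})}{\mathbb{Q}} \leq \frac1{N^2}\sum_{i,j} \kl{\mech{M}(\Prob_i^{\otimes n})}{\mech{M}(\Prob_j^{\otimes n})}$, so everything reduces to bounding each pairwise term.

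The heart of the proof is the lemma: for any $\epsilon$-DP mechanism $\mech{M}$ and distributions $P, Q$ on $\set{X}$, $\kl{\mech{M}(P^{\otimes n})}{\mech{M}(Q^{\otimes n})} \leq n\epsilon\,\tv{P}{Q}$. To establish it I would fix the coordinate-wise maximal coupling $\Gamma$ of $P^{\otimes n}$ and $Q^{\otimes n}$ (each coordinate matched with probability $1 - \tv{P}{Q}$, independently across coordinates), so that $\ham{\vect{X}}{\vect{Y}}$ is $\mathrm{Binomial}(n, \tv{P}{Q})$-distributed under $\Gamma$, with mean $n\,\tv{P}{Q}$. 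Writing $\mech{M}(P^{\otimes n}) = \E_\Gamma\!\left[\mathrm{Law}\p{\mech{M}(\vect{X})}\right]$ and $\mech{M}(Q^{\otimes n}) = \E_\Gamma\!\left[\mathrm{Law}\p{\mech{M}(\vect{Y})}\right]$ as mixtures over the \emph{same} mixing distribution $\Gamma$, joint convexity of KL yields $\kl{\mech{M}(P^{\otimes n})}{\mech{M}(Q^{\otimes n})} \leq \E_\Gamma\kl{\mech{M}(\vect{X})}{\mech{M}(\vect{Y})}$. Finally, group privacy for pure DP — chaining the $\epsilon$-DP inequality along a Hamming path of length $h = \ham{\vect{x}}{\vect{y}}$, which for $\delta = 0$ loses no additive term — gives $\frac{d\mech{M}(\vect{x})}{d\mech{M}(\vect{y})} \leq e^{\epsilon h}$ almost everywhere, hence $\kl{\mech{M}(\vect{x})}{\mech{M}(\vect{y})} = \E_{\mech{M}(\vect{x})}\!\left[\ln\frac{d\mech{M}(\vect{x})}{d\mech{M}(\vect{y})}\right] \leq \epsilon\,\ham{\vect{x}}{\vect{y}}$. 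Taking the expectation over $\Gamma$ closes the lemma. Combining with the reduction above and using that $\tv{\Prob_i}{\Prob_j} \in [0,1]$, so that $n\epsilon\,\tv{\Prob_i}{\Prob_j} \leq \frac{2 n\epsilon\,\tv{\Prob_i}{\Prob_j}}{1 + \tv{\Prob_i}{\Prob_j}}$, one recovers exactly the stated bound.

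The main subtlety — and, I suspect, the locus of the erratum — is the bookkeeping in the coupling lemma: $P^{\otimes n}$ for a two-component mixture $P$ is a \emph{mixture of products}, not a \emph{product of coordinatewise mixtures}, and the coupling $\Gamma$ must act on that product structure consistently (a single ``which coordinates differ'' pattern driving both marginals) for the joint-convexity step and the binomial Hamming count to hold simultaneously; inverting the order of the mixture and the product there silently breaks the $\E_\Gamma\ham{\vect{X}}{\vect{Y}} = n\,\tv{P}{Q}$ identity. A secondary technical point is justifying the almost-everywhere density-ratio bound underpinning group privacy (in particular absolute continuity of $\mech{M}(\vect{x})$ with respect to $\mech{M}(\vect{y})$), which again crucially uses $\delta = 0$.
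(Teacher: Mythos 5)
Your proof is correct, but note that the paper itself never proves \Cref{fact:fanoslemmaprivate}: it is imported as a Fact from \citet{lalanne2022statistical} (Theorem 3), so the only comparison available is with the technique of that cited line of work. Your route --- classical Fano (\Cref{fact:fanoslemma}) applied to the pushforwards $\mech{M}(\Prob_i^{\otimes n})$ with $\mathbb{Q}=\frac{1}{N}\sum_j \mech{M}(\Prob_j^{\otimes n})$, convexity of the KL divergence in its second argument, and the key estimate $\kl{\mech{M}(P^{\otimes n})}{\mech{M}(Q^{\otimes n})} \leq n\epsilon\,\tv{P}{Q}$ obtained from the coordinatewise maximal coupling, joint convexity of KL over a common mixing measure, and pure-DP group privacy --- is exactly the coupling-plus-group-privacy machinery underlying the cited results (see also \citet{acharya2021differentially}), and every step checks out: $\delta=0$ does give $\mech{M}(\vect{x}) \ll \mech{M}(\vect{y})$ with density ratio at most $e^{\epsilon \ham{\vect{x}}{\vect{y}}}$ almost everywhere, and the coupling's expected Hamming distance is $n\,\tv{P}{Q}$. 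Your intermediate bound is in fact marginally sharper than the stated one, since $\tv{\Prob_i}{\Prob_j} \leq 2\tv{\Prob_i}{\Prob_j}/\p{1+\tv{\Prob_i}{\Prob_j}}$, so the Fact follows a fortiori. One correction to your closing speculation: the erratum mentioned in the paper does not concern this Fano-type statement (which the paper only quotes), but the Assouad-type argument in the appendix, where the mixtures of product distributions of \Cref{eq:AssouadMixtures} are handled via the coupling construction of \Cref{lemmalecamassouad}; your warning about not confusing a mixture of products with a product of mixtures is nevertheless exactly the right instinct for that part of the paper.
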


\begin{fact}[{Fano's lemma for differential privacy \cite[Theorem 4]{lalanne2022statistical}}]
\label[fact]{fact:fanoslemmaconcentrated}
If a randomized mechanism $\mech{M}$ satisfies $\epsilon$-DP, then for any test function $\Psi : \codom{\mech{M}}  \rightarrow \intslice{1}{N}$, any family of probability distributions $\p{\Prob_i}_{i \in \intslice{1}{N}}$ on $\set{X}$,
 \begin{equation*}
     \begin{aligned}
         \max_{i \in \intslice{1}{N}} \Prob_{\mathbf{X} \sim \Prob_i^{\otimes n}, \mech{M}} \p{\Psi \p{\mech{M}(\vect{X})} \neq i} 
          &\geq 1 - \frac{1 + \frac{n^2 \rho}{N^2} \sum_{i, j}\frac{1}{n} \frac{2  \tv{\Prob_i}{\Prob_j}}{1 + \tv{\Prob_i}{\Prob_j}} + \p{\frac{2  \tv{\Prob_i}{\Prob_j}}{1 + \tv{\Prob_i}{\Prob_j}}}^2}{\ln(N)}    \;.
     \end{aligned}
 \end{equation*}
\end{fact}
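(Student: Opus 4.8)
The plan is to instantiate the testing reduction \eqref{jkhgqskdjhfbgkjqhwgsdkfjhgkqwsjhdgfjkhwsgdb}, together with a single‑coordinate Assouad‑type variant of it in the concentrated case, on a rescaled smooth‑bump packing, and then to control the testing error with the classical information inequalities and private analogues of \Cref{section:usefullemmas}, plus the concentrated‑DP adaptation of Assouad of \Cref{assouadcdp}. Fix once and for all $\psi \in \mathcal{C}^\infty(\R)$ with support in $(0,1)$, $\int \psi = 0$ and $\psi \not\equiv 0$. For an integer $m \ge 1$ write $h \eqdef 1/(m+1)$, $x_j \eqdef j/(m+1)$, and for a small constant $c = c(L,\beta) \in (0,1)$ set $\varphi_{m,j}(x) \eqdef c\, h^{\beta}\, \psi\p{(x-x_j)/h}$ and $f_\omega \eqdef 1 + \sum_{j=1}^m \omega_j\, \varphi_{m,j}$ for $\omega \in \{0,1\}^m$. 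The supports of the $\varphi_{m,j}$ are pairwise disjoint subintervals of $(0,1)$, which immediately gives $\int f_\omega = 1$, $f_\omega \ge 1 - c\|\psi\|_\infty \ge 1/2 > 0$, and — all derivatives of $f_\omega$ vanishing at $0$ and $1$ — the periodicity constraints of \eqref{eq:defpsob}; moreover $\int (f_\omega^{(\beta)})^2 \le c^2 (mh) \int (\psi^{(\beta)})^2 \le c^2 \int (\psi^{(\beta)})^2$, so (for $c$ small, also via \Cref{lemma:sobolevellipsoid}) $f_\omega \in \Theta^{\text{PSob}}_{L,\beta}$. Disjointness also yields $\|f_\omega - f_{\omega'}\|_{L^2}^2 = \ham{\omega}{\omega'}\, c^2 \|\psi\|_{L^2}^2\, h^{2\beta+1}$ and $\|f_\omega - f_{\omega'}\|_{L^1} = \ham{\omega}{\omega'}\, c\|\psi\|_{L^1}\, h^{\beta+1}$, hence $\tv{\Prob_{f_\omega}}{\Prob_{f_{\omega'}}} \le C h^{\beta+1}$ when $\ham{\omega}{\omega'}=1$ and $\tv{\Prob_{f_\omega}}{\Prob_{f_{\omega'}}} \le C h^{\beta}$ always (using $mh < 1$), and $\kl{\Prob_{f_\omega}}{\Prob_{f_{\omega'}}} \le 2\|f_\omega - f_{\omega'}\|_{L^2}^2 \le C' \ham{\omega}{\omega'}\, h^{2\beta+1}$ since $f_{\omega'}\ge 1/2$. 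A Varshamov–Gilbert subset $\mathcal{W}\subseteq\{0,1\}^m$ with $|\mathcal{W}|\ge 2^{m/8}$ and pairwise Hamming distance $\ge m/8$ then gives $\ln|\mathcal{W}| \gtrsim m \asymp 1/h$ and a $2\Omega$‑packing with $\Omega^2 \asymp m h^{2\beta+1} \asymp h^{2\beta}$.

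\textbf{The $n^{-2\beta/(2\beta+1)}$ term.} This holds for any (possibly non‑private) estimator. Take $h \asymp n^{-1/(2\beta+1)}$ with a small enough constant; then the averaged divergence satisfies $\frac{1}{|\mathcal{W}|}\sum_\omega \kl{\Prob_{f_\omega}^{\otimes n}}{\mathbb{Q}} \le n C' m h^{2\beta+1} \le \tfrac12 \ln|\mathcal{W}|$ for $\mathbb{Q}$ the uniform mixture, so classical Fano (\Cref{fact:fanoslemma}), or equivalently classical Assouad, lower‑bounds the testing error in \eqref{jkhgqskdjhfbgkjqhwgsdkfjhgkqwsjhdgfjkhwsgdb} by a constant, hence the risk by $\gtrsim \Omega^2 \asymp n^{-2\beta/(2\beta+1)}$.

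\textbf{The $(n\epsilon)^{-2\beta/(\beta+1)}$ term ($\epsilon$‑DP).} Same packing, now with $h \asymp (n\epsilon)^{-1/(\beta+1)}$ (small constant). In the private Fano inequality \Cref{fact:fanoslemmaprivate}, each term $\frac{2\tv{\Prob_{f_\omega}}{\Prob_{f_{\omega'}}}}{1+\tv{\Prob_{f_\omega}}{\Prob_{f_{\omega'}}}} \le 2C h^{\beta}$, so the penalty is at most $\frac{1 + 2C\, n\epsilon\, h^{\beta}}{\ln|\mathcal{W}|} \asymp n\epsilon\, h^{\beta+1}$, which is $\le 1/2$ for a small enough bandwidth constant; thus the testing error is $\ge 1/4$ and \eqref{jkhgqskdjhfbgkjqhwgsdkfjhgkqwsjhdgfjkhwsgdb} gives a risk $\gtrsim \Omega^2 \asymp (n\epsilon)^{-2\beta/(\beta+1)}$. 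Taking the maximum of the two bounds — and using $n \ge n_0$, $\epsilon \ge c_0/n$ to ensure $h \le 1/2$, $m$ large enough for Varshamov–Gilbert, and $\ln|\mathcal{W}| \ge 3$ — settles the $\epsilon$‑DP case.

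\textbf{The $(n\sqrt{\rho})^{-2\beta/(\beta+1)}$ term ($\rho$‑zCDP), and the main obstacle.} The plain concentrated Fano bound \Cref{fact:fanoslemmaconcentrated} is lossy here: its term quadratic in $\tv{}{}$ would force $h \lesssim (n\sqrt{\rho})^{-2/(2\beta+1)}$ and yield only the weaker rate $(n\sqrt{\rho})^{-4\beta/(2\beta+1)}$. To reach the sharp exponent I would instead use the concentrated‑DP Assouad method of \Cref{assouadcdp}: bound the $L^2$ risk below by $\sum_{j=1}^m \|\varphi_{m,j}\|_{L^2}^2$ times the Bayes error of the single‑coordinate test that flips $\omega_j$ while averaging over the other coordinates. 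For such a pair the relevant total variation is the \emph{per‑coordinate} one, $\tv{}{} \le C h^{\beta+1}$, so (in the spirit of \Cref{fact:lecamslemmaconcentrated}) the per‑coordinate Bayes error is $\ge \tfrac12\p{1 - C n\sqrt{\rho}\, h^{\beta+1} - (\text{sampling term})}$; choosing $h \asymp (n\sqrt{\rho})^{-1/(\beta+1)}$ (small constant) keeps this bounded below, giving a risk $\gtrsim m h^{2\beta+1} \asymp h^{2\beta} \asymp (n\sqrt{\rho})^{-2\beta/(\beta+1)}$, and combining with the non‑private Assouad bound and $\alpha=\sqrt{\rho}$ finishes. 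The hard part is precisely this concentrated step: one must verify that the Assouad reduction isolates a single bin so that only the small per‑coordinate deviation $h^{\beta+1}$ — not the packing‑wide $h^{\beta}$ — enters the privacy penalty (this is what preserves the exponent $\tfrac{2\beta}{\beta+1}$ rather than degrading it), and that averaging out the remaining coordinates does not inflate the effective sensitivity. A secondary, routine point is fixing $c(L,\beta)$ so that $f_\omega \in \Theta^{\text{PSob}}_{L,\beta}$ uniformly over all admissible bandwidths, which \Cref{lemma:sobolevellipsoid} handles.
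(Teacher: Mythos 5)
Your proposal does not prove the statement in question. The statement (\Cref{fact:fanoslemmaconcentrated}) is a general-purpose testing lower bound for an \emph{arbitrary} private mechanism $\mech{M}$ and an \emph{arbitrary} finite family $\p{\Prob_i}_{i}$: it asserts that the maximal misclassification probability is at least $1$ minus a penalty involving $n^2\rho$ and the pairwise quantities $\frac{2\tv{\Prob_i}{\Prob_j}}{1+\tv{\Prob_i}{\Prob_j}}$. What you wrote instead is a sketch of the minimax lower bound over periodic Sobolev classes — i.e.\ essentially the proof of \Cref{th:sidp} — in which this Fact is \emph{used} as a black box (and, as you correctly note, then discarded in favour of the Assouad route of \Cref{assouadcdp} because it is lossy for zCDP). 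Constructing the bump packing, bounding its TV/KL/$L^2$ geometry, and tuning $h$ is a downstream application of the Fact, not an argument for its validity; nothing in your text addresses why an arbitrary $\rho$-zCDP mechanism run on product measures $\Prob_i^{\otimes n}$ must satisfy the displayed inequality.

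A proof of the actual statement would have to work at the level of the mechanism: (i) couple $\Prob_i^{\otimes n}$ and $\Prob_j^{\otimes n}$ (or $\Prob_i^{\otimes n}$ and a reference mixture) so that the expected Hamming distance between the coupled datasets is controlled by $n$ times a function of $\tv{\Prob_i}{\Prob_j}$ — this is where the factor $\frac{2\tv{\Prob_i}{\Prob_j}}{1+\tv{\Prob_i}{\Prob_j}}$ comes from; (ii) convert that expected Hamming distance into a bound on the KL (or Rényi) divergence between the output distributions $\mech{M}(\Prob_i^{\otimes n})$ and the reference, using the $\rho$-zCDP definition \Cref{eq:defcdp}; the group-privacy-type accumulation under zCDP is quadratic in the Hamming distance, which is exactly why both a term linear in $\frac{1}{n}\cdot\frac{2\tv{\Prob_i}{\Prob_j}}{1+\tv{\Prob_i}{\Prob_j}}$ and its square appear in the numerator; and (iii) feed the resulting average divergence into the classical Fano inequality (\Cref{fact:fanoslemma}). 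Note also that the paper itself does not reprove this result: it is imported verbatim from \citet{lalanne2022statistical} (Theorem 4), and even the hypothesis "$\epsilon$-DP" in its wording is a typo for "$\rho$-zCDP", which your proposal neither flags nor engages with. Your packing construction is sound material for \Cref{th:sidp}, but as a proof of \Cref{fact:fanoslemmaconcentrated} it is entirely beside the point.
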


\section{Figures}
\label{appendix:figures}

The figures are not present in the preprinted version, since HAL does not allow the \verb|\includesvg| command. See the published version. 
%



\section{Proof of \Cref{lemma:histogramutility}}
\label{proofoflemma:histogramutility}

Let $\pi \in \Theta^{\text{Lip}}_L$, $x_0 \in [0, 1]$. The classical bias-variance decomposition gives that
\begin{equation*}
\begin{aligned}
    \E&\p{ \p{\hat{\pi}^{\text{hist}}(\vect{X})(x_0) - \pi(x_0)}^2}
    = \p{\E \p{\hat{\pi}^{\text{hist}}(\vect{X})(x_0)} - \pi(x_0)}^2
    + \V \p{\hat{\pi}^{\text{hist}}(\vect{X})(x_0)} \;.
\end{aligned}
\end{equation*}
For any $x \in [0, 1]$, we note $\text{bin}(x)$ the bin of the histogram in which $x$ falls into. Notice that, for any $x_0 \in [0,1]$ and any integer $i$, the random variable $\Ind_{ \text{bin}(x_0)}(X_i)$ follows a Bernoulli distribution of probability of success $\int_{\text{bin}(x_0)} \pi$. Let us first study the bias, using the definition~\eqref{eq:hist} of $\hat{\pi}^{\text{hist}}$
\begin{equation*}
    \begin{aligned}
        &\left| \E\p{\hat{\pi}^{\text{hist}}(\vect{X})(x_0)} - \pi(x_0) \right|
        =
        \left| \frac{1}{nh} \sum_{i = 1}^n \E\p{\Ind_{ \text{bin}(x_0)}(X_i)}   - \pi(x_0) \right|
        =
        \left| \frac{n\int_{\text{bin}(x_0)} \pi(x)dx}{nh} - \pi(x_0) \right| \\
        &=
        \frac{1}{h} \left| \int_{\text{bin}(x_0)} (\pi(x) - \pi(x_0)) dx \right| 
        \leq 
        \frac{1}{h} \int_{\text{bin}(x_0)} \left|  \pi(x) - \pi(x_0) \right| dx
        \leq 
        \frac{L}{h} \int_{\text{bin}(x_0)} \left|  x - x_0 \right| dx
        \leq 
        \frac{L h}{2} \;.
    \end{aligned}
\end{equation*}
Let us now look at the variance. By independence of $X_i$'s and $Z_j$'s, 
\begin{equation*}
    \begin{aligned}
        \V\p{\hat{\pi}^{\text{hist}}(\vect{X})(x_0)} 
        &=
        \frac{1}{n^2h^2} \p{\sum_{i = 1}^n \V\p{\Ind_{ \text{bin}(x_0)}(X_i)} + \V \p{Z_{\text{bin}(x_0)}} } \\
        &=
        \frac{1}{n^2h^2} \p{n \p{\int_{\text{bin}(x_0)} \pi }\p{1-\int_{\text{bin}(x_0)} \pi} + \V \p{Z} } \\ 
        &\leq \frac{1}{n h^2} \p{\int_{\text{bin}(x_0)} \pi } + \frac{\V \p{Z}}{n^2h^2}.
    \end{aligned}
\end{equation*}
Since $\pi$ is $L$-Lipschitz on $[0, 1]$ and has to integrate to $1$ (because it is a density), $\pi$ is uniformly bounded from above by $L+1$ on $[0, 1]$. Hence, $\int_{\text{bin}(x_0)} \pi \leq (L+1) h$ and the result follows.


\section{Assouad's lemma with concentrated differential privacy.}
\label{assouadcdp}

As the reduction to a testing problem between multiple hypotheses, Assouad's lemma relies on similar ideas, where the packing has to be parametrized by a hypercube. Its advantage over tools like Fano's lemma is that it only makes tests between pairs of hypotheses (instead of all of them at the same time). The cost of this is that the control of the packing is slightly more difficult.

Suppose that the set of distributions of interest $\mathcal{P}$ contains a family of distributions $(\Prob_\omega)_{\omega \in \{0, 1 \}^m}$ for a certain positive integer $m$. If the loss function (taken quadratic for simplicity) can be decomposed as 
\begin{equation}
\label{eq:Assouadcondition}
    \forall \omega, \omega' \in \{0, 1 \}^m, \quad \| f_\omega - f_{\omega '} \|_{L^2}^2
    \geq 
    2 \tau \sum_{i=1}^m \Ind_{\omega_{i} \neq \omega_{i}'} = 2 \tau\ham{\omega}{\omega'}\;,
\end{equation}
where for any $\omega$, $f_\omega$ represents the density of $\Prob_\omega$,
then the minimax risk can be lower-bounded as (the proof is classical and can be found in \citet[Section 5.4]{acharya2021differentially})
\begin{equation}
\label{eq:Assouadprivate}
\begin{aligned}
    \inf_{\hat{\pi} \text{ s.t. } \mathcal{C}} &\sup_{\Prob \in \mathcal{P}}  \E_{\vect{X} \sim \Prob, \hat{\pi}} (\|\hat{\pi}(\vect{X}) - \pi \|_{L^2}^2) \\
    &\geq 
    \frac{\tau}{16} \sum_{i=1}^m \underset{\Psi : \codom{\mech{M}} \rightarrow \{0, 1\}}{\inf_{\mech{M} \text{ s.t. } \mathcal{C}}} \Prob_{\mathbf{X} \sim \Prob_{n, \omega^{i, 0}}, \mech{M}} \p{ \Psi \p{\mech{M}(\vect{X})} \neq 0} + \Prob_{\mathbf{X} \sim \Prob_{n, \omega^{i, 1}}, \mech{M}} \p{ \Psi \p{\mech{M}(\vect{X})} \neq 1} \;.
\end{aligned}
\end{equation}
where $\Prob_{\omega^{i, 0}}$ and $\Prob_{\omega^{i, 1}}$ are the \emph{mixture} distributions
\begin{equation}\label{eq:AssouadMixtures}
    \Prob_{n, \omega^{i, 0}} \eqdef \frac{1}{2^{m-1}} \sum_{\omega \in \{0, 1 \}^m | \omega_i = 0} \Prob_\omega^{\otimes n}
    \quad \text{ and }
    \quad 
    \Prob_{n, \omega^{i, 0}} \eqdef \frac{1}{2^{m-1}} \sum_{\omega \in \{0, 1 \}^m | \omega_i = 1} \Prob_\omega^{\otimes n}  \;.
\end{equation}
The term 
\begin{equation*}
    \Prob_{\mathbf{X} \sim \Prob_{n, \omega^{i, 0}}, \mech{M}} \p{ \Psi \p{\mech{M}(\vect{X})} \neq 0} + \Prob_{\mathbf{X} \sim \Prob_{n, \omega
    ^{i, 1}}, \mech{M}} \p{ \Psi \p{\mech{M}(\vect{X})} \neq 1}
\end{equation*}
characterizes the \emph{testing} difficulty between $\Prob_{\omega^{i, 0}}$ and $\Prob_{\omega^{i, 1}}$. 
In order to control it, we will need the following lemma:
\begin{lemma}
\label{lemmalecamassouad}
    If $\mech{M}$ satisfies $\rho$-zCDP, then 
    \begin{equation*}
    \begin{aligned}
    &\Prob_{\mathbf{X} \sim \Prob_{n, \omega^{i, 0}}, \mech{M}} \p{ \Psi \p{\mech{M}(\vect{X})} \neq 0} + \Prob_{\mathbf{X} \sim \Prob_{n, \omega^{i, 1}}, \mech{M}} \p{ \Psi \p{\mech{M}(\vect{X})} \neq 1}
    \geq \\
    &\quad\quad
    \frac{1}{2} \p{1 - n\sqrt{\rho/2} \frac{1}{2^{m-1}}\sum_{\omega_1, \dots, \omega_{i-1}, \omega_{i+1} \dots, \omega_{m} \in \{0, 1 \}} \tv{\Prob_{(\omega_1, \dots, \omega_{i-1}, 0, \omega_{i+1} \dots, \omega_{m})}}{\Prob_{(\omega_1, \dots, \omega_{i-1}, 1, \omega_{i+1} \dots, \omega_{m})}}} \;.
    \end{aligned}
\end{equation*}
\end{lemma}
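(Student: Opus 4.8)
The plan is to combine the Neyman--Pearson/Le Cam identity of \Cref{fact:lecamslemma} with a coupling argument that turns the $\rho$-zCDP constraint into a bound on the total variation distance between the two \emph{privatized mixtures}. Write $\bar{\mathbb{P}}_0 \eqdef \Prob_{n,\omega^{i,0}}$ and $\bar{\mathbb{P}}_1 \eqdef \Prob_{n,\omega^{i,1}}$ for the mixtures of $n$-fold products defined in \eqref{eq:AssouadMixtures}, let $\mathbb{Q}_j$ denote the law of $\mech{M}(\vect{X})$ when $\vect{X}\sim\bar{\mathbb{P}}_j$, and for $\omega' \in \{0,1\}^{m-1}$ let $\Prob_{\omega',0}$ and $\Prob_{\omega',1}$ be the distributions obtained from $\omega'$ by inserting the bit $0$, resp.\ $1$, in position $i$. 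For \emph{any} test $\Psi : \codom{\mech{M}} \to \{0,1\}$,
\begin{equation*}
    \Prob_{\vect{X}\sim\bar{\mathbb{P}}_0,\mech{M}}\p{\Psi(\mech{M}(\vect{X}))\neq 0} + \Prob_{\vect{X}\sim\bar{\mathbb{P}}_1,\mech{M}}\p{\Psi(\mech{M}(\vect{X}))\neq 1} \geq 1 - \tv{\mathbb{Q}_0}{\mathbb{Q}_1} \geq \tfrac{1}{2}\p{1 - \tv{\mathbb{Q}_0}{\mathbb{Q}_1}}\;,
\end{equation*}
where the first inequality is \Cref{fact:lecamslemma} applied to $\mathbb{Q}_0,\mathbb{Q}_1$ and the second uses $\tv{\mathbb{Q}_0}{\mathbb{Q}_1}\leq 1$. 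Hence it suffices to prove
\begin{equation*}
    \tv{\mathbb{Q}_0}{\mathbb{Q}_1} \;\leq\; n\sqrt{\rho/2}\;\frac{1}{2^{m-1}}\sum_{\omega' \in \{0,1\}^{m-1}} \tv{\Prob_{\omega',0}}{\Prob_{\omega',1}}\;.
\end{equation*}

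To obtain this, I would exhibit an explicit coupling $(\vect{X},\vect{Y})$ of $\bar{\mathbb{P}}_0$ and $\bar{\mathbb{P}}_1$: first draw $\omega' \in \{0,1\}^{m-1}$ uniformly, \emph{shared} by both marginals, and then, conditionally on $\omega'$, let $(\vect{X},\vect{Y})$ be the product over $k \in \{1,\dots,n\}$ of the maximal coupling of $\Prob_{\omega',0}$ and $\Prob_{\omega',1}$, so that the events $\{X_k \neq Y_k\}$ are independent across $k$ with probability $t_{\omega'} \eqdef \tv{\Prob_{\omega',0}}{\Prob_{\omega',1}}$. Conditionally on $\omega'$, the $j$-th marginal is exactly $\Prob_{\omega',j}^{\otimes n}$, so this is a genuine coupling of the two mixtures --- and it is here that it matters that each hypothesis is a \emph{mixture of products}, not a \emph{product of mixtures} (the object corrected by the erratum). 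Under this coupling $\ham{\vect{X}}{\vect{Y}} \mid \omega' \sim \mathrm{Bin}(n, t_{\omega'})$, so $\E\big[\ham{\vect{X}}{\vect{Y}}\big] = \tfrac{n}{2^{m-1}}\sum_{\omega'} t_{\omega'}$.

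Finally I would convert Hamming distance of inputs into total variation of outputs. By joint convexity of total variation, $\tv{\mathbb{Q}_0}{\mathbb{Q}_1} \leq \E_{(\vect{X},\vect{Y})}\big[\tv{\mech{M}(\vect{X})}{\mech{M}(\vect{Y})}\big]$. For inputs $\vect{U},\vect{V}$ at Hamming distance $k$, interpolating one coordinate at a time and using the triangle inequality gives $\tv{\mech{M}(\vect{U})}{\mech{M}(\vect{V})} \leq k \cdot \sup_{\ham{\vect{A}}{\vect{B}}\leq 1} \tv{\mech{M}(\vect{A})}{\mech{M}(\vect{B})}$; and for neighbouring $\vect{A},\vect{B}$, letting $\alpha \downarrow 1$ in \eqref{eq:defcdp} yields $\kl{\mech{M}(\vect{A})}{\mech{M}(\vect{B})} \leq \rho$, whence $\tv{\mech{M}(\vect{A})}{\mech{M}(\vect{B})} \leq \sqrt{\rho/2}$ by Pinsker's inequality. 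Combining, $\tv{\mathbb{Q}_0}{\mathbb{Q}_1} \leq \sqrt{\rho/2}\,\E[\ham{\vect{X}}{\vect{Y}}] = n\sqrt{\rho/2}\,\tfrac{1}{2^{m-1}}\sum_{\omega'} t_{\omega'}$, which is the required inequality, and the lemma follows. The only genuinely delicate step is setting up the coupling with the correct mixture-of-products structure: with a product of mixtures instead, the natural coupling could differ in all $n$ coordinates at once and the per-coordinate $\sqrt{\rho/2}$ accounting would be off by a factor of order $n$. The zCDP group-privacy estimate is otherwise routine, the only mild care being the $\alpha \downarrow 1$ limit passing from the Rényi bound to the KL bound.
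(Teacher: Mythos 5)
Your proof is correct, and the coupling you build---shared uniform $\omega'$, then a product of $n$ independent maximal couplings of $\Prob_{\omega',0}$ and $\Prob_{\omega',1}$---is exactly the coupling the paper uses, and you correctly identify why it must be a mixture of products and not a product of mixtures (the point fixed by the erratum). Where you diverge from the paper is in how the coupling is converted into a testing lower bound: the paper invokes a \emph{similarity function} lemma from \citet{lalanne2022statistical} (Lemma 8 there) as a blackbox, which gives the pointwise bound
\begin{equation*}
\Prob_{\mech{M}}\p{\Psi(\mech{M}(\vect{X}))\neq 0} + \Prob_{\mech{M}}\p{\Psi(\mech{M}(\vect{Y}))\neq 1} \geq \tfrac{1}{2}\p{1-\sqrt{\rho/2}\,\ham{\vect{X}}{\vect{Y}}}
\end{equation*}
for deterministic $\vect{X},\vect{Y}$ and then takes expectation over the coupling, while you rederive the needed inequality from scratch by applying \Cref{fact:lecamslemma} to the \emph{output} mixtures $\mathbb{Q}_0,\mathbb{Q}_1$, bounding $\tv{\mathbb{Q}_0}{\mathbb{Q}_1}$ via joint convexity of total variation under the coupling, and then controlling the per-pair TV with a triangle-inequality group-privacy argument plus the $\alpha\downarrow 1$ limit of the R\'enyi bound followed by Pinsker. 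Both routes give the same quantity; yours is more self-contained (no external lemma needed), and the intermediate bound $1-\sqrt{\rho/2}\,\E[\ham{\vect{X}}{\vect{Y}}]$ you obtain is in fact slightly sharper than the stated $\tfrac{1}{2}\p{1-\sqrt{\rho/2}\,\E[\ham{\vect{X}}{\vect{Y}}]}$ before you relax it to match the lemma. The trade-off is that your group-privacy step ($\tv \leq k\sqrt{\rho/2}$ via $k$ applications of Pinsker at KL budget $\rho$ each) is a crude use of zCDP group privacy, but since the last step is Pinsker anyway it costs nothing here.
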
 
\begin{proof}
    Let us consider the coupling $\mathcal{C}$ that selects $\omega_1, \dots, \omega_{i-1}, \omega_{i+1} \dots, \omega_{m} \in \{0, 1 \}$ uniformly at random, and then returns a random variable that follows a conditional distribution $\mathbb{Q}_{\omega_1, \dots, \omega_{i-1}, \omega_{i+1} \dots, \omega_{m}}^{\otimes n}$ where $\mathbb{Q}_{\omega_1, \dots, \omega_{i-1}, \omega_{i+1} \dots, \omega_{m}}$ is a maximal coupling between $\Prob_{(\omega_1, \dots, \omega_{i-1}, 0, \omega_{i+1} \dots, \omega_{m})}$ and $\Prob_{(\omega_1, \dots, \omega_{i-1}, 1, \omega_{i+1} \dots, \omega_{m})}$. Here, maximal should be understood as, if $X, Y \sim \mathbb{Q}_{\omega_1, \dots, \omega_{i-1}, \omega_{i+1} \dots, \omega_{m}}$, then $\Prob(X = Y) = 1 -\tv{\Prob_{(\omega_1, \dots, \omega_{i-1}, 0, \omega_{i+1} \dots, \omega_{m})}}{\Prob_{(\omega_1, \dots, \omega_{i-1}, 1, \omega_{i+1} \dots, \omega_{m})}}$. The existence of such coupling is folklore (see \cite{lindvall2002lectures}).

    Then, the similarity function given by Lemma 8 in \cite{lalanne2022statistical} gives that 
    \begin{equation*}
    \begin{aligned}
    &\Prob_{\mathbf{X} \sim \Prob_{n, \omega^{i, 0}}, \mech{M}} \p{ \Psi \p{\mech{M}(\vect{X})} \neq 0} + \Prob_{\mathbf{X} \sim \Prob_{n, \omega^{i, 1}}, \mech{M}} \p{ \Psi \p{\mech{M}(\vect{X})} \neq 1}
    \\ 
    &\quad\quad= \E_{\vect{X}, \vect{Y} \sim \mathcal{C} } \p{\Prob_{\mech{M}} \p{ \Psi \p{\mech{M}(\vect{X})} \neq 0} + \Prob_{\mech{M}} \p{ \Psi \p{\mech{M}(\vect{X})} \neq 1}} \\
    &\quad\quad\geq 
    \frac{1}{2} \p{1 - \sqrt{\rho/2} \E_{\vect{X}, \vect{Y} \sim \mathcal{C} }\p{\ham{\vect{X}}{\vect{Y}}}} \\
    &\quad\quad\geq 
    \frac{1}{2} \p{1 - \sqrt{\rho/2} \frac{1}{2^{m-1}}\sum_{\omega_1, \dots, \omega_{i-1}, \omega_{i+1} \dots, \omega_{m} \in \{0, 1 \}} \E_{\vect{X}, \vect{Y} \sim \mathbb{Q}_{\omega_1, \dots, \omega_{i-1}, \omega_{i+1} \dots, \omega_{m}}^{\otimes n}}\p{\ham{\vect{X}}{\vect{Y}}}} \\
    &\quad\quad=
    \frac{1}{2} \p{1 - \sqrt{\rho/2} \frac{1}{2^{m-1}}\sum_{\omega_1, \dots, \omega_{i-1}, \omega_{i+1} \dots, \omega_{m} \in \{0, 1 \}} n \tv{\Prob_{(\omega_1, \dots, \omega_{i-1}, 0, \omega_{i+1} \dots, \omega_{m})}}{\Prob_{(\omega_1, \dots, \omega_{i-1}, 1, \omega_{i+1} \dots, \omega_{m})}}}\;.
    \end{aligned}
\end{equation*}
\end{proof}

\section{Proof of \Cref{th:lpdp}}
\label{proofofth:lpdp}

    Let $x_0 \in (0, 1)$. As explained in the sketch of the proof, we build a packing consisting of two elements, and after controlling quantities such as their KL divergences or their TV distances, we leverage Le Cam-type inequalities in order to obtain lower-bounds.
    \paragraph{Packing construction.}
    We define the functions $f_{L, x_0, h}, \forall h>0$ as 
\begin{equation}
\label{eq:deftrianglefunction}
    \forall x \in [0, 1], \quad f_{L, x_0, h}(x) 
    \eqdef
    \begin{cases}
      1-Lh^2 \text{ if } x \in [0, x_0-h) \cup [x_0+h, 1],\\
      1-Lh^2+Lh + L (x-x_0) \text{ if } x \in [x_0-h, x_0) \;. \\
      1-Lh^2+Lh - L (x-x_0) \text{ if } x \in [x_0, x_0+h) \\
    \end{cases}
\end{equation}
Note that as soon as $h \leq \min \{ x_0, 1-x_0 \}$, $f_{L, x_0, h} \in \Theta^{\text{Lip}}_L$. The case $x_0 \in \{0, 1 \}$ is treated in the exact same fashion, but by considering functions that only contain "half of a spike" centered on $x_0$. Furthermore, let us note $g$ the function that is constant to $1$ on $[0, 1]$ (we have $g \in \Theta^{\text{Lip}}_L$). 

We start by 
recalling the total variation distance between two probability distributions, and we recall some useful alternative expressions that are used in the proofs of this article.
Given $(\set{U}, \set{T})$ a set $\set{U}$ equipped with a $\sigma$-algebra $\set{T}$, and two probability measures $\Prob_1$ and $\Prob_2$ two probability distributions on $\set{U}$, and compatible with $\set{T}$, the total variation distance $\tv{\cdot}{\cdot}$ between $\Prob_1$ and $\Prob_2$ is defined as 
\begin{equation*}
    \tv{\Prob_1}{\Prob_2} \eqdef \sup_{\set{S} \in \set{T}} |\Prob_1(\set{S}) - \Prob_2(\set{S}) | \;.
\end{equation*}
Furthermore, when $\Prob_1, \Prob_2$ are dominated by a common $\sigma$-finite measure $\mu$ on $(\set{U}, \set{T})$, by noting $p_1 \eqdef \frac{d \Prob_1}{ d \mu }$ and $p_2 \eqdef \frac{d \Prob_2}{ d \mu }$, the Radon-Nikodym derivatives of $\Prob_1$ and $\Prob_2$ with respect to $\mu$, the following alternative expressions to the total variation can be useful :
\begin{equation*}
\begin{aligned}
    \tv{\Prob_1}{\Prob_2} &\eqdef \sup_{\set{S} \in \set{T}} |\Prob_1(\set{S}) - \Prob_2(\set{S}) | 
    = \Prob_1(\{ p_1 > p_2 \}) - \Prob_2(\{ p_1 > p_2 \}) \\
    &= \int_{\{ p_1 > p_2 \}} \p{p_1 - p_2} d \mu 
    = \int_{\{ p_2 \geq p_1 \}} \p{p_2 - p_1} d \mu \\
    &= \frac{1}{2} \int_{\set{U}} |p_1 - p_2| d \mu 
    = 1 - \int_{\set{U}} \min(p_1, p_2) d \mu\;.
\end{aligned}
\end{equation*}
These expressions simply come from considering the events $\{ p_1 > p_2 \}$ and $\{ p_2 \geq p_1 \}$ that form a partition of $\set{U}$, and from the relation $|a - b| = a + b - 2 \min(a, b)$ for any real numbers $a$ and $b$.

Jumping back to our original proof, when $f_{L, x_0, h} \in \Theta^{\text{Lip}}_L$, we can compute the total variation between $\Prob_{f_{L, x_0, h}}$ and  $\Prob_{g}$ the distributions of probability with densities $f_{L, x_0, h}$ and $g$ with respect to Lebesgue's measure on $[0, 1]$, 
\begin{equation}
\label{eq:jnbcdshgvdy}
    \tv{\Prob_{f_{L, x_0, h}}}{\Prob_{g}} 
    = 
    1 - \int_{[0, 1]} \min \p{f_{L, x_0, h}, g}
    \stackrel{\text{Constant part}}{\leq }
    1 - \int_{[0, 1]} 1 - Lh^2 dx = Lh^2 \;.
\end{equation}

Another important measure of discrepancy between probability distributions is the so-called Kullback-Leibler (KL) divergence. For two probability distributions $\mathbb{P}$ and $\mathbb{Q}$ such that $\mathbb{P} \ll \mathbb{Q}$ (absolute continuity),  it is defined as
\begin{equation*}
    \kl{\mathbb{P}}{\mathbb{Q}} = \int \ln \p{\frac{d \mathbb{P}}{ d \mathbb{Q}}} d\mathbb{P} \;.
\end{equation*}

Back to our problem, for $h$ in a neighborhood of $0$, we also have the following Taylor expansion on their KL divergence:
\begin{equation}
\label{eq:dsqlkvjdfhvnqgzsera}
\begin{aligned}
    \kl{\Prob_{g}}{\Prob_{f_{L, x_0, h}}} 
    &= 
    \int_{[0, 1]}\ln \p{\frac{g}{f_{L, x_0, h}}} g 
    =
    \ln\p{\frac{1}{1-Lh^2}} (1-2h) + 2 \int_0^h \ln \p{\frac{1}{1-Lh^2 + Lt} } dt \\
    &\leq
    C \p{h^3 + O(h^4)} \;,
\end{aligned}
\end{equation}
where $C$ is a positive constant depending only on $L$ the $O$ only hides constant factors.
Furthermore, $|g(x_0) - f_{L, x_0, h}(x_0)| = L |h^2-h|$ and $\{g, f_{L, x_0, h}\}$ is thus a $\frac{L }{2} |h^2-h|$ packing of $\Theta^{\text{Lip}}_L$ w.r.t the seminorm $f, g \mapsto \|f-g\| \coloneqq |f(x_0) - g(x_0)|$.
\paragraph{Recovering the usual lower-bound}
By the classical minimax reduction as hypothesis testing \Cref{jkhgqskdjhfbgkjqhwgsdkfjhgkqwsjhdgfjkhwsgdb},
\begin{equation}
\label{eq:ghbbvcdjyuhsgdbefr}
    \begin{aligned}
    \inf_{\hat{\pi}  \text{ s.t. } \mathcal{C}} &\sup_{\pi \in \Theta^{\text{Lip}}_L}
    \E_{\vect{X} \sim \Prob_{\pi}^{\otimes n}, \hat{\pi}}\p{\p{\hat{\pi}(\vect{X})(x_0) - \pi(x_0)}^2} \\
    &\geq 
    \frac{L^2 }{4} \p{h^2-h}^2
    \inf_{\hat{\pi}  \text{ s.t. } \mathcal{C}} \inf_{\Psi : \Theta^{\text{Lip}}_L \rightarrow \{0, 1\}}
    \max  \left\{ \Prob_{\vect{X} \sim \Prob_{g}^{\otimes n}, \hat{\pi}}                \p{\Psi(\hat{\pi}(\vect{X})) \neq 0},  \right.\\
        &\quad\quad\quad\quad\quad\quad\quad\quad\quad\quad\quad\quad\quad\quad\quad\quad\quad\quad\quad\quad \left. \Prob_{\vect{X} \sim \Prob_{f_{L, x_0, h}}^{\otimes n}, \hat{\pi}} \p{\Psi(\hat{\pi}(\vect{X})) \neq 1}\right\} \\
    &\stackrel{\text{\Cref{fact:lecamslemma}}}{\geq}
    \frac{L^2}{8}h^2 (1-h)^2 
    \p{1 - \tv{\Prob_{g}^{\otimes n}}{\Prob_{f_{L, x_0, h}}^{\otimes n}}}  \\
    &\stackrel{\text{Pinsker}}{\geq} \frac{L^2}{8}h^2 (1-h)^2 
    \p{1 - \sqrt{\kl{\Prob_{g}^{\otimes n}}{\Prob_{f_{L, x_0, h}}^{\otimes n}}/2}} \\
    &\stackrel{\text{Tensorization}}{=} 
    \frac{L^2}{8} h^2 (1-h)^2 
    \p{1 - \sqrt{n\kl{\Prob_{g}}{\Prob_{f_{L, x_0, h}}}/2}} \\
    &\stackrel{\eqref{eq:dsqlkvjdfhvnqgzsera}}{\geq} 
    \frac{L^2}{8}h^2 (1-h)^2 
    \p{1 - \sqrt{\frac{n}{2}\p{\frac{h^3 L^2}{3} + O(h^4)}}} \;.
    \end{aligned}
\end{equation}
The second inequality comes from the so-called Le Cam's lemma \cite{rigollet2015high} that lower-bounds the testing difficulty (without further constraints) between two distributions. The next inequality comes from the so-called Pinsker's inequality \cite{tsybakov2003introduction}, that states that for two probability distributions $\mathbb{P}$ and $\mathbb{Q}$, $\tv{\mathbb{P}}{\mathbb{Q}} \leq \sqrt{\kl{\mathbb{P}}{\mathbb{Q}}/2}$. The last inequality is the result of the so-called tensorization property of the KL divergence that states that for two probability distributions $\mathbb{P}$ and $\mathbb{Q}$, and for an integer $n \geq 1$,
$\kl{\mathbb{P}^{\otimes n}}{\mathbb{Q}^{\otimes n}} \leq n \kl{\mathbb{P}}{\mathbb{Q}}$.

When possible (i.e. when $n$ is big enough), setting $h = \p{\frac{1}{4nL^2}}^{1/3}$ leads to, for $n$ big enough (so that $1-h \geq 1/2$ 
and $|O(h^4)| \leq \frac{h^3L^2}{3}$),
\begin{equation*}
    \inf_{\hat{\pi} \text{ s.t. } \mathcal{C}} \sup_{\pi \in \Theta^{\text{Lip}}_L}
    \E_{\vect{X} \sim \Prob_{\pi}^{\otimes n}, \hat{\pi}}\p{\p{\hat{\pi}(\vect{X})(x_0) - \pi(x_0)}^2} \geq \frac{L^2}{64} \p{\frac{1}{4L^2}}^{2/3} n^{-2/3}
    \;.
\end{equation*}
This implies the first lower bound. 

\paragraph{$\epsilon$-DP overhead.}
By \Cref{eq:ghbbvcdjyuhsgdbefr} and by Le Cam's lemma for differential privacy on product distributions (\Cref{fact:lecamslemmaprivate}),
\begin{equation*}
    \begin{aligned}
    \inf_{\hat{\pi} \text{ $\epsilon$-DP}} \sup_{\pi \in \Theta^{\text{Lip}}_L}
    \E_{\vect{X} \sim \Prob_{\pi}^{\otimes n}, \hat{\pi}}\p{\p{\hat{\pi}(\vect{X})(x_0) - \pi(x_0)}^2} 
    &\geq 
    \frac{L^2}{8}h^2 (1-h)^2
    e^{-n\epsilon \tv{\Prob_{f_{L, x_0, h}}}{\Prob_{g}} } \\
    &\stackrel{\eqref{eq:jnbcdshgvdy}}{\geq} 
    \frac{L^2}{8} h^2 (1-h)^2 
    e^{-L n \epsilon h^2}\;.
    \end{aligned}
\end{equation*}
When possible (i.e. when $n \epsilon$ is big enough), setting $h = 1/\sqrt{n\epsilon}$ leads to, when $n \epsilon$ is large enough to ensure $1-h \geq 1/2$,  
\begin{equation*}
    \inf_{\hat{\pi} \text{ $\epsilon$-DP}} \sup_{\pi \in \Theta^{\text{Lip}}_L}
    \E_{\vect{X} \sim \Prob_{\pi}^{\otimes n}, \hat{\pi}}\p{\p{\hat{\pi}(\vect{X})(x_0) - \pi(x_0)}^2} \geq \frac{L^2 e^{-L}}{32} (n \epsilon)^{-1}
    \;.
\end{equation*}

\paragraph{$\rho$-zCDP overhead.}
By Le Cam's lemma for zero-concentrated differential privacy on product distributions (\Cref{fact:lecamslemmaconcentrated}) in \eqref{eq:ghbbvcdjyuhsgdbefr},
\begin{equation*}
    \begin{aligned}
    \inf_{\hat{\pi} \text{ $\epsilon$-DP}} \sup_{\pi \in \Theta^{\text{Lip}}_L}
    \E_{\vect{X} \sim \Prob_{\pi}^{\otimes n}, \hat{\pi}}\p{\p{\hat{\pi}(\vect{X})(x_0) - \pi(x_0)}^2} 
    &\geq 
    \frac{L^2}{8} h^2 (1-h)^2
    \p{1 - n \sqrt{\rho/2}\tv{\Prob_{f_{L, x_0, h}}}{\Prob_{g}}}  \\
    &\stackrel{\eqref{eq:jnbcdshgvdy}}{\geq} 
    \frac{L^2}{8}h^2 (1-h)^2 
    \p{1 - n \sqrt{\rho/2}Lh^2} \;.
    \end{aligned}
\end{equation*}
When possible (i.e. when $n \sqrt{\rho}$ is large enough), setting $h = \p{\frac{1}{\sqrt{2} L n \sqrt{\rho}}}^{1/2}$ leads to, when $n \sqrt{\rho}$ is large enough (so that $1-h \geq 1/2$ 
),
\begin{equation*}
    \inf_{\hat{\pi} \text{ $\rho$-zCDP}} \sup_{\pi \in \Theta^{\text{Lip}}_L}
    \E_{\vect{X} \sim \Prob_{\pi}^{\otimes n}, \hat{\pi}}\p{\p{\hat{\pi}(\vect{X})(x_0) - \pi(x_0)}^2} \geq \frac{L}{64} (n \sqrt{\rho})^{-1}
    \;.
\end{equation*}

\section{Proof of \Cref{th:lidp}}
\label{proofofth:lidp}

    Let $m \in \N \setminus \{0\}$ that will be fixed later in the proof. 
    As explained in the sketch of the proof, we build a packing consisting of functions that are parametrized by a vector $\omega \in \{ 0, 1\}^m$. After controlling quantities such as their pairwise TV distances, and their KL divergences to the uniform distribution, we leverage Fano-type inequalities in order to obtain lower-bounds.
    \paragraph{Packing construction.} 
    For any $\omega \in \{ 0, 1 \}^m$ different from $0$ and any $h>0$, we define the function $g_{L, \omega, h}$ as 
\begin{equation}
\label{eq:deftrianglemultiplefunction}
    g_{L, \omega, h} 
    \eqdef\frac{1}{\| \omega \|_1} \sum_{i=1}^m \omega_i f_{\| \omega \|_1 L, \frac{i}{m+1}, h} \;,
\end{equation}
where the functions $f$ are defined in \eqref{eq:deftrianglefunction}.
Note that $g_{L, \omega, h}$ is $L$-Lipschitz and that as soon as $h \leq h_m \eqdef \frac{1}{2(m+1)}$ it is also a valid density so that  $g_{L, \omega, h} \in \Theta^{\text{Lip}}_L$. Notice that the function $g_{L, \omega, h}$ is constant to $1-\| \omega \|_1 Lh^2$ everywhere except on each interval $\left[\frac{i}{m+1}-h, \frac{i}{m+1}+h\right]$ with $i$ such that $\omega_i \neq 0$, on which it deviates by a triangle of slopes $+L$ and $-L$. 

By denoting by $K$ the triangle kernel such that $K(t) = \int_{- \infty }^t L  \Ind_{[-h, 0]}(t') - L  \Ind_{(0, h]}(t') dt'$, it might be easier to visualize $g_{L, \omega, h}$ as
\begin{equation}
\label{nddgserrzw}
    \forall t \in [0, 1], \quad
    g_{L, \omega, h}(t) = 1 - \| \omega \|_1 \int K + \sum_{i=1}^m \omega_i K\p{t - \frac{i}{m+1}} \;,
\end{equation}
where $\int K = Lh^2$.

For $\omega, \omega' \in \{0, 1\}^m $ and for $h$ small enough (i.e. $h \leq h_m$), we can bound the total variation between $\Prob_{g_{L, \omega, h}}$ and $\Prob_{g_{L, \omega', h}}$ as 
\begin{align}
    \tv{\Prob_{g_{L, \omega, h}}}{\Prob_{g_{L, \omega', h}}} 
    &=
    \frac{1}{2} \int \left| g_{L, \omega, h} - g_{L, \omega', h} \right| \notag\\
    &\stackrel{\eqref{nddgserrzw}}{=}
    \frac{1}{2} \int \left| \| (\omega' \|_1 - \| \omega \|_1) \int K + \sum_{i=1}^m (\omega_i' - \omega_i) K\p{\cdot - \frac{i}{m+1}}\right| \notag\\
    &\leq
    \frac{1}{2} \int \left| \| \omega' \|_1 - \| \omega \|_1\right| \int K + \sum_{i=1}^m \left|\omega_i' - \omega_i\right| K\p{\cdot - \frac{i}{m+1}} \notag \\
    &=
    \frac{1}{2} \bigg( \left| \| \omega' \|_1 - \| \omega \|_1\right|  + \ham{\omega}{\omega'}\bigg) \int K \label{eq:tvsawham}\\
    &\leq mLh^2\;.\label{eq:tvsaw}\
\end{align}

The KL divergence between $\Prob_{g_{L, \omega, h}}$ and $\Prob_g$,
with $g$ the density constant equal to $1$ on $[0, 1]$, satisfies
\begin{equation}
\label{eq:klsaw}
    \begin{aligned}
    \kl{\Prob_{g_{L, \omega, h}}}{\Prob_g}
    &= 
    \int_{[0, 1]} \ln\p{g_{L, \omega, h}} g_{L, \omega, h} \\
    &=
    \ln \p{1 - \|\omega\|_1Lh^2} \p{1 - \|\omega\|_1Lh^2} \p{1 - \|\omega\|_1 2h } \\
    &\quad\quad\quad\quad+
    2 \|\omega\|_1 \int_{0}^h \ln \p{1 - \|\omega\|_1 Lh^2 + Lt} \p{1 - \|\omega\|_1 Lh^2 + Lt} dt \\
    &\stackrel{\ln (1 + \cdot) \leq \cdot}{\leq }
   \p{- \|\omega\|_1Lh^2} \p{1 -\|\omega\|_1 Lh^2} \p{1 - \|\omega\|_1 2h } \\
   &\quad\quad\quad\quad+
    2 \|\omega\|_1 \int_{0}^h \p{ - \|\omega\|_1 Lh^2 + Lt} \p{1 - \|\omega\|_1 Lh^2 + Lt} dt \\
    &\stackrel{\text{Calculus}}{=} \frac{L^2}{3} \|\omega\|_1 h^3 (2 - 3 \|\omega\|_1 h) \;.
    \end{aligned} 
\end{equation}

Finally, we lower bound the squared $L^2$ distance between $g_{L, \omega, h}$ and $g_{L, \omega', h}$:

\begin{equation}
\label{eq:kjuihqabzsdkjhfbqsd}
\begin{aligned}
    \int_{[0, 1]} &\p{g_{L, \omega, h} - g_{L, \omega', h}}^2 \\
    &\geq 
    \sum_{i=1}^m \Ind_{\omega_i \neq \omega_i'} \int_{\frac{i}{m+1} - h}^{\frac{i}{m+1} + h}
    \p{\p{\| \omega' \|_1 - \| \omega \|_1} \int K +(\omega_i-\omega'_i)  K\p{t-\frac{i}{m+1} }}^2 dt\\
    &\geq
    \sum_{i=1}^m \Ind_{\omega_i \neq \omega_i'} \int_{\frac{i}{m+1} - h}^{\frac{i}{m+1} + h}
    \left\{\p{K\p{t-\frac{i}{m+1}}}^2 \right.\\
    &\quad\quad\quad\quad\quad\quad\quad\quad\quad\quad \left.- 2 K\p{t-\frac{i}{m+1}} \left|\| \omega \|_1 - \| \omega' \|_1\right| \int K \right\} dt\\ 
    &\geq
    \ham{\omega}{\omega '} \p{  \int K^2 - 2m \p{\int K}^2} \\
    &\geq
    2 \ham{\omega}{\omega '} L^2 \p{  \frac{h^3}{3} - m h^4} \\
    &= \frac{2 \ham{\omega}{\omega'} L^2 h^3\p{1- 3 mh}}{3}\;.
\end{aligned}
\end{equation}

By the Varshamov-Gilbert theorem \citep[Lemma 2.7]{tsybakov2003introduction}, as long as $m \geq 8$, there exist $M \in \N$ and $\omega^{(0)}, \dots, \omega^{(M)} \in \{ 0, 1 \}^m$ such that $M \geq 2^{m/8}$, $\omega^{(0)} = \{0\}^m$ and $i \neq j \implies \ham{\omega^{(i)}}{\omega^{(j)}} \geq m/8$. According to \eqref{eq:kjuihqabzsdkjhfbqsd}, the family $\p{g_{L, \omega^{(i)}, h}}_{i=1, \dots, M}$ is then 
an $\Omega \coloneqq \frac{1}{2} \sqrt{\frac{mL^2\p{h^3 - 3 mh^4}}{12}}$ packing of $\Theta^{\text{Lip}}_L$ for the $L^2$ distance.

\paragraph{Recovering the usual lower-bound.}

By \Cref{jkhgqskdjhfbgkjqhwgsdkfjhgkqwsjhdgfjkhwsgdb} with $\Phi(\cdot) \coloneqq (\cdot)^2$ and $\|\cdot\|$ the $L^2$ norm,
\begin{equation}
\label{eq:dqslijuhfvsjkdcv}
    \begin{aligned}
    \inf_{\hat{\pi}  \text{ s.t. } \mathcal{C}} &\sup_{\pi \in \Theta^{\text{Lip}}_L}
    \E_{\vect{X} \sim \Prob_{\pi}^{\otimes n}, \hat{\pi}}\p{ \int_{[0, 1]}\p{\hat{\pi}(\vect{X})- \pi}^2} \\
    &\geq 
    \frac{mL^2 h^3\p{1 - 3 mh}}{48}
    \inf_{\hat{\pi}  \text{ s.t. } \mathcal{C}} \inf_{\Psi : \Theta^{\text{Lip}}_L \rightarrow \{0, 1\}}
    \max_{i = 1, \dots, M}  \Prob_{\vect{X} \sim \Prob_{g_{L, \omega^{(i)}, h}}^{\otimes n}, \hat{\pi}}                \p{\Psi(\hat{\pi}(\vect{X})) \neq i}  \\
    &\stackrel{\text{\Cref{fact:fanoslemma}}}{\geq}
    \frac{mL^2h^3\p{1 - 3 mh}}{48}
    \p{1 - \frac{1 + \frac{1}{M} \sum_{1\leq i\leq M} \kl{\Prob_{g_{L, \omega^{(i)}, h}}^{\otimes n}}{\Prob_{g}^{\otimes n}}}{\ln (M)}} \\
    &\stackrel{\text{Tensorization}}{=}
    \frac{mL^2 h^3\p{1 - 3 mh}}{48}
    \p{1 - \frac{1 + \frac{n}{M} \sum_{1\leq i\leq M} \kl{\Prob_{g_{L, \omega^{(i)}, h}}}{\Prob_{g}}}{\ln (M)}} \\
    &\stackrel{\eqref{eq:klsaw} \& \|\omega\|_1\leq m, M \geq 2^{m/8}}{\geq}
    \frac{mL^2 h^3\p{1 - 3 mh}}{48}
    \p{1 - \frac{1 +  \frac{L^2}{3}n m h^3 (2 - 3m h)}{\ln(2) m/8}} \;.
    \end{aligned}
\end{equation}
So, by choosing $m = \ceil{n^{
1/3}}$ and $h = \frac{c}{m}$ where c is a positive constant small enough we get, for $n$ big enough, 
\begin{equation*}
    \inf_{\hat{\pi} \text{ $\epsilon$-DP}} \sup_{\pi \in \Theta^{\text{Lip}}_L}
    \E_{\vect{X} \sim \Prob_{\pi}^{\otimes n}, \hat{\pi}}\p{ \int_{[0, 1]}\p{\hat{\pi}(\vect{X})- \pi}^2} \geq C^{-1} (n)^{-2/3} \;,
\end{equation*}
where $C$ is a positive constant depending only on $L$.

\paragraph{$\epsilon$-DP overhead.}

By the same reduction and Fano's lemma for differential privacy on product distributions (\Cref{fact:fanoslemmaprivate}), we get for any $h \leq h_m$,
\begin{equation*}
    \begin{aligned}
    \inf_{\hat{\pi} \text{ $\epsilon$-DP}} \sup_{\pi \in \Theta^{\text{Lip}}_L}
    &\E_{\vect{X} \sim \Prob_{\pi}^{\otimes n}, \hat{\pi}}\p{ \int_{[0, 1]}\p{\hat{\pi}(\vect{X})- \pi}^2} \\
    &\geq 
    \frac{mL^2 h^3\p{1 - 3 mh}}{48}
    \p{1 - \frac{1 + \frac{n \epsilon}{M^2} 2 \sum_{1\leq i, j \leq M} \tv{\Prob_{g_{L, \omega^{(i)}, h}}}{\Prob_{g_{L, \omega^{(j)}, h}}}}{\ln (M)}} \\
    &\stackrel{\eqref{eq:tvsaw} \& M\geq 2^{m/8}}{\geq }
    \frac{mL^2h^3  \p{1- 3 mh}}{48}
    \p{1 - \frac{1 + 2 n \epsilon m L h^2}{\ln(2) m/8}} \;.
    \end{aligned}
\end{equation*}
So, by choosing $m = \ceil{\sqrt{n \epsilon}}$ and $h = \frac{c}{m}$  where c is small enough a positive constant (depending only on $L$), we get, 
as soon as $\min(n,n\epsilon)$ is big enough,
\begin{equation*}
    \inf_{\hat{\pi} \text{ $\epsilon$-DP}} \sup_{\pi \in \Theta^{\text{Lip}}_L}
    \E_{\vect{X} \sim \Prob_{\pi}^{\otimes n}, \hat{\pi}}\p{ \int_{[0, 1]}\p{\hat{\pi}(\vect{X})- \pi}^2} \geq C'^{-1} (n\epsilon)^{-1} \;,
\end{equation*}
where $C'$ is a positive constant depending only on $L$.

\paragraph{$\rho$-zCDP overhead.}For $\rho$-zCDP, we present the proof using both Fano's lemma and Assouad's method. We will see that Assouad gives better results

\paragraph{Fano version.}
By again the same reduction and Fano's lemma for zero-concentrated differential privacy (\Cref{fact:fanoslemmaconcentrated}), denoting $t_{i, j} \eqdef \tv{\Prob_{g_{L, \omega^{(i)}, h}}}{\Prob_{g_{L, \omega^{(j)}, h}}}$, we get for any $h \leq h_m$,
\begin{equation*}
    \begin{aligned}
    \inf_{\hat{\pi} \text{ $\rho$-zCDP}} \sup_{\pi \in \Theta^{\text{Lip}}_L}
    &\E_{\vect{X} \sim \Prob_{\pi}^{\otimes n}, \hat{\pi}}\p{ \int_{[0, 1]}\p{\hat{\pi}(\vect{X})- \pi}^2} \\
    &\geq 
    \frac{mL^2h^3\p{1 - 3 mh}}{48}
    \p{1 - \frac{1 + \frac{n^2 \rho}{M^2} 4 \sum_{1\leq i, j \leq M} \p{\frac{1}{n} t_{i, j} + t_{i, j}^2}}{\ln (M)}}  \\
    &\stackrel{\eqref{eq:tvsaw}}{\geq }
    \frac{mL^2h^3\p{1 - 3 mh}}{48}
    \p{1 - \frac{1 + n^2 \rho 4 \p{\frac{mLh^2}{n} + m^2 L^2 h^4}}{\ln (2) m / 8}} 
     \;.
    \end{aligned}
\end{equation*}
So, by choosing $m = \ceil{\p{n \sqrt{\rho}}^{
\frac{2}{3}}}$ and $h = \frac{c}{m} $ for $c$ small enough (depending only on $L$), if $\frac{n}{\rho}$ is big enough, we get that 
\begin{equation*}
    \inf_{\hat{\pi} \text{ s.t. } \text{$\rho$-zCDP}} \sup_{\pi \in \Theta^{\text{Lip}}_L}
    \E_{\vect{X} \sim \Prob_{\pi}^{\otimes n}, \hat{\pi}}\p{ \int_{[0, 1]}\p{\hat{\pi}(\vect{X})- \pi}^2} \geq C''^{-1} (n \sqrt{\rho})^{-4/3}
\end{equation*}
where $C''$ is a positive constant depending only on $L$.

\paragraph{Assouad version.}

From \Cref{eq:kjuihqabzsdkjhfbqsd}, we can see that when $h \eqdef \frac{c}{m}$ for a positive $c$ that is small enough, the condition expressed in \Cref{eq:Assouadcondition} is satisfied for $\tau = \Omega(h^{3})$. To apply~\eqref{eq:AssouadMixtures}, the only missing ingredient is to bound the testing difficulties between the mixtures on the hypercube.

In the sequel, $\Prob_{\omega}$ is used as a short for $\Prob_{g_{L, \omega, h}}$. Let $\omega_1, \dots, \omega_{i-1}, \omega_{i+1} \dots, \omega_{m} \in \{0, 1 \}$. We need to bound the total variation between ${\Prob_{(\omega_1, \dots, \omega_{i-1}, 0, \omega_{i+1} \dots, \omega_{m})}}$ and ${\Prob_{(\omega_1, \dots, \omega_{i-1}, 1, \omega_{i+1} \dots, \omega_{m})}}$.

\begin{equation*}
\begin{aligned}
    &\tv{\Prob_{(\omega_1, \dots, \omega_{i-1}, 0, \omega_{i+1} \dots, \omega_{m})}}{\Prob_{(\omega_1, \dots, \omega_{i-1}, 1, \omega_{i+1} \dots, \omega_{m})}} \\
    &= \frac{1}{2} \int \Bigg| g_{L,  (\omega_1, \dots, \omega_{i-1}, 0, \omega_{i+1} \dots, \omega_{m}), h} -  g_{L, (\omega_1, \dots, \omega_{i-1}, 1, \omega_{i+1} \dots, \omega_{m}), h}  \Bigg| \\
    &\stackrel{\eqref{eq:tvsawham}}{\leq} \frac{1}{2} 2 L h^2  \\
    &= O \p{h^{2}}\;.
\end{aligned}  
\end{equation*}
Here and in the sequel, the asymptotic comparators only hide constants and terms that depend on $L$.
All in all, by using \Cref{lemmalecamassouad}, and by \Cref{eq:Assouadprivate}, since $\tau = \Omega(h^3)$ we obtain
\begin{equation}
    \inf_{\hat{\pi} \text{ $\rho$-zCDP}} \sup_{\pi \in  \Theta^{\text{PSob}}_{L, \beta}}
    \E_{\vect{X} \sim \Prob_{\pi}^{\otimes n}, \hat{\pi}}\p{ \int_{[0, 1]}\p{\hat{\pi}(\vect{X})- \pi}^2} = \Omega \p{m h^{3}} \p{1 - n \sqrt{\rho} O \p{h^{2}}}.
\end{equation}
Setting $h \approx \p{n \sqrt{\rho}}^{\frac{-1}{2}}$ concludes the proof.

\section{proof of \Cref{lemma:projectionutility}}
\label{proofoflemma:projectionutility}

Let $\pi \in \Theta^{\text{PSob}}_{L, \beta}$.
    We have, 
    \begin{equation*}
    \begin{aligned}
         \E \p{ \int_{[0, 1]} \p{\hat{\pi}^{\text{proj}}(\vect{X}) - \pi}^2}
         & \stackrel{\text{Parseval}}{=}
          \E \p{ \sum_{i=1}^N \p{\hat{\theta}_i -\theta_i + \frac{1}{n} Z_i}^2 + \sum_{i=N+1}^{+\infty} \theta_i^2 } \\
          & \stackrel{\text{}}{=}
           \sum_{i=1}^N \E \p{\p{\hat{\theta}_i -\theta_i + \frac{1}{n} Z_i}^2} + \sum_{i=N+1}^{+\infty} \theta_i^2 \;.
    \end{aligned}
    \end{equation*}

Furthermore, for any $i$, since $Z$ is centered
\begin{equation*}
    \begin{aligned}
        \E \p{\hat{\theta}_i} 
        &= 
        \E \p{\frac{1}{n} \sum_{j=1}^n \phi_i(X_j)}
        =
        \frac{1}{n} \sum_{j=1}^n \E \p{\phi_i(X_j)} 
        \stackrel{X_j\ \mathrm{i.i.d.}}{=} \E_{\vect{X} \sim \Prob_{\pi}^{\otimes n}} \phi_i(X_1) = \int \pi \phi_i = \theta_i
    \end{aligned}
\end{equation*}
Hence, for any $i$, since $Z_i$ is independent from the dataset
\begin{equation*}
    \begin{aligned}
        \E \p{\p{\hat{\theta}_i -\theta_i + \frac{1}{n} Z_i}^2}
        &= 
        \V \p{\hat{\theta}_i} + \frac{1}{n^2} \V \p{Z_i} 
        \stackrel{\text{Independence of $X_j$}}{=}
        \frac{1}{n^2} \sum_{j=1}^n \V \p{\phi_i \p{X_j}} + \frac{1}{n^2} \V \p{Z_i} \\
        &\stackrel{|\phi_i| \leq \sqrt{2}}{\leq}
        \frac{
        2}{n}+ \frac{1}{n^2} \V \p{Z} \;.
    \end{aligned}
\end{equation*}
Finally, with $a_j \coloneqq j-1$, \Cref{lemma:sobolevellipsoid} allows bounding $\sum_{i=m+1}^{+\infty} \theta_i^2$ as
\begin{equation*}
    \sum_{i=N+1}^{+\infty} \theta_i^2 \leq \frac{1}{N^{2 \beta}} \sum_{i=N+1}^{+\infty} a_i^{2\beta} \theta_i^2 \leq \frac{1}{N^{2 \beta}} \sum_{i=1}^{+\infty} a_i^{2\beta} \theta_i^2 \stackrel{\Cref{lemma:sobolevellipsoid}}{\leq} \frac{1}{N^{2 \beta}} \frac{L^2}{\B{\uppi}^{2 \beta}} \;.
\end{equation*}
This yields the conclusion with $C_{L,\beta} \coloneqq \max(2, L^2/\B{\uppi}^{2\beta})$.

\section{Proof of  \Cref{th:sidp}}
\label{proofofth:sidp}

Let us consider the following well-known function :
\begin{equation*}
    \forall x \in \R, \quad K_0(x) \eqdef e^{- \frac{1}{1 - x^2}}\Ind_{(-1, 1)}(x) \;.
\end{equation*}
We can notice that for any $\beta>0$ there exists $\nu > 0$ such that the kernel $K(x) \eqdef \nu K_0(2x)$ satisfies $K \in \mathcal{C}^{\infty}(\R, [0, +\infty))$, $\int \p{K^{(\beta)}}^2 \leq 1$  
and $K(x)>0$ iff $x \in (-1/2, 1/2)$. Furthermore, for any $i \in \N$, $K^{(i)}(x) = 0$ for every $x \in (-\infty, -1/2] \cup [1/2, +\infty)$.

\paragraph{Packing construction.}
Let $m \in \N \setminus \{0\}$ that will be fixed later. For any $h > 0$, and $\omega \in \{0, 1\}^m$, we define the function $g_{L, \beta, \omega, h}$ as,
 \begin{equation}
 \label{eq:defgkernel}
     \forall x \in [0, 1],
     \quad 
     g_{L, \beta, \omega, h}(x) \coloneqq 1 - \| \omega \|_1 L h^{\beta+1} \int K + L h^{\beta} \sum_{i=1}^m \omega_i K\p{\frac{x - \frac{i}{m+1}}{h}} \;.
\end{equation}
Note that when $h < \frac{1}{m+1}$ we have $\int_0^1 g_{L,\beta,\omega,h} =1$; when $1 - m L h^{\beta+1} \int K \geq 0$, we have $g_{L,\beta,\omega,h} \geq 0$; and when both hold and when $m h \int \p{K^{(\beta)}}^2 \leq 1$, we have $g_{L, \beta, \omega, h} \in \Theta^{\text{PSob}}_{L, \beta}$ (see \Cref{eq:defpsob}). 
Indeed, under these hypotheses, the periodicity conditions are immediate (the function is constant on neighborhoods of $0$ and $1$, with the same value). The energy of the $\beta$th derivative can be bounded as
\begin{equation*}
\begin{aligned}
    \int \p{g_{L, \beta, \omega, h}^{(\beta)}}^2
    &= \int \p{L h^{\beta} \sum_{i=1}^m \omega_i \p{x \mapsto K\p{ \frac{x - \frac{i}{m+1}}{h}}}^{(\beta)}}^2 \\
    &= \int \p{L\sum_{i=1}^m \omega_i K^{(\beta)}\p{ \frac{\cdot - \frac{i}{m+1}}{h}}}^2 \\
    &\stackrel{\text{disjoint support}}{=} L^2 \sum_{i=1}^m \omega_i\int  \p{K^{(\beta)}\p{ \frac{\cdot - \frac{i}{m+1}}{h}}}^2 \\
    &\leq L^2 m h \int \p{K^{(\beta)}}^2 \leq L^2 \;.
\end{aligned}
\end{equation*}

In the sequel of this proof, this hypothesis will always be satisfied asymptotically for all the values of $m$ and $h$ that will be considered. From now on, we may consider it valid.

Given $h>0$ and $\omega, \omega' \in \{0, 1\}^m$, when $g_{L, \beta, \omega, h}, g_{L, \beta, \omega', h} \in \Theta^{\text{PSob}}_{L, \beta}$, we can bound the total variation between $\Prob_{g_{L, \beta, \omega, h}}$ and  $\Prob_{g_{L, \beta, \omega', h}}$ as,

\begin{align}
    \tv{\Prob_{g_{L, \beta, \omega, h}}}{\Prob_{g_{L, \beta, \omega', h}}} 
    &=
    \frac{1}{2} \int \left| g_{L, \beta, \omega, h} - g_{L, \beta, \omega', h} \right| \notag \\
    &=
    \frac{1}{2} \int \left| \| (\omega' \|_1 - \| \omega \|_1) L h^{\beta + 1} \int K + \sum_{i=1}^m (\omega_i' - \omega_i) L h^{\beta } K\p{\frac{\cdot - \frac{1}{m+1}}{h}}\right| \notag\\
    &\leq
   \frac{1}{2} \int \| |\omega' \|_1 - \| \omega \|_1| L h^{\beta + 1} \int K + \sum_{i=1}^m |\omega_i' - \omega_i| L h^{\beta } K\p{\frac{\cdot - \frac{1}{m+1}}{h}} \notag\\
    &=
    \frac{1}{2} \bigg( \left| \| \omega' \|_1 - \| \omega \|_1\right|  + \ham{\omega}{\omega'}\bigg) L h^{\beta+1} \int K \label{eq:tvkernelsawham}\\
    &\leq mL h^{\beta+1} \label{eq:tvkernelsaw}\;.
\end{align}

The KL divergence between $\Prob_{g_{L, \beta, \omega, h}}$ and $\Prob_{g}$, the uniform distribution on $[0, 1]$, is bounded as
\begin{equation}
    \label{eq:klkernelsaw}
    \begin{aligned}
        &\kl{\Prob_{g_{L, \beta, \omega, h}}}{\Prob_{g}}
        =
        \int_{[0, 1]} \ln \p{g_{L, \beta, \omega, h}} g_{L, \beta, \omega, h} \\
        &=
        \int_{[0, 1] \setminus \cup_{i : \omega_i \neq 0} \left[ \frac{i}{m+1} - \frac{h}{2}, \frac{i}{m+1} + \frac{h}{2} \right]} 
        \ln \p{1 - \| \omega \|_1 L h^{\beta+1} \int K} \p{1 - \| \omega \|_1 L h^{\beta+1} \int K} dt \\ 
        &\quad\quad\quad\quad+ 
        \| \omega \|_1 \int_{-\frac{h}{2}}^{\frac{h}{2}} \ln \p{1 - \| \omega \|_1 L h^{\beta+1} \int K + L h^{\beta} K \p{\frac{t}{h}}}  \\
        &\quad\quad\quad\quad\quad\quad\quad\quad\quad\quad\quad\quad
        \p{1 - \| \omega \|_1 L h^{\beta+1} \int K + L h^{\beta} K \p{\frac{t}{h}}}dt \\
        &\stackrel{\ln (1 + \cdot) \leq \cdot}{\leq }
        \p{1 - \| \omega \|_1 h}
        \p{- \| \omega \|_1 L h^{\beta+1} \int K} \p{1 - \| \omega \|_1 L h^{\beta+1} \int K}  \\ 
        &\quad\quad\quad\quad+ 
        \| \omega \|_1 \int_{-\frac{h}{2}}^{\frac{h}{2}} \p{- \| \omega \|_1 L h^{\beta+1} \int K + L h^{\beta} K \p{\frac{t}{h}}}  \\
        &\quad\quad\quad\quad\quad\quad\quad\quad\quad\quad\quad\quad
        \p{1 - \| \omega \|_1 L h^{\beta+1} \int K + L h^{\beta} K \p{\frac{t}{h}}}dt\\
        &\stackrel{\text{Calculus}}{=}
        \| \omega \|_1 L^2 h^{2\beta + 1} \int K^2 - \| \omega \|_1^2 L^2 h^{2\beta + 2} \int K \\
        &\stackrel{}{\leq}
        \| \omega \|_1 L^2 h^{2\beta + 1} \int K^2
        \leq m L^2 h^{2\beta + 1} \int K^2 \;.
    \end{aligned}
\end{equation}
Finally, the squared $L^2$ distance between $g_{L, \beta, \omega, h}$ and $g_{L, \beta, \omega', h}$ can be lower bounded as,
\begin{equation}
\label{eq:distkernelsaw}
\begin{aligned}
    \int_{[0, 1]} &\p{g_{L, \beta, \omega, h} - g_{L, \beta, \omega', h}}^2 \\
    &\geq
    \sum_{i=1}^m \Ind_{\omega_i \neq \omega_i'} \int_{\frac{i}{m+1} - \frac{h}{2}}^{\frac{i}{m+1} + \frac{h}{2}}
    \p{L h^{\beta + 1} \p{\| \omega' \|_1 - \| \omega \|_1} \int K +(\omega_i-\omega'_i) L h^{\beta } K\p{\frac{t-\frac{i}{m+1} }{h}}}^2 dt\\
    &\geq
    \sum_{i=1}^m \Ind_{\omega_i \neq \omega_i'} \int_{\frac{i}{m+1} - \frac{h}{2}}^{\frac{i}{m+1} + \frac{h}{2}}
    \left\{\p{L h^{\beta } K\p{\frac{t-\frac{i}{m+1}}{h}}}^2 \right.\\
    &\quad\quad\quad\quad\quad\quad\quad\quad\quad\quad \left.- 2 L h^{\beta } K\p{\frac{t-\frac{i}{m+1}}{h}} L h^{\beta + 1} \left|\| \omega \|_1 - \| \omega' \|_1\right| \int K \right\} dt\\ 
    &\geq
    \ham{\omega}{\omega '} L^2h^{2 \beta + 1}\p{  \int K^2 - 2m h \p{\int K}^2} \;.
\end{aligned}
\end{equation}
By the Varshamov-Gilbert theorem \citep[Lemma 2.7]{tsybakov2003introduction}, as long as $m \geq 8$, there exist $M \in \N$ and $\omega^{(0)}, \dots, \omega^{(M)} \in \{ 0, 1 \}^m$ such that $M \geq 2^{m/8}$, $\omega^{(0)} = \{0\}^m$ and $i \neq j \implies \ham{\omega^{(i)}}{\omega^{(j)}} \geq m/8$. According to \eqref{eq:distkernelsaw}, the family $\p{g_{L, \beta, \omega^{(i)}, h}}_{i=1, \dots, M}$ is then a $\Omega = \frac{1}{2} \sqrt{\frac{m}{8}  L^2 h^{2 \beta + 1} \p{ \int K^2 - 2m h \p{\int K}^2}}$ packing of $\Theta^{\text{PSob}}_{L, \beta}$ for the $L^2$ distance.

\paragraph{Recovering the usual lower-bound.}

By \Cref{jkhgqskdjhfbgkjqhwgsdkfjhgkqwsjhdgfjkhwsgdb} with $\Phi(\cdot) \coloneqq (\cdot)^2$ and $\|\cdot\|$ the $L^2$ norm,
\begin{equation}
\label{eq:dqslijuhfvsjkdcv}
    \begin{aligned}
    \inf_{\hat{\pi}  \text{ s.t. } \mathcal{C}} &\sup_{\pi \in \Theta^{\text{PSob}}_{L, \beta}}
    \E_{\vect{X} \sim \Prob_{\pi}^{\otimes n}, \hat{\pi}} \int_{[0, 1]}\p{\hat{\pi}(\vect{X})- \pi}^2 \\
    &\geq 
    \frac{L^2}{32}  m h^{2 \beta + 1} \p{  \int K^2 - 2m h \p{\int K}^2} \\
    &\quad\quad\quad\quad\inf_{\hat{\pi} \text{ s.t. } \mathcal{C}} \inf_{\Psi :\Theta^{\text{PSob}}_{L, \beta} \rightarrow \{0, 1\}}
    \max_{i = 1, \dots, M}   \Prob_{\vect{X} \sim \Prob_{g_{L, \beta, \omega^{(i)}, h}}^{\otimes n}, \hat{\pi}}                \p{\Psi(\hat{\pi}(\vect{X})) \neq i} \\
    &\stackrel{\text{\Cref{fact:fanoslemma}}}{\geq}
    \frac{L^2}{32} m h^{2 \beta + 1}  \p{ \int K^2 - 2m h \p{\int K}^2} \\
    &\quad\quad\quad\quad\quad\quad\quad\quad\quad\quad\quad\quad\quad\quad\quad\quad\quad\quad
    \p{1 - \frac{1 + \frac{1}{M} \sum_{1\leq i\leq M} \kl{\Prob_{g_{L, \beta, \omega^{(i)}, h}}^{\otimes n}}{\Prob_{g}^{\otimes n}}}{\ln (M)}} \\
    &\stackrel{\text{Tensorization}}{=}
   \frac{L^2}{32} mh^{2 \beta + 1} \p{  \int K^2 - 2m h \p{\int K}^2} \\
    &\quad\quad\quad\quad\quad\quad\quad\quad\quad\quad\quad\quad\quad\quad\quad\quad\quad\quad\quad\quad
    \p{1 - \frac{1 + \frac{n}{M} \sum_{1\leq i\leq M} \kl{\Prob_{g_{L, \beta, \omega^{(i)}, h}}}{\Prob_{g}}}{\ln (M)}} \\ 
    &\stackrel{\eqref{eq:klkernelsaw}\& M \geq 2^{m/8}}{\geq}
    \frac{L^2}{32} m h^{2 \beta + 1}  \p{ \int K^2 - 2m h \p{\int K}^2}
    \p{1 - \frac{1 + n m L^2 h^{2\beta + 1} \int K^2}{\ln (2) m/8}}\;.
    \end{aligned}
\end{equation}
Finally, setting $m = \ceil{ n^{\frac{1}{2 \beta + 1}}}$ and $h = \frac{c}{m}$ for $c$ small enough gives that, for $n$ big enough, 
\begin{equation*}
    \inf_{\hat{\pi} \text{ $\epsilon$-DP}} \sup_{\pi \in \Theta^{\text{PSob}}_{L, \beta}}
    \E_{\vect{X} \sim \Prob_{\pi}^{\otimes n}, \hat{\pi}} \int_{[0, 1]}\p{\hat{\pi}(\vect{X})- \pi}^2 \geq C^{-1} n^{-\frac{2 \beta}{2 \beta + 1}} \;,
\end{equation*}
where $C$ is a positive constant depending only on $L$ and $\beta$.

\paragraph{$\epsilon$-DP overhead.}

By the same reduction and Fano's lemma for differential privacy on product distributions (\Cref{fact:fanoslemmaprivate}), we get
\begin{equation*}
    \begin{aligned}
    \inf_{\hat{\pi} \text{ $\epsilon$-DP}} \sup_{\pi \in \Theta^{\text{PSob}}_{L, \beta}}
    &\E_{\vect{X} \sim \Prob_{\pi}^{\otimes n}, \hat{\pi}} \int_{[0, 1]}\p{\hat{\pi}(\vect{X})- \pi}^2 \\
    &\geq 
     \frac{L^2}{32}  m h^{2 \beta + 1}  \p{ \int K^2 - 2m h \p{\int K}^2} \\
    &\quad\quad\quad\quad \p{1 - \frac{1 + \frac{n \epsilon}{M^2} 2 \sum_{1\leq i, j \leq M} \tv{\Prob_{g_{L, \beta, \omega^{(i)}, h}}}{\Prob_{g_{L, \beta, \omega^{(j)}, h}}}}{\ln (M)}} \\
    &\stackrel{\eqref{eq:tvkernelsaw}}{\geq }
    \frac{L^2}{32}  m h^{2 \beta + 1} \p{  \int K^2 - 2m h \p{\int K}^2}
    \p{1 - \frac{1 + 2 n \epsilon m L h^{\beta+1} \int K}{\ln(2) m/8}} \;.
    \end{aligned}
\end{equation*}
Setting $m = \ceil{\p{n \epsilon}^{\frac{1}{\beta + 1}}}$ and $h = \frac{c}{m}$ for $c$ small enough leads to, for $n\epsilon$ big enough, 
\begin{equation*}
    \begin{aligned}
    \inf_{\hat{\pi} \text{ $\epsilon$-DP}} \sup_{\pi \in \Theta^{\text{PSob}}_{L, \beta}}
    \E_{\vect{X} \sim \Prob_{\pi}^{\otimes n}, \hat{\pi}} \int_{[0, 1]}\p{\hat{\pi}(\vect{X})- \pi}^2 
    \geq 
     C'^{-1} \p{n \epsilon}^{-\frac{2 \beta}{\beta + 1}} \;,
    \end{aligned}
\end{equation*}
where $C'$ is a constant depending only on $L$ and $\beta$.

\paragraph{$\rho$-zCDP overhead.}For $\rho$-zCDP, we present the proof using both Fano's lemma and Assouad's method. We will see that Assouad gives better results.

\paragraph{Fano version.}
By again the same reduction and Fano's lemma for zero-concentrated differential privacy (\Cref{fact:fanoslemmaconcentrated}), denoting $t_{i, j} \eqdef \tv{\Prob_{g_{L, \beta, \omega^{(i)}, h}}}{\Prob_{g_{L, \beta,\omega^{(j)}, h}}}$, we get
\begin{equation*}
    \begin{aligned}
    \inf_{\hat{\pi} \text{ $\rho$-zCDP}} \sup_{\pi \in  \Theta^{\text{PSob}}_{L, \beta}}
    &\E_{\vect{X} \sim \Prob_{\pi}^{\otimes n}, \hat{\pi}} \int_{[0, 1]}\p{\hat{\pi}(\vect{X})- \pi}^2 \\
    &\geq 
     \frac{L^2}{32}  m h^{2 \beta + 1} \p{  \int K^2 - 2m h \p{\int K}^2} \\
     &\quad\quad\quad\quad\quad\quad\quad\quad\quad\quad\quad\quad\quad\quad\quad\quad
    \p{1 - \frac{1 + \frac{n^2 \rho}{M^2} 4 \sum_{1\leq i, j \leq M} \frac{1}{n} t_{i, j} + t_{i, j}^2}{\ln (M)}}  \\
    &\stackrel{\eqref{eq:tvkernelsaw}}{\geq}
    \frac{L^2}{32}  m h^{2 \beta + 1}  \p{ \int K^2 - 2m h \p{\int K}^2} \\
    &\quad\quad\quad\quad\quad\quad\quad\quad\quad\quad\quad\quad
    \p{1 - \frac{1 + 4 n^2 \rho \p{\frac{m L h^{\beta+1} \int K}{n} +\p{m L h^{\beta+1} \int K}^2}}{\ln (2) m / 8}} 
     \;.
    \end{aligned}
\end{equation*}

So, by choosing $m =  \ceil{\p{n \sqrt{\rho}}^{\frac{2}{2 \beta + 1}}}$ and $h = \frac{c}{m}$ for $c$ small enough, if 
$n \sqrt{\rho}$ and
$\frac{n}{\p{n \sqrt{\rho}}^{\frac{2 \beta}{2 \beta + 1}}} = \p{n\sqrt{\rho}}^{\frac{1}{2\beta+1}}/\sqrt{\rho}$
are big enough,
\begin{equation*}
    \begin{aligned}
    \inf_{\hat{\pi} \text{ $\epsilon$-DP}} \sup_{\pi \in \Theta^{\text{PSob}}_{L, \beta}}
    \E_{\vect{X} \sim \Prob_{\pi}^{\otimes n}, \hat{\pi}} \int_{[0, 1]}\p{\hat{\pi}(\vect{X})- \pi}^2
    \geq 
     C''^{-1} \p{n \sqrt{\rho}}^{-\frac{
    2\beta}{\beta + 1/2}} \;,
    \end{aligned}
\end{equation*}
where $C''$ is a constant depending only on $L$ and $\beta$.

\paragraph{Assouad version.}

From \Cref{eq:distkernelsaw}, we can see that when $h \eqdef \frac{c}{m}$ for a positive $c$ that is small enough, the condition expressed in \Cref{eq:Assouadcondition} is satisfied for $\tau = \Omega(h^{2 \beta + 1})$. To apply~\eqref{eq:AssouadMixtures}, the only missing ingredient is to bound the testing difficulties between the mixtures on the hypercube.

In the sequel, $\Prob_{\omega}$ is used as a short for $\Prob_{g_{L, \beta, \omega, h}}$. Let $\omega_1, \dots, \omega_{i-1}, \omega_{i+1} \dots, \omega_{m} \in \{0, 1 \}$. We need to bound the total variation between ${\Prob_{(\omega_1, \dots, \omega_{i-1}, 0, \omega_{i+1} \dots, \omega_{m})}}$ and ${\Prob_{(\omega_1, \dots, \omega_{i-1}, 1, \omega_{i+1} \dots, \omega_{m})}}$.
\begin{equation*}
\begin{aligned}
    &\tv{\Prob_{(\omega_1, \dots, \omega_{i-1}, 0, \omega_{i+1} \dots, \omega_{m})}}{\Prob_{(\omega_1, \dots, \omega_{i-1}, 1, \omega_{i+1} \dots, \omega_{m})}} \\
    &= \frac{1}{2} \int \Bigg| g_{L,  (\omega_1, \dots, \omega_{i-1}, 0, \omega_{i+1} \dots, \omega_{m}), h} -  g_{L, (\omega_1, \dots, \omega_{i-1}, 1, \omega_{i+1} \dots, \omega_{m}), h}  \Bigg| \\
     &\quad\quad\quad\quad\stackrel{\eqref{eq:tvkernelsawham}}{\leq} \frac{1}{2} 2  L h^{\beta + 1} \int K \\
    &\quad\quad\quad\quad= O \p{h^{\beta + 1}}\;.
\end{aligned}  
\end{equation*}


Here and in the sequel, the asymptotic comparators only hide constants (such as $\int K$ or $\int K^2$) and terms that depends on $L$ and $\beta$.
All in all, by using \Cref{lemmalecamassouad},  and by leveraging \Cref{eq:Assouadprivate}, with $\tau = \Omega(h^{2\beta+1})$, 
\begin{equation}
    \inf_{\hat{\pi} \text{ $\rho$-zCDP}} \sup_{\pi \in  \Theta^{\text{PSob}}_{L, \beta}}
    \E_{\vect{X} \sim \Prob_{\pi}^{\otimes n}, \hat{\pi}} \int_{[0, 1]}\p{\hat{\pi}(\vect{X})- \pi}^2 = \Omega \p{m h^{2 \beta + 1}} \p{1 - n \sqrt{\rho} O \p{h^{\beta + 1}}}.
\end{equation}
Setting $h \approx \p{n \sqrt{\rho}}^{\frac{-1}{\beta + 1}}$ and $m = c/h$ for $c$ small enough concludes the proof by yielding a lower bound $\Omega\p{\p{n\sqrt{\rho}}^{-\frac{2\beta}{2\beta+1}}}$.

\end{document}